\gdef\@runningauthor{
  Marvin Pförtner,
  Jonathan Wenger,
  Jon Cockayne,
  Philipp Hennig
}
\definecolor{TUred}{RGB}{165,30,55}
\definecolor{TUgold}{RGB}{180,160,105}
\definecolor{TUdark}{RGB}{50,65,75}
\definecolor{TUgray}{RGB}{175,179,183}
\definecolor{TUdarkblue}{RGB}{65,90,140}
\definecolor{TUblue}{RGB}{0,105,170}
\definecolor{TUlightblue}{RGB}{80,170,200}
\definecolor{TUlightgreen}{RGB}{130,185,160}
\definecolor{TUgreen}{RGB}{125,165,75}
\definecolor{TUdarkgreen}{RGB}{50,110,30}
\definecolor{TUocre}{RGB}{200,80,60}
\definecolor{TUviolet}{RGB}{175,110,150}
\definecolor{TUmauve}{RGB}{180,160,150}
\definecolor{TUbeige}{RGB}{215,180,105}
\definecolor{TUorange}{RGB}{210,150,0}
\definecolor{TUbrown}{RGB}{145,105,70}
\definecolor{MPLblue}{HTML}{1f77b4}
\definecolor{MPLorange}{HTML}{ff7f0e}
\definecolor{MPLgreen}{HTML}{2ca02c}
\definecolor{MPLred}{HTML}{d62728}
\definecolor{MPLpurple}{HTML}{9467bd}
\definecolor{SNSblue}{rgb}{0.1216, 0.4666, 0.7059}
\definecolor{SNSorange}{rgb}{1.0, 0.4980, 0.0549}
\definecolor{SNSgreen}{rgb}{0.1725, 0.6274, 0.1725}
\definecolor{SNSred}{rgb}{0.84, 0.15, 0.16}
\definecolor{SNSpurple}{rgb}{0.58, 0.40, 0.74}
\definecolor{SNSblue_shaded}{HTML}{8ebad9}
\definecolor{SNSorange_shaded}{HTML}{ffcea3}
\definecolor{SNSgreen_shaded}{HTML}{cae7ca}
\definecolor{SNSred_shaded}{HTML}{ea9293}
\tikzset{every picture/.style={node distance=2cm,shorten >= 1pt, shorten <= 1pt}}
\tikzset{>=stealth'}
\tikzstyle{graphnode} =
\tikzstyle{var}   =[graphnode,fill=white]
\tikzstyle{obs}   =[graphnode,fill=TUdarkblue,text=white]
\tikzstyle{act}   =[rectangle,draw=TUdark,text=white,minimum
\tikzstyle{fac}   =[rectangle,draw=TUdark,fill=TUgreen,minimum size=5pt]
\tikzstyle{facprior} =[rectangle,draw=TUdark,fill=TUdark,text=white,minimum size=5pt]
\tikzstyle{edge}  =[draw=TUdark,-]
\tikzstyle{prior} =[rectangle, draw=TUdark, fill=TUdark, minimum size=
\tikzstyle{dirprior} = [circle, draw=TUdark, fill=TUdark, minimum
\algrenewcommand{\algorithmicfunction}{\textbf{fn}}
\newcommand{\just}[2]{\phantom{#1}\mathllap{#2}}
\numberwithin{equation}{section}  %
\DeclarePairedDelimiter{\ps}{(}{)}
\DeclarePairedDelimiter{\brks}{[}{]}
\newcommand*{\defeq}{\coloneqq}  %
\newcommand*{\rdefeq}{\eqqcolon}  %
\DeclarePairedDelimiterX{\set}[1]\{\}{%
\newcommand*{\setsym}[1]{\ensuremath{{\mathbb{#1}}}}
\newcommand*{\R}{\setsym{R}}
\newcommand*{\inv}{^{-1}}
\newcommand*{\spacesym}[1]{{\mathbb{#1}}}  %
\newcommand*{\closure}[1]{\overline{#1}}
\newcommand*{\linspan}[1]{\operatorname{span} \left( #1 \right)}
\newcommand*{\pvec}[1]{
  \begin{pmatrix}
    #1
  \end{pmatrix}
}
\renewcommand*{\vec}[1]{{\bm{#1}}}
\newcommand*{\vb}{\vec{b}}
\newcommand*{\vc}{\vec{c}}
\newcommand*{\vd}{\vec{d}}
\newcommand*{\ve}{\vec{e}}
\newcommand*{\vm}{\vec{m}}
\newcommand*{\vr}{\vec{r}}
\newcommand*{\vs}{\vec{s}}
\newcommand*{\vt}{\vec{t}}
\newcommand*{\vu}{\vec{u}}
\newcommand*{\vv}{\vec{v}}
\newcommand*{\vw}{\vec{w}}
\newcommand*{\vx}{\vec{x}}
\newcommand*{\vy}{\vec{y}}
\newcommand*{\vz}{\vec{z}}
\newcommand*{\vbeta}{\vec{\beta}}
\newcommand*{\vmu}{\vec{\mu}}
\newcommand*{\vxi}{\vec{\xi}}
\NewDocumentCommand{\range}{s m}{%
  \operatorname{ran}
  \IfBooleanTF{#1}{\left(}{(}
  #2
  \IfBooleanTF{#1}{\right)}{)}
}
\NewDocumentCommand{\kernel}{s m}{%
  \operatorname{ker}
  \IfBooleanTF{#1}{\left(}{(}
  #2
  \IfBooleanTF{#1}{\right)}{)}
}
\newcommand*{\pmat}[1]{
  \begin{pmatrix}
    #1
  \end{pmatrix}
}
\newcommand*{\mat}[1]{{\bm{#1}}}
\newcommand*{\mA}{\mat{A}}
\newcommand*{\mB}{\mat{B}}
\newcommand*{\mC}{\mat{C}}
\newcommand*{\mG}{\mat{G}}
\newcommand*{\mH}{\mat{H}}
\newcommand*{\mI}{\mat{I}}
\newcommand*{\mK}{\mat{K}}
\newcommand*{\mM}{\mat{M}}
\newcommand*{\mN}{\mat{N}}
\newcommand*{\mP}{\mat{P}}
\newcommand*{\mQ}{\mat{Q}}
\newcommand*{\mS}{\mat{S}}
\newcommand*{\mV}{\mat{V}}
\newcommand*{\mW}{\mat{W}}
\newcommand*{\mX}{\mat{X}}
\newcommand*{\mZ}{\mat{Z}}
\newcommand*{\mLambda}{\mat{\Lambda}}
\newcommand*{\mSigma}{\mat{\Sigma}}
\DeclarePairedDelimiterXPP{\@norm}[2]{}{\lVert}{\rVert}{\ifstrempty{#2}{}{_{#2}}}{#1}
\NewDocumentCommand{\norm}{s O{} m O{}}{%
  \IfBooleanTF{#1}{%
    \@norm*{#3}{#4}%
  }{%
    \@norm[#2]{#3}{#4}%
  }%
}
\DeclarePairedDelimiter{\abs}{\lvert}{\rvert}
\newcommand*{\dualop}{'}
\newcommand*{\hilbertsp}[1][H]{\spacesym{#1}}  %
\DeclarePairedDelimiterXPP{\@inprod}[3]{}{\langle}{\rangle}{\ifstrempty{#3}{}{_{#3}}}{#1, #2}
\NewDocumentCommand{\inprod}{s O{} m m O{}}{%
  \IfBooleanTF{#1}{%
    \@inprod*{#3}{#4}{#5}%
  }{%
    \@inprod[#2]{#3}{#4}{#5}%
  }%
}
\newcommand*{\T}{^\top}
\newcommand*{\pinv}{^\dagger}
\NewDocumentCommand{\cfns}{m o}{%
  C (#1\IfValueT{#2}{, #2})
}
\NewDocumentCommand{\cdfns}{O{0} m o}{%
  C^{#1} (#2\IfValueT{#3}{, #3})
}
\NewDocumentCommand{\holdersp}{m o m}{%
  C^{#1\IfValueT{#2}{, #2}}(\closure{#3})
}
\RenewDocumentCommand{\L}{m o}{%
  \ensuremath{
    L_{#1}
    \IfNoValueF{#2}{\left( #2 \right)}
  }
}
\NewDocumentCommand{\sobolev}{o m o}{%
  \IfNoValueTF{#1}{
    H^{#2}
  }{
    W^{#1,#2}
  }
  \IfNoValueF{#3}{\left( #3 \right)}
}
\NewDocumentCommand{\sobolevtest}{o m o}{%
  \sobolev[#1]{#2}_0
  \IfNoValueF{#3}{\left( #3 \right)}
}
\newcommand*{\rkhs}[1]{\hilbertsp_{#1}}
\newcommand*{\linop}[1]{\mathcal{#1}}
\newcommand*{\linfctls}[1]{{\bm{\linop{#1}}}}
\DeclarePairedDelimiter{\@evallinop@oparg}{[}{]}
\DeclarePairedDelimiter{\@evallinop@fnarg}{(}{)}
\NewDocumentCommand{\evallinop}{m s O{} m s O{} d()}{%
  #1%
  \IfBooleanTF{#2}{%
    \@evallinop@oparg*{#4}%
  }{%
    \@evallinop@oparg[#3]{#4}%
  }%
  \IfValueT{#7}{%
    \IfBooleanTF{#5}{%
      \@evallinop@fnarg*{#7}%
    }{%
      \@evallinop@fnarg[#6]{#7}%
    }%
  }%
}
\NewDocumentCommand{\linopat}{m s O{} m d()}{%
  \IfBooleanTF{#2}{%
    \evallinop{\linop{#1}}*{#4}(#5)%
  }{%
    \evallinop{\linop{#1}}[#3]{#4}(#5)%
  }%
}
\NewDocumentCommand{\linfctlsat}{m s O{} m d()}{%
  \IfBooleanTF{#2}{%
    \evallinop{\linfctls{#1}}*{#4}(#5)%
  }{%
    \evallinop{\linfctls{#1}}[#3]{#4}(#5)%
  }%
}
\newcommand*{\diff}[2][1]{%
  \mathrm{d} #2\ifstrequal{#1}{1}{}{^{#1}}%
}
\newcommand*{\pdiff}[2][1]{%
  \partial #2\ifstrequal{#1}{1}{}{^{#1}}%
}
\NewDocumentCommand{\deriv}{s O{1} m m}{%
  \frac{%
    \mathrm{d}\ifstrequal{#2}{1}{}{^{#2}}\IfBooleanT{#1}{#4}
  }{%
    \diff[#2]{#3}
  }
  \IfBooleanF{#1}{#4}
}
\NewDocumentCommand{\derivat}{s O{1} m m m}{%
\left.
\IfBooleanTF{#1}{%
  \deriv*[#2]{#3}{#4}
}{%
  \deriv[#2]{#3}{#4}
}
\right|_{#3 = #5}
}
\NewDocumentCommand{\dderiv}{m m}{%
  \partial_{#1} #2
}
\NewDocumentCommand{\dderivat}{m m o m}{%
  \IfNoValueTF{#3}{%
    \dderiv{#1}{#2} \left( #4 \right)
  }{%
    \left.
    \dderiv{#1}{#2}
    \right_{#3 = #4}
  }
}
\NewDocumentCommand{\pderiv}{s O{1} m m}{%
  \frac{%
    \partial\ifstrequal{#2}{1}{}{^{#2}}\IfBooleanT{#1}{#4}
  }{%
    \pdiff[#2]{#3}
  }
  \IfBooleanF{#1}{#4}
}
\NewDocumentCommand{\pderivat}{s O{1} m m o m}{%
\left.
\IfBooleanTF{#1}{%
  \pderiv*[#2]{#3}{#4}
}{%
  \pderiv[#2]{#3}{#4}
}
\right|_{\IfNoValueTF{#5}{#3}{#5} = #6}
}
\NewDocumentCommand{\mpderiv}{s O{1} m m}{%
  \frac{%
    \partial\ifstrequal{#2}{1}{}{^{#2}}\IfBooleanT{#1}{#4}
  }{%
    #3
  }
  \IfBooleanF{#1}{#4}
}
\NewDocumentCommand{\mpderivat}{s O{1} m m m m}{%
\left.
\IfBooleanTF{#1}{%
  \mpderiv*[#2]{#3}{#4}
}{%
  \mpderiv[#2]{#3}{#4}
}
\right|_{#5 = #6}
}
\NewDocumentCommand{\mipderiv}{s m o m o}{%
  \IfNoValueTF{#3}{
    \mathrm{D}^{#2}
    \IfBooleanTF{#1}{\left[ #4 \right]}{#4}
    \IfNoValueF{#5}{\left( #5 \right)}
  }{
    \IfNoValueF{#5}{\left.}
    \frac{%
      \partial^{\lvert #2 \rvert}\IfBooleanT{#1}{#4}
    }{%
      \pdiff[#2]{#3}
    }
    \IfBooleanF{#1}{#4}
    \IfNoValueF{#5}{\right\vert_{#3 = #5}}
  }
}
\NewDocumentCommand{\jacobian}{o m o}{%
  \IfNoValueTF{#1}{%
    \mathrm{D} #2 \IfNoValueF{#3}{\left( #3 \right)}
  }{%
    \left.
      \mathrm{D} #2
    \right|_{#1\IfNoValueF{#3}{= #3}}
}
}
\NewDocumentCommand{\gradient}{o m o}{%
  \IfNoValueTF{#1}{%
    \nabla #2 \IfNoValueF{#3}{\left( #3 \right)}
  }{%
    \left.
      \nabla #2
    \right|_{#1\IfNoValueF{#3}{= #3}}
}
}
\NewDocumentCommand{\divergence}{m o}{%
  \operatorname{div} \left( #1 \right) \IfNoValueF{#2}{\left( #2 \right)}
}
\NewDocumentCommand{\hessian}{o m o}{%
  \IfNoValueTF{#1}{%
    H #2 \IfNoValueF{#3}{\left( #3 \right)}
  }{%
    \left.
      H #2
    \right|_{#1\IfNoValueF{#3}{= #3}}
}
}
\NewDocumentCommand{\laplaceop}{o m o}{%
  \IfNoValueTF{#1}{%
    \Delta #2 \IfNoValueF{#3}{\left( #3 \right)}
  }{%
    \left.
      \Delta #2
    \right|_{#1\IfNoValueF{#3}{= #3}}
}
}
\NewDocumentCommand{\rintegral}{o o m m}{%
  \int\IfNoValueF{#1}{_{#1}}\IfNoValueF{#2}{^{#2}} #4 \,\diff[1]{#3}%
}
\newcommand*{\@probsymbol}{\mathrm{P}}
\newcommand*{\@given}[1]{%
  \nonscript\:#1\vert
  \allowbreak
  \nonscript\:
  \mathopen{}}
\providecommand*{\given}{}
\DeclarePairedDelimiterXPP{\@prob}[1]{\@probsymbol}{(}{)}{}{%
  \renewcommand*{\given}{\@given{\delimsize}}%
  #1}
\newcommand*{\prob}[1]{\ifblank{#1}{\@probsymbol}{\@prob*{#1}}}
\DeclarePairedDelimiterX{\@condrv}[1]{.}{.}{%
  \renewcommand*{\given}{\@given{\delimsize}}%
  #1}
\NewDocumentCommand{\condrv}{som}{%
  \IfBooleanTF{#1}{%
    \@condrv*{#3}
  }{%
    \IfNoValueTF{#2}{%
      \begingroup%
      \renewcommand*{\given}{\@given{}}%
      #3%
      \endgroup%
    }{%
      \@condrv[#2]{#3}%
    }
  }
}
\newcommand*{\rvec}[1]{{\bm{\mathrm{#1}}}}
\newcommand*{\rva}{\rvec{a}}
\newcommand*{\rvb}{\rvec{b}}
\newcommand*{\rvq}{\rvec{q}}
\newcommand*{\rvu}{\rvec{u}}
\newcommand*{\rvw}{\rvec{w}}
\newcommand*{\rvx}{\rvec{x}}
\newcommand*{\rvy}{\rvec{y}}
\newcommand*{\rvepsilon}{\rvec{\epsilon}}
\NewDocumentCommand{\expectation}{o o m}{%
  \operatorname{\mathbb{E}}\IfNoValueF{#1}{_{#1\IfNoValueF{#2}{\sim #2}}} \left[ #3 \right]
}
\NewDocumentCommand{\covariance}{o o m m}{%
  \operatorname{Cov}\IfNoValueF{#1}{_{#1\IfNoValueF{#2}{\sim #2}}} \left[ #3, #4 \right]
}
\NewDocumentCommand{\variance}{o o m}{%
  \operatorname{\mathbb{V}}\IfNoValueF{#1}{_{#1\IfNoValueF{#2}{\sim #2}}} \left[ #3 \right]
}
\newcommand*{\gaussian}[2]{{\ensuremath{\operatorname{\mathcal{N}}\left(#1, #2\right)}}}
\newcommand*{\gaussianpdf}[3]{%
  {\ensuremath{\operatorname{\mathcal{N}}\left(#1; #2, #3\right)}}%
}
\newcommand*{\rproc}[1]{{\mathrm{#1}}}
\newcommand*{\morproc}[1]{{\bm{\mathrm{#1}}}}
\newcommand*{\gp}[2]{{\ensuremath{\operatorname{\mathcal{GP}}}\left(#1, #2\right)}}
\NewDocumentCommand{\LkL}{m m o}{%
  #1 #2 \IfValueTF{#3}{#3}{#1}\dualop
}
\declaretheorem[style=plain]{theorem}
\declaretheorem[style=plain,sibling=theorem]{proposition}
\declaretheorem[style=plain,sibling=theorem]{lemma}
\declaretheorem[style=plain,sibling=theorem]{corollary}
\declaretheorem[style=definition,sibling=theorem]{definition}
\declaretheorem[style=remark,sibling=theorem]{remark}
\newcommand*{\iidx}[1][i]{^{(#1)}}  %
\newcommand{\gmp}{\rvu}
\newcommand{\gmpval}{\vu}
\newcommand{\gmptarget}{\gmpval^\star}
\newcommand{\gmpdim}{D}
\newcommand{\gmplen}{K}  %
\newcommand{\gmpA}{\mA}
\newcommand{\gmpb}{\vb}
\newcommand{\gmpnoise}{\rvq}
\newcommand{\gmpnoisecov}{\mQ}
\newcommand{\gmpmean}{\vmu}
\newcommand{\gmpcov}{\mSigma}
\newcommand{\gmppstate}[1]{\gmp^-_{#1}}
\newcommand{\gmptstate}[1]{\gmp^+_{#1}}
\newcommand{\obs}{\vy}
\newcommand{\obsrv}{\rvy}
\newcommand{\obsdim}{N}
\newcommand{\obsH}{\mH}
\newcommand{\obsnoise}{\rvepsilon}
\newcommand{\obsnoisecov}{\mLambda}
\newcommand{\cakfpmean}{\hat{\vm}^-}  %
\newcommand{\cakfpdd}{\hat{\mM}^-}  %
\newcommand{\cakfpcov}{\hat{\mP}^-}  %
\newcommand{\cakfmean}{\hat{\vm}}  %
\newcommand{\cakfdd}{\hat{\mM}}  %
\newcommand{\cakfddrank}{r}  %
\newcommand{\cakfcov}{\hat{\mP}}  %
\newcommand{\cakfmaxtddrank}{r^\textnormal{max}}  %
\newcommand{\cakftdd}{\hat{\mM}^+}  %
\newcommand{\cakftddrank}{r^+}  %
\newcommand{\cakftcov}{\hat{\mP}^+}  %
\newcommand{\cakfrepw}{\hat{\vv}}
\newcommand{\cakfrepwdd}{\hat{\mV}}
\newcommand{\cakfw}{\hat{\vw}}
\newcommand{\cakfW}{\hat{\mW}}
\newcommand{\cakfact}{\hat{\vs}}  %
\newcommand{\cakfacts}{\hat{\mS}}  %
\newcommand{\cakfgram}{\hat{\mG}}
\newcommand{\cakfresidual}{\hat{\vr}}
\newcommand{\cakfprojobs}{\check{\obs}}
\newcommand{\cakfprojobsrv}{\check{\obsrv}}
\newcommand{\cakfprojobsdim}{\check{N}}
\newcommand{\cakfmaxprojobsdim}{\check{N}^{\textnormal{max}}}
\newcommand{\cakfprojH}{\check{\obsH}}
\newcommand{\cakfprojobsnoise}{\check{\obsnoise}}
\newcommand{\cakfprojobsnoisecov}{\check{\obsnoisecov}}
\newcommand{\cakfprojgram}{\check{\mG}}
\newcommand{\cakfprojgramlsqrt}{\check{\mV}}
\newcommand{\caksstate}{\hat{\gmp}^{\textnormal{s}}}
\newcommand{\caksmean}{\hat{\vm}^{\textnormal{s}}}  %
\newcommand{\caksdd}{\hat{\mM}^{\textnormal{s}}}  %
\newcommand{\cakscov}{\hat{\mP}^{\textnormal{s}}}  %
\newcommand{\caksw}{\hat{\vw}^{\textnormal{s}}}
\newcommand{\caksW}{\hat{\mW}^{\textnormal{s}}}
\newcommand{\kfpmean}{\vm^-}  %
\newcommand{\kfpcov}{\mP^-}  %
\newcommand{\kfpdd}{\mM^-}  %
\newcommand{\kfmean}{\vm}  %
\newcommand{\kfcov}{\mP}  %
\newcommand{\kfdd}{\mM}
\newcommand{\kfresidual}{\vr}
\newcommand{\kfgram}{\mG}
\newcommand{\kfgramlsqrt}{\mV}
\newcommand{\kfW}{\mW}
\newcommand{\kfgain}{\mK}
\newcommand{\ksmean}{\kfmean^{\textnormal{s}}}
\newcommand{\kscov}{\kfcov^{\textnormal{s}}}
\newcommand{\ksdd}{\kfdd^{\textnormal{s}}}
\newcommand{\ksgain}{\kfgain^{\textnormal{s}}}
\newcommand{\ksw}{\vw^{\textnormal{s}}}
\newcommand{\ksW}{\mW^{\textnormal{s}}}
\newcommand{\kfpstate}{\gmp^{f-}}  %
\newcommand{\kfpobsrv}{\obsrv^{f-}}  %
\newcommand{\kfstate}{\gmp^f}  %
\newcommand{\ksstate}{\gmp^{\textnormal{s}}}  %
\newcommand{\kswrv}{\rvw^{\textnormal{s}}}  %
\newcommand{\cakfpstate}{\hat{\gmp}^{f-}}  %
\newcommand{\cakfstate}{\hat{\gmp}^f} %
\newcommand{\cakfwrv}{\hat{\rvw}}
\newcommand{\cakswrv}{\hat{\rvw}^{\textnormal{s}}}
\newcommand{\tmin}{t_0}
\newcommand{\tmax}{T}
\newcommand{\temporalinputspace}{[\tmin, \tmax]}
\newcommand{\spatialinputspace}{\spacesym{X}}
\newcommand{\spatialinputspacedim}{D_{\spatialinputspace}}
\newcommand{\inputspace}{\spacesym{Z}}
\newcommand{\strtargetfn}{f^\star}
\newcommand{\strobsfn}{y^\star}
\newcommand{\tstrain}{\vt^\textnormal{train}}
\newcommand{\ttrain}[1]{t^\textnormal{train}_{#1}}
\newcommand{\ntstrain}{\gmplen}
\newcommand{\xstrain}[1]{\mX^\textnormal{train}_{#1}}
\newcommand{\xtrain}[2]{\vx^\textnormal{train}_{#1, #2}}
\newcommand{\nxstrain}[1]{\obsdim_{#1}}
\newcommand{\zstrain}{\mZ^\textnormal{train}}
\newcommand{\ztrain}[1]{\vz^\textnormal{train}_{#1}}
\newcommand{\ystrain}{\vy^\textnormal{train}}
\newcommand{\ntraindata}{\obsdim}
\newcommand{\tsall}{\vt}
\newcommand{\ntsall}{N_\tsall}
\newcommand{\xsall}{\mX}
\newcommand{\nxsall}{N_\xsall}
\newcommand{\zsall}{\mZ}
\newcommand{\nzsall}{N_\zsall}
\newcommand{\gpprior}{\rproc{f}}
\newcommand{\meanfn}{\mu}
\newcommand{\covfn}{\Sigma}
\newcommand{\noisescale}{\sigma}
\newcommand{\noisycovfn}{\covfn^\noisescale}
\newcommand{\postmeanfn}[1][\strobsfn]{\bar{\meanfn}^{#1}}
\newcommand{\postcovfn}{\bar{\covfn}}
\newcommand{\outputdim}{D'}
\newcommand{\stsgmp}{\morproc{f}}
\newcommand{\stsgmpmeanfn}{\vmu}
\newcommand{\stsgmpcovfn}{\mSigma}
\newcommand{\stsgmpmeanfntime}{\vmu^t}
\newcommand{\stsgmpmeanfnspace}{\mu^\vx}
\newcommand{\stsgmpcovfntime}{\mSigma^t}
\newcommand{\stsgmpcovfnspace}{\Sigma^\vx}
\newcommand{\meanfntime}{\mu^t}
\newcommand{\meanfnspace}{\mu^\vx}
\newcommand{\covfntime}{\Sigma^t}
\newcommand{\covfnspace}{\Sigma^\vx}
\newcommand{\stsgmpapproxpostmeanfn}[1][\strobsfn]{\hat{\stsgmpmeanfn}^{#1}}
\newcommand{\stsgmpapproxpostcovfn}{\hat{\stsgmpcovfn}}
\newcommand{\mactions}{\mS}
\newcommand{\nactions}{\cakfprojobsdim}
\newcommand{\projystrain}{\check{\vy}^\textnormal{train}}
\newcommand{\projystrainrv}{\check{\rvy}^\textnormal{train}}
\newcommand{\approxpostmeanfn}[1][\strobsfn]{\hat{\meanfn}^{#1}}
\newcommand{\approxpostcovfn}{\hat{\covfn}}
\newcommand{\idxdtime}{k}
\NewDocumentCommand{\matfree}{m s O{} g}{%
  \textcolor{TUred}{#1}%
  \IfValueT{#4}{%
    \IfBooleanTF{#2}{%
      \brks*{#4}%
    }{%
      \brks[#3]{#4}%
    }%
  }
}
\newglossaryentry{gmp:dim}{
  type={gmp},
  name={\ensuremath{\gmpdim}},
  description={Dimension of the state space},
}
\newglossaryentry{gmp:len}{
  type={gmp},
  name={\ensuremath{\gmplen}},
  description={Total number of states},
}
\newglossaryentry{gmp:state}{
  type={gmp},
  name={\ensuremath{\gmp_\idxdtime}},
  description={Unobserved state at time step $\idxdtime$},
}
\newglossaryentry{gmp:mean}{
  type={gmp},
  name={\ensuremath{\gmpmean_\idxdtime}},
  description={Prior mean of the state at time step $\idxdtime$},
}
\newglossaryentry{gmp:cov}{
  type={gmp},
  name={\ensuremath{\gmpcov_\idxdtime}},
  description={Prior covariance of the state at time step $\idxdtime$},
}
\newglossaryentry{gmp:A}{
  type={gmp},
  name={\ensuremath{\gmpA_\idxdtime}},
  description={Transition matrix from time step $\idxdtime$ to time step $\idxdtime + 1$},
}
\newglossaryentry{gmp:b}{
  type={gmp},
  name={\ensuremath{\gmpb_\idxdtime}},
  description={Transition offset from time step $\idxdtime$ to time step $\idxdtime + 1$},
}
\newglossaryentry{gmp:noise}{
  type={gmp},
  name={\ensuremath{\gmpnoise_\idxdtime}},
  description={Process noise from time step $\idxdtime$ to time step $\idxdtime + 1$},
}
\newglossaryentry{gmp:noisecov}{
  type={gmp},
  name={\ensuremath{\gmpnoisecov_\idxdtime}},
  description={Process noise covariance matrix from time step $\idxdtime$ to time step $\idxdtime + 1$},
}
\newglossaryentry{obs:dim}{
  type={obs},
  name={\ensuremath{\obsdim_k}},
  description={Number of observations (dimension of the observation vector) at time step $\idxdtime$},
}
\newglossaryentry{obs:vec}{
  type={obs},
  name={\ensuremath{\obs_\idxdtime}},
  description={Observation vector at time step $\idxdtime$},
}
\newglossaryentry{obs:rv}{
  type={obs},
  name={\ensuremath{\obsrv_\idxdtime}},
  description={Random variable encoding the belief about the observations at time step $\idxdtime$},
}
\newglossaryentry{obs:H}{
  type={obs},
  name={\ensuremath{\obsH_\idxdtime}},
  description={Observation matrix at time step $\idxdtime$},
}
\newglossaryentry{obs:noise}{
  type={obs},
  name={\ensuremath{\obsnoise_\idxdtime}},
  description={Observation noise at time step $\idxdtime$},
}
\newglossaryentry{obs:noisecov}{
  type={obs},
  name={\ensuremath{\obsnoisecov_\idxdtime}},
  description={Observation noise covariance matrix at time step $\idxdtime$},
}
\newglossaryentry{kf:pmean}{
  type={kf},
  name={\ensuremath{\kfpmean_\idxdtime}},
  description={Predictive filter mean at time step $\idxdtime$},
}
\newglossaryentry{kf:pdd}{
  type={kf},
  name={\ensuremath{\kfpdd_\idxdtime}},
  description={Left square root of the downdate term in the predictive filter covariance at time step $\idxdtime$},
}
\newglossaryentry{kf:pcov}{
  type={kf},
  name={\ensuremath{\kfpcov_\idxdtime}},
  description={Predictive filter covariance at time step $\idxdtime$},
}
\newglossaryentry{kf:mean}{
  type={kf},
  name={\ensuremath{\kfmean_\idxdtime}},
  description={Updated filter mean at time step $\idxdtime$},
}
\newglossaryentry{kf:dd}{
  type={kf},
  name={\ensuremath{\kfdd_\idxdtime}},
  description={Left square root of the downdate term in the updated filter covariance at time step $\idxdtime$},
}
\newglossaryentry{kf:cov}{
  type={kf},
  name={\ensuremath{\kfcov_\idxdtime}},
  description={Updated filter covariance at time step $\idxdtime$},
}
\newglossaryentry{kf:residual}{
  type={kf},
  name={\ensuremath{\kfresidual_\idxdtime}},
  description={Prediction-measurement residual at time step $\idxdtime$},
}
\newglossaryentry{kf:gram}{
  type={kf},
  name={\ensuremath{\kfgram_\idxdtime}},
  description={Innovation matrix at time step $\idxdtime$},
}
\newglossaryentry{kf:gramlsqrt}{
  type={kf},
  name={\ensuremath{\kfgramlsqrt_\idxdtime}},
  description={Left square root of the inverse innovation matrix at time step $\idxdtime$},
}
\newglossaryentry{kf:W}{
  type={kf},
  name={\ensuremath{\kfW_\idxdtime}},
  description={Filter ``covariance message'' propagated from time step $\idxdtime$ to time step $\idxdtime + 1$ in the inverse-free RTS smoother},
}
\newglossaryentry{kf:gain}{
  type={kf},
  name={\ensuremath{\kfgain_\idxdtime}},
  description={Kalman gain at time step $\idxdtime$},
}
\newglossaryentry{hatted}{
  type={cakf},
  name={\ensuremath{\hat{(\cdot)}}},
  description={``Hatted'' quantities are associated with the CAKF/CAKS.
      Typically, these are counterparts of quantities in the standard Kalman filter / RTS smoother.
    },
}
\newglossaryentry{cakf:pmean}{
  type={cakf},
  name={\ensuremath{\cakfpmean_\idxdtime}},
  description={Predictive CAKF mean at time step $\idxdtime$},
}
\newglossaryentry{cakf:pdd}{
  type={cakf},
  name={\ensuremath{\cakfpdd_\idxdtime}},
  description={Left square root of the low-rank downdate representing the predictive CAKF covariance at time step $\idxdtime$},
}
\newglossaryentry{cakf:pcov}{
  type={cakf},
  name={\ensuremath{\cakfpcov_\idxdtime}},
  description={Predictive CAKF covariance at time step $\idxdtime$},
}
\newglossaryentry{cakf:mean}{
  type={cakf},
  name={\ensuremath{\cakfmean_\idxdtime}},
  description={Updated CAKF mean at time step $\idxdtime$},
}
\newglossaryentry{cakf:dd}{
  type={cakf},
  name={\ensuremath{\cakfdd_\idxdtime}},
  description={Left square root of the low-rank downdate representing the updated CAKF covariance at time step $\idxdtime$},
}
\newglossaryentry{cakf:ddrank}{
  type={cakf},
  name={\ensuremath{\cakfddrank_\idxdtime}},
  description={Rank of the downdate representing the updated CAKF covariance at time step $\idxdtime$},
}
\newglossaryentry{cakf:cov}{
  type={cakf},
  name={\ensuremath{\cakfcov_\idxdtime}},
  description={Updated CAKF covariance at time step $\idxdtime$},
}
\newglossaryentry{cakf:projobsdim}{
  type={cakf},
  name={\ensuremath{\cakfprojobsdim_\idxdtime}},
  description={Dimension of the projected observation vector (or equivalently number of actions) at time step $\idxdtime$},
}
\newglossaryentry{cakf:acts}{
  type={cakf},
  name={\ensuremath{\cakfacts_\idxdtime}},
  description={Action matrix whose columns (the actions) span the low-dimensional subspace onto which the CAKF projects the observation at time step $\idxdtime$},
}
\newglossaryentry{cakf:act}{
  type={cakf},
  name={\ensuremath{\cakfact_\idxdtime\iidx}},
  description={$i$-th action at time step $\idxdtime$ given by the $i$-th column of $\cakfacts_\idxdtime$},
}
\newglossaryentry{cakf:projobs}{
  type={cakf},
  name={\ensuremath{\cakfprojobs_\idxdtime}},
  description={Projected observation vector at time step $\idxdtime$},
}
\newglossaryentry{cakf:projobsrv}{
  type={cakf},
  name={\ensuremath{\cakfprojobsrv_\idxdtime}},
  description={Random variable modeling the belief about the projected observation vector at time step $\idxdtime$},
}
\newglossaryentry{cakf:projH}{
  type={cakf},
  name={\ensuremath{\cakfprojH_\idxdtime}},
  description={Projected observation matrix at time step $\idxdtime$},
}
\newglossaryentry{cakf:projobsnoise}{
  type={cakf},
  name={\ensuremath{\cakfprojobsnoise_\idxdtime}},
  description={Projected observation noise at time step $\idxdtime$},
}
\newglossaryentry{cakf:projobsnoisecov}{
  type={cakf},
  name={\ensuremath{\cakfprojobsnoisecov_\idxdtime}},
  description={Projected observation noise covariance matrix at time step $\idxdtime$},
}
\newglossaryentry{cakf:projgram}{
  type={cakf},
  name={\ensuremath{\cakfprojgram_\idxdtime}},
  description={Projected innovation matrix at time step $\idxdtime$},
}
\newglossaryentry{cakf:projgramlsqrt}{
  type={cakf},
  name={\ensuremath{\cakfprojgramlsqrt_\idxdtime}},
  description={Left square root of the projected inverse innovation matrix at time step $\idxdtime$},
}
\newglossaryentry{cakf:w}{
  type={cakf},
  name={\ensuremath{\cakfw_\idxdtime}},
  description={CAKF ``mean message'' propagated from time step $\idxdtime$ to time step $\idxdtime + 1$ in the CAKS},
}
\newglossaryentry{cakf:W}{
  type={cakf},
  name={\ensuremath{\cakfW_\idxdtime}},
  description={CAKF ``covariance message'' propagated from time step $\idxdtime$ to time step $\idxdtime + 1$ in the CAKS},
}
\newglossaryentry{cakf:tdd}{
  type={cakf},
  name={\ensuremath{\cakftdd_\idxdtime}},
  description={Left square root of the truncated low-rank downdate defining the truncated CAKF covariance at time step $\idxdtime$},
}
\newglossaryentry{cakf:tddrank}{
  type={cakf},
  name={\ensuremath{\cakftddrank_\idxdtime}},
  description={Rank of the truncated downdate defining the truncated CAKF covariance at time step $\idxdtime$},
}
\newglossaryentry{cakf:tcov}{
  type={cakf},
  name={\ensuremath{\cakftcov_\idxdtime}},
  description={Truncated CAKF covariance at time step $\idxdtime$},
}
\newglossaryentry{gmp:tstate}{
  type={cakf},
  name={\ensuremath{\gmptstate{\idxdtime}}},
  description={Additional state in the augmented state-space model of the CAKF modeling computational noise due to downdate truncation ``infinitesimally after'' time step $\idxdtime$},
}
\newglossaryentry{ks:mean}{
  type={ks},
  name={\ensuremath{\ksmean_\idxdtime}},
  description={Smoother mean at time step $\idxdtime$},
}
\newglossaryentry{ks:cov}{
  type={ks},
  name={\ensuremath{\kscov_\idxdtime}},
  description={Smoother covariance at time step $\idxdtime$},
}
\newglossaryentry{ks:w}{
  type={ks},
  name={\ensuremath{\ksw_\idxdtime}},
  description={Smoother ``mean message'' propagated from time step $\idxdtime$ to time step $\idxdtime - 1$ in the inverse-free RTS smoother},
}
\newglossaryentry{ks:W}{
  type={ks},
  name={\ensuremath{\ksW_\idxdtime}},
  description={Smoother ``covariance message'' propagated from time step $\idxdtime$ to time step $\idxdtime - 1$ in the inverse-free RTS smoother},
}
\newglossaryentry{ks:gain}{
  type={ks},
  name={\ensuremath{\ksgain_\idxdtime}},
  description={Smoother gain at time step $\idxdtime$},
}
\newglossaryentry{caks:mean}{
  type={caks},
  name={\ensuremath{\caksmean_\idxdtime}},
  description={CAKS mean at time step $\idxdtime$},
}
\newglossaryentry{caks:dd}{
  type={caks},
  name={\ensuremath{\caksdd_\idxdtime}},
  description={Left square root of the low-rank downdate representing the CAKS covariance at time step $\idxdtime$},
}
\newglossaryentry{caks:cov}{
  type={caks},
  name={\ensuremath{\cakscov_\idxdtime}},
  description={CAKS covariance at time step $\idxdtime$},
}
\newglossaryentry{caks:w}{
  type={caks},
  name={\ensuremath{\caksw_\idxdtime}},
  description={CAKS ``mean message'' propagated from time step $\idxdtime$ to time step $\idxdtime - 1$},
}
\newglossaryentry{caks:W}{
  type={caks},
  name={\ensuremath{\caksW_\idxdtime}},
  description={(Truncated) CAKS ``covariance message'' propagated from time step $\idxdtime$ to time step $\idxdtime - 1$},
}
\newglossaryentry{kf:pstate}{
  type={samp},
  name={\ensuremath{\kfpstate_\idxdtime}},
  description={Random variable with distribution $\gaussian{\kfpmean_\idxdtime}{\kfpcov_\idxdtime}$},
}
\newglossaryentry{kf:pobsrv}{
  type={samp},
  name={\ensuremath{\kfpobsrv_\idxdtime}},
  description={Random variable with distribution $\gaussian{\obsH_\idxdtime \kfpmean_\idxdtime}{\kfgram_\idxdtime}$},
}
\newglossaryentry{kf:state}{
  type={samp},
  name={\ensuremath{\kfstate_\idxdtime}},
  description={Random variable with distribution $\gaussian{\kfmean_\idxdtime}{\kfcov_\idxdtime}$},
}
\newglossaryentry{ks:wrv}{
  type={samp},
  name={\ensuremath{\kswrv_\idxdtime}},
  description={Random variable propagating the ``smoother message'' from time step $\idxdtime$ to time step $\idxdtime - 1$ during inverse-free posterior sampling},
}
\newglossaryentry{ks:state}{
  type={samp},
  name={\ensuremath{\ksstate_\idxdtime}},
  description={Random variable with distribution $\gaussian{\ksmean_\idxdtime}{\kscov_\idxdtime}$},
}
\newglossaryentry{cakf:pstate}{
  type={casamp},
  name={\ensuremath{\cakfpstate_\idxdtime}},
  description={Random variable with distribution $\gaussian{\cakfpmean_\idxdtime}{\cakfpcov_\idxdtime}$},
}
\newglossaryentry{cakf:wrv}{
  type={casamp},
  name={\ensuremath{\cakfwrv_\idxdtime}},
  description={Random variable propagating the ``filter message'' from time step $\idxdtime$ to time step $\idxdtime + 1$ during CAKF/CAKS sampling},
}
\newglossaryentry{cakf:state}{
  type={casamp},
  name={\ensuremath{\cakfstate_\idxdtime}},
  description={Random variable with distribution $\gaussian{\cakfmean_\idxdtime}{\cakfcov_\idxdtime}$},
}
\newglossaryentry{caks:wrv}{
  type={casamp},
  name={\ensuremath{\cakswrv_\idxdtime}},
  description={Random variable propagating the ``smoother message'' from time step $\idxdtime$ to time step $\idxdtime - 1$ during CAKS sampling},
}
\newglossaryentry{caks:state}{
  type={casamp},
  name={\ensuremath{\caksstate_\idxdtime}},
  description={Random variable with distribution $\gaussian{\caksmean_\idxdtime}{\cakscov_\idxdtime}$},
}
\newglossaryentry{str:temporalinputspace}{
  type={str},
  name={\ensuremath{\temporalinputspace}},
  description={Temporal domain},
}
\newglossaryentry{str:spatialinputspace}{
  type={str},
  name={\ensuremath{\spatialinputspace}},
  description={Spatial domain},
}
\newglossaryentry{str:inputspace}{
  type={str},
  name={\ensuremath{\inputspace}},
  description={Input domain $\inputspace = \temporalinputspace \times \spatialinputspace$},
}
\newglossaryentry{str:targetfn}{
  type={str},
  name={\ensuremath{\strtargetfn}},
  description={Unknown target function},
}
\newglossaryentry{str:obsfn}{
  type={str},
  name={\ensuremath{\strobsfn}},
  description={Noisy observed function},
}
\newglossaryentry{str:ttrain}{
  type={str},
  name={\ensuremath{\ttrain{\idxdtime}}},
  description={$\idxdtime$-th time step at which training data is available},
}
\newglossaryentry{str:xstrain}{
  type={str},
  name={\ensuremath{\xstrain{\idxdtime}}},
  description={Vector of spatial points $\xtrain{\idxdtime}{n}$ at which training data is available at time step $\ttrain{\idxdtime}$},
}
\newglossaryentry{str:zstrain}{
  type={str},
  name={\ensuremath{\zstrain}},
  description={Vector of ordered pairs $(\ttrain{\idxdtime}, \xtrain{\idxdtime}{n})$ of training input points},
}
\newglossaryentry{str:ystrain}{
  type={str},
  name={\ensuremath{\ystrain}},
  description={Vector containing all training targets},
}
\newglossaryentry{str:ntraindata}{
  type={str},
  name={\ensuremath{\ntraindata}},
  description={Total number of training data points},
}
\newglossaryentry{str:xsall}{
  type={str},
  name={\ensuremath{\xsall}},
  description={Vector containing all (unique) spatial training and test points from all time steps},
}
\newglossaryentry{str:nxsall}{
  type={str},
  name={\ensuremath{\nxsall}},
  description={Total number of unique spatial training and test points over all time steps},
}
\newglossaryentry{str:gpprior}{
  type={str},
  name={\ensuremath{\gpprior}},
  description={Spatiotemporal Gaussian process prior for the unknown target function $\strtargetfn$},
}
\newglossaryentry{str:meanfn}{
  type={str},
  name={\ensuremath{\meanfn}},
  description={Mean function of $\gpprior$},
}
\newglossaryentry{str:covfn}{
  type={str},
  name={\ensuremath{\covfn}},
  description={Covariance function of $\gpprior$},
}
\newglossaryentry{str:rkhs}{
  type={str},
  name={\ensuremath{\rkhs{\covfn}}},
  description={Reproducing kernel Hilbert space (RKHS) associated with $\covfn$},
}
\newglossaryentry{str:noisescale}{
  type={str},
  name={\ensuremath{\noisescale}},
  description={Observation noise scale},
}
\newglossaryentry{str:noisycovfn}{
  type={str},
  name={\ensuremath{\noisycovfn}},
  description={Covariance function of the noisy observed process},
}
\newglossaryentry{str:postmeanfn}{
  type={str},
  name={\ensuremath{\postmeanfn}},
  description={Posterior mean function corresponding to the observed function $\strobsfn$},
}
\newglossaryentry{str:postcovfn}{
  type={str},
  name={\ensuremath{\postcovfn}},
  description={Posterior covariance function},
}
\newglossaryentry{stsgmp}{
  type={stsgmp},
  name={\ensuremath{\stsgmp}},
  description={Space-time separable Gaussian (or Gauss--Markov) process (STSG(M)P). Typically obtained by combining $\gpprior$ and $\outputdim - 1$ of its time derivatives in a multi-output GP},
}
\newglossaryentry{stsgmp:outputdim}{
  type={stsgmp},
  name={\ensuremath{\outputdim}},
  description={Output dimension of the STSG(M)P},
}
\newglossaryentry{stsgmp:meanfn}{
  type={stsgmp},
  name={\ensuremath{\stsgmpmeanfn}},
  description={Mean function of the STSG(M)P},
}
\newglossaryentry{stsgmp:meanfntime}{
  type={stsgmp},
  name={\ensuremath{\stsgmpmeanfntime}},
  description={Temporal factor of the mean function of the STSG(M)P},
}
\newglossaryentry{stsgmp:meanfnspace}{
  type={stsgmp},
  name={\ensuremath{\stsgmpmeanfnspace}},
  description={Spatial factor of the mean function of the STSG(M)P},
}
\newglossaryentry{stsgmp:covfn}{
  type={stsgmp},
  name={\ensuremath{\stsgmpcovfn}},
  description={Covariance function of the STSG(M)P},
}
\newglossaryentry{stsgmp:covfntime}{
  type={stsgmp},
  name={\ensuremath{\stsgmpcovfntime}},
  description={Temporal factor of the covariance function of the STSG(M)P},
}
\newglossaryentry{stsgmp:covfnspace}{
  type={stsgmp},
  name={\ensuremath{\stsgmpcovfnspace}},
  description={Spatial factor of the covariance function of the STSG(M)P},
}
\newglossaryentry{itergp:actions}{
  type={itergp},
  name={\ensuremath{\mactions}},
  description={Matrix of actions},
}
\newglossaryentry{itergp:nactions}{
  type={itergp},
  name={\ensuremath{\nactions}},
  description={Number of actions},
}
\newglossaryentry{itergp:projystrain}{
  type={itergp},
  name={\ensuremath{\projystrain}},
  description={Projected training targets},
}
\newglossaryentry{itergp:projystrainrv}{
  type={itergp},
  name={\ensuremath{\projystrainrv}},
  description={Random variable modeling the (prior) belief about the projected training targets},
}
\newglossaryentry{itergp:approxpostmeanfn}{
  type={itergp},
  name={\ensuremath{\approxpostmeanfn}},
  description={Approximate posterior mean function corresponding to the observed function $\strobsfn$},
}
\newglossaryentry{itergp:approxpostcovfn}{
  type={itergp},
  name={\ensuremath{\approxpostcovfn}},
  description={Approximate posterior covariance function},
}
  \crefname{assumption}{Assumption}{Assumptions}
  \crefname{assumption}{assumption}{assumptions}
\begin{document}
  \twocolumn[
    \aistatstitle{Computation-Aware Kalman Filtering and Smoothing}

    \aistatsauthor{%
      Marvin Pförtner$^1$
      \And Jonathan Wenger$^2$
      \And Jon Cockayne$^3$
      \And Philipp Hennig$^1$
    }

    \aistatsaddress{
      \\
      $^1$ Tübingen AI Center, University of Tübingen
      \hspace{1em}
      $^2$ Columbia University
      \hspace{1em}
      $^3$ University of Southampton
    }
  ]

  \begin{abstract}
    Kalman filtering and smoothing are the foundational mechanisms for efficient inference in Gauss--Markov models.
However, their time and memory complexities scale prohibitively with the size of the state space.
This is particularly problematic in spatiotemporal regression problems, where the state dimension scales with the number of spatial observations.
Existing approximate frameworks leverage low-rank approximations of the covariance matrix.
But since they do not model the error introduced by the computational approximation, their predictive uncertainty estimates can be overly optimistic.
In this work, we propose a probabilistic numerical method for inference in high-dimensional Gauss--Markov models which mitigates these scaling issues.
Our matrix-free iterative algorithm leverages GPU acceleration and crucially enables a tunable trade-off between computational cost and predictive uncertainty.
Finally, we demonstrate the scalability of our method on a large-scale climate dataset.

  \end{abstract}

  \section{INTRODUCTION}
\label{sec:introduction}
From language modeling to robotics to climate science, many application domains of machine learning generate data that are correlated in time.
By describing the underlying temporal dynamics via a \emph{state-space model} (SSM), the sequential structure can be leveraged to perform efficient inference.
In machine learning, state-space models are widely used in reinforcement learning \citep{Hafner2019LearningLatent}, as well as in deep \citep{Gu2023MambaLinearTime} and probabilistic \citep{Sarkka2023BayesianFiltering} sequence modeling.
For example, suppose we aim to forecast temperature as a function of time from a set of \(\ntstrain\) observations.
Standard regression approaches have cubic cost \(\mathcal{O}{(\ntstrain^3)}\) in the number of data points \citep{RasmussenGaussianProcessesMachine2006}.
Instead, one can leverage the temporal structure of the problem by representing it as a state-space model and performing Bayesian filtering and smoothing, which has \emph{linear} time complexity \(\mathcal{O}(\ntstrain)\) \citep{Sarkka2023BayesianFiltering}.

\paragraph{Challenges of a Large State-Space Dimension}
However, if the latent state has more than a few dimensions, inference in a state-space model can quickly become prohibitive.
The overall computational cost is linear in time, but \emph{cubic} in the size of the state space \(\gmpdim\) with a \emph{quadratic} memory requirement.
Returning to the example above, suppose temperatures are given at a set of \(\nxsall\) spatial measurement locations around the globe.
Assuming a non-zero correlation in temperature between those locations, the computational cost \(\mathcal{O}(\ntstrain \cdot \gmpdim^3)\) with $\gmpdim = \mathcal{O}(\nxsall)$ quickly becomes prohibitive.
In response, many approximate filtering and smoothing algorithms have been proposed, e.g., based on sampling \citep[e.g.,][]{Evensen1994EnKF}, Krylov subspace methods \citep{Bardsley2011}, sketching \citep{Berberidis2017}, and dynamical-low-rank approximation \citep{Schmidt2023RankReducedKalman}.
All of these methods inevitably introduce approximation error, which is \emph{not} accounted for in the uncertainty estimates of the resulting posterior distributions.

\begin{figure*}
  \centering
  \begin{subcaptionblock}{0.48\textwidth}
    \centering
    \caption{$\gmpdim = \num{14640}$}
    \label{fig:12-512}
    \vspace{.5em}
    \begin{minipage}[b]{0.48\linewidth}
      \centering
      {\footnotesize Mean}\\
      [0.5em]
      \includegraphics[width=\linewidth]{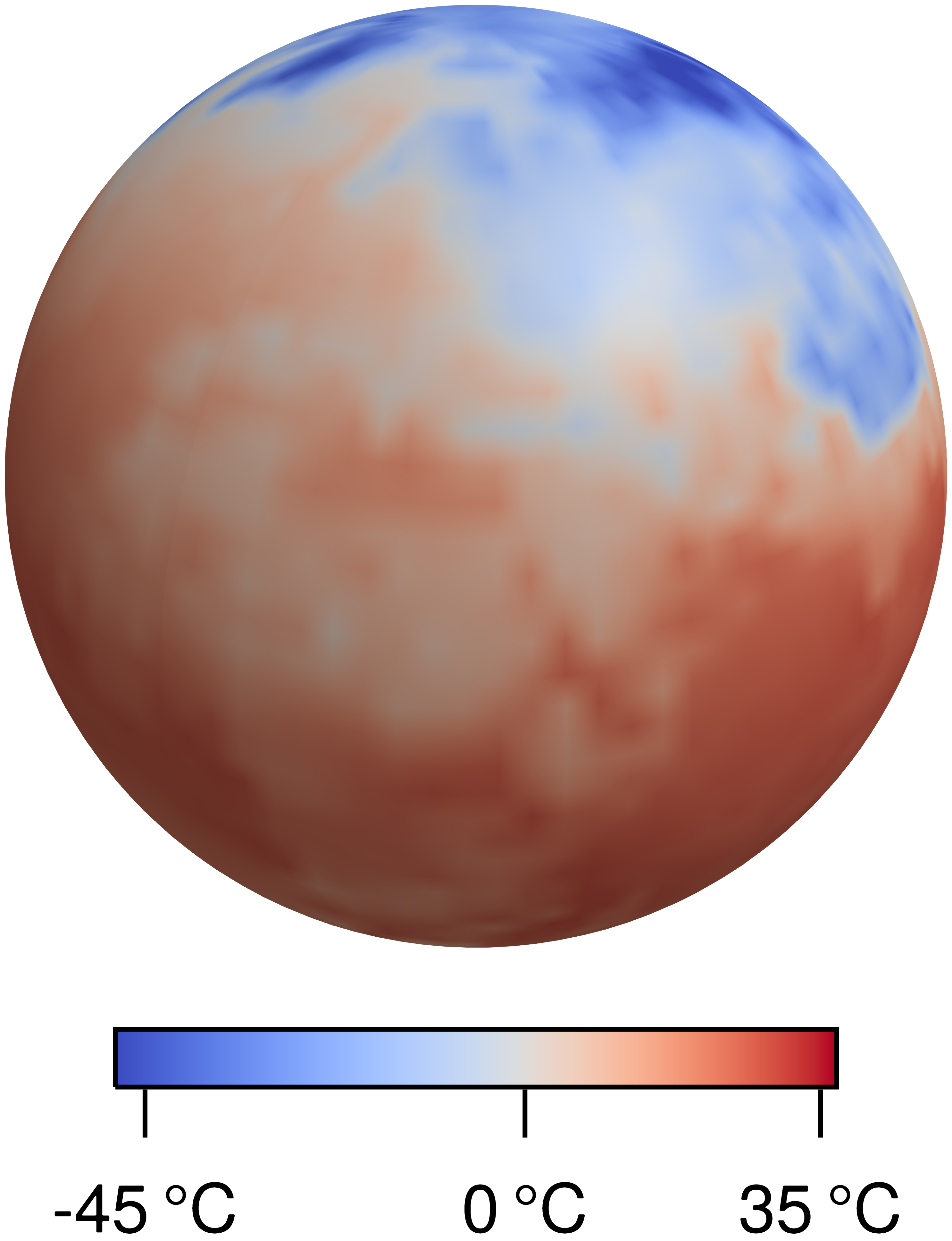}%
    \end{minipage}
    \hfill%
    \begin{minipage}[b]{0.48\linewidth}
      \centering
      {\footnotesize Standard Deviation}\\
      [0.5em]
      \includegraphics[width=\linewidth]{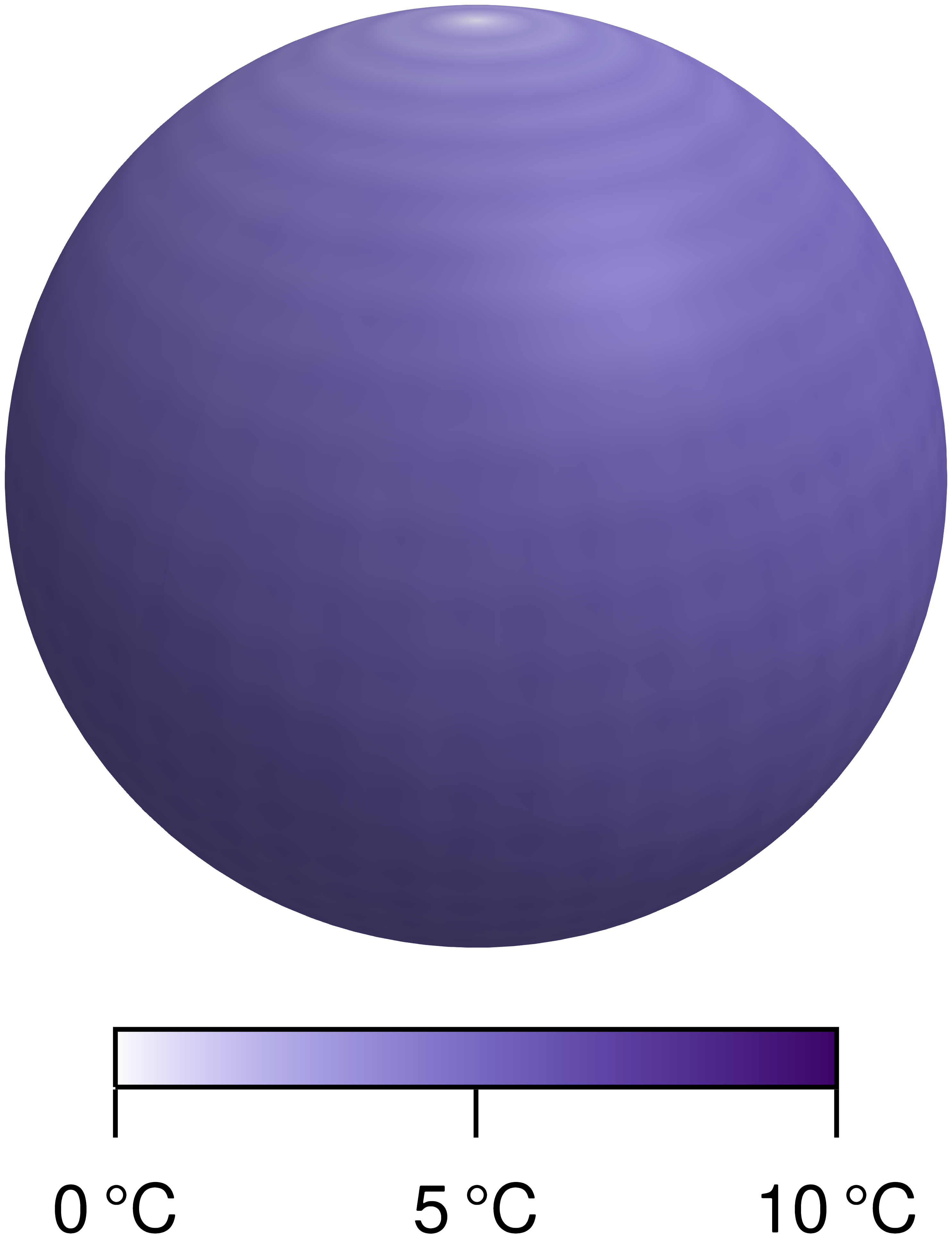}%
    \end{minipage}
  \end{subcaptionblock}
  \hfill%
  \begin{subcaptionblock}{0.48\textwidth}
    \centering
    \caption{$\gmpdim = \num{231360}$}
    \label{fig:3-64}
    \vspace{.5em}
    \begin{minipage}[b]{0.48\linewidth}
      \centering
      {\footnotesize Mean}\\
      [0.5em]
      \includegraphics[width=\linewidth]{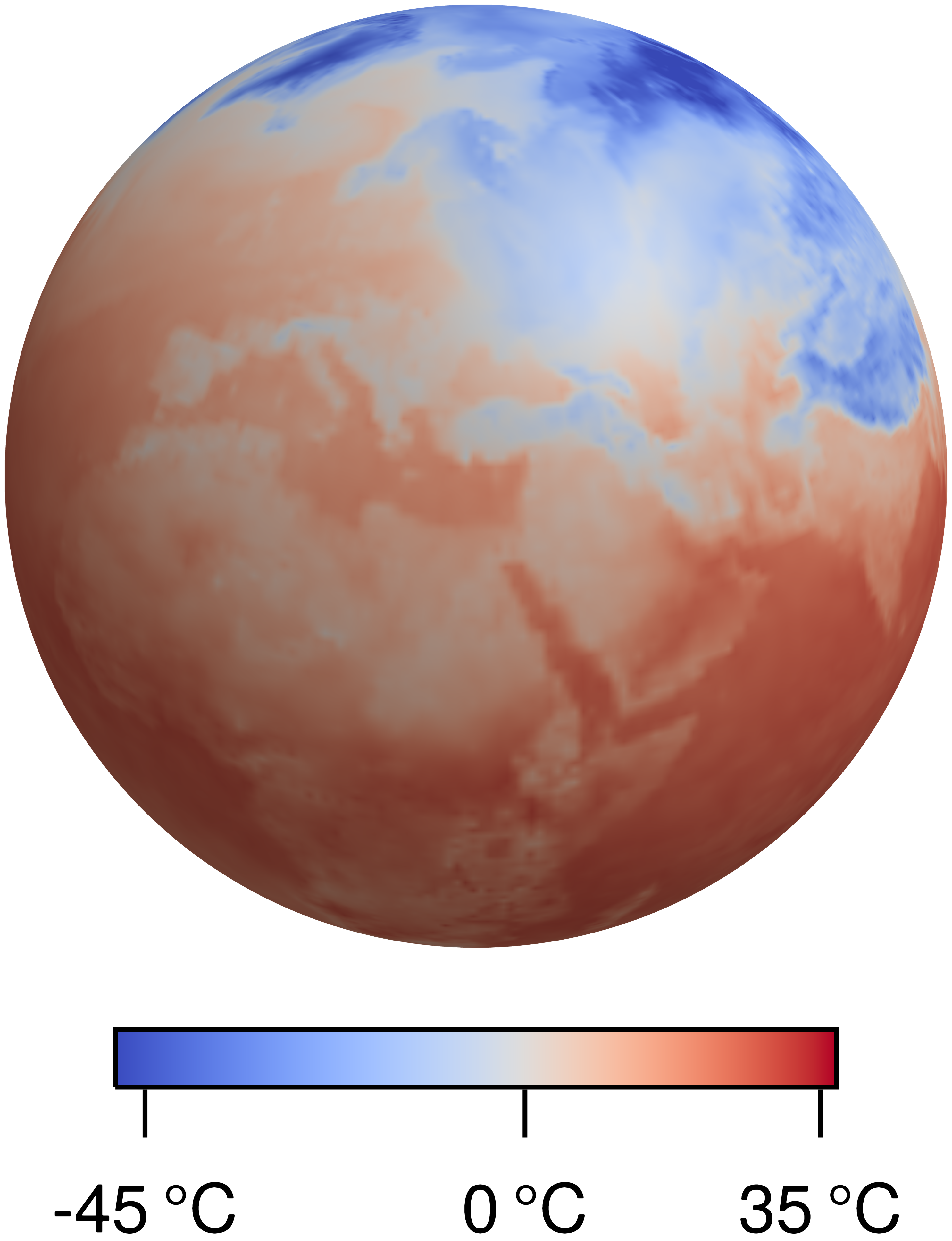}%
    \end{minipage}
    \hfill%
    \begin{minipage}[b]{0.48\linewidth}
      \centering
      {\footnotesize Standard Deviation}\\
      [0.5em]
      \includegraphics[width=\linewidth]{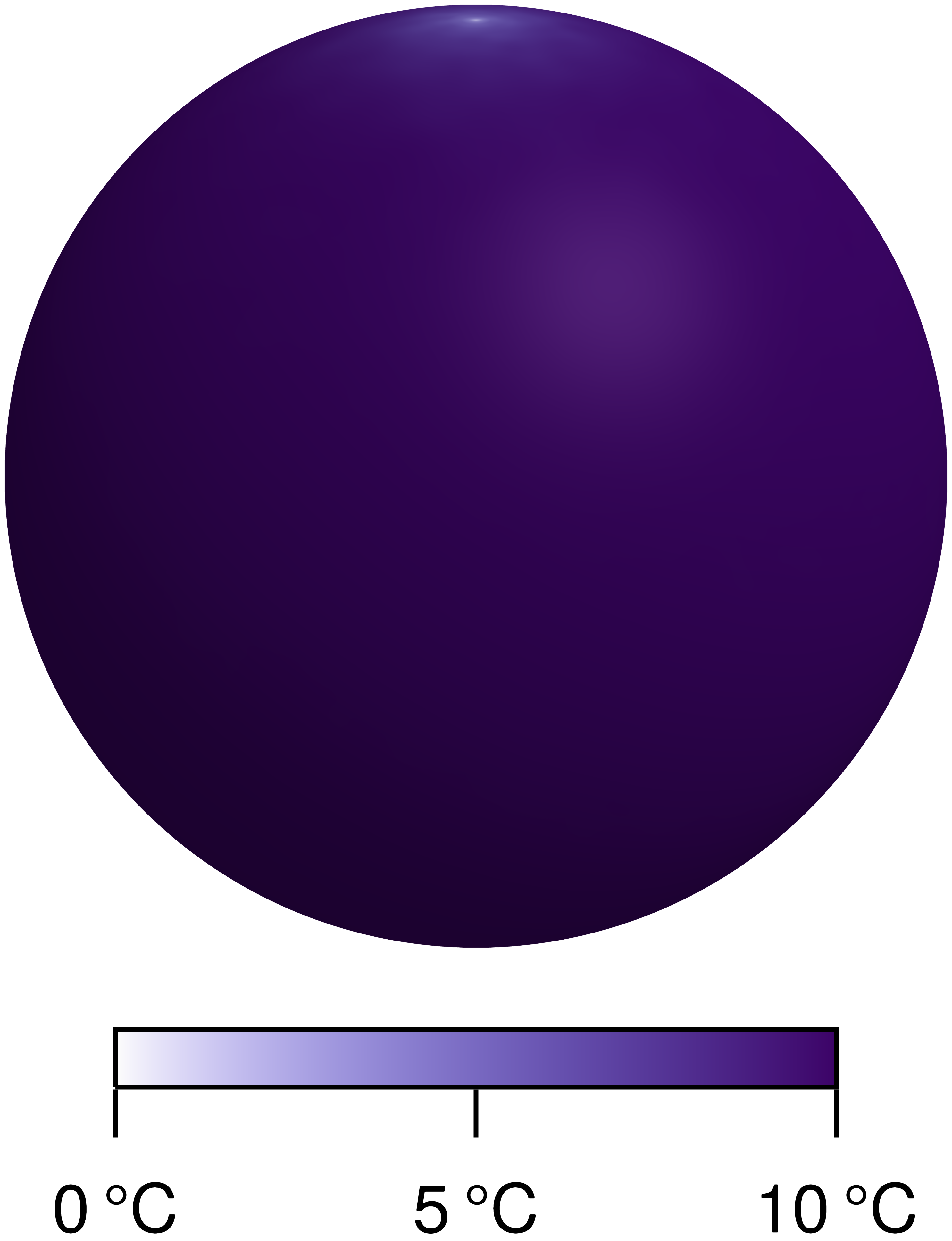}%
    \end{minipage}
  \end{subcaptionblock}
  \caption{
    Spatio-temporal Gaussian process regression of Earth's surface temperature using the ERA5 dataset \citep{Hersbach2020ERA5Global} via computation-aware filtering and smoothing for two different values of state-space dimension $\gmpdim$.
    Kalman filtering and RTS smoothing would require in excess of $\qty{1.17}{\tebi\byte}$ of memory to generate \cref{fig:3-64}.
    Our novel algorithms scale to larger state-space dimension $\gmpdim$ with lower time and memory costs, resolving finer detail and achieving better predictive performance.
  }
  \label{fig:visual-abstract}
  \vspace{-1em} %
\end{figure*}

\paragraph{Computation-Aware Filtering and Smoothing}
In this work, we introduce \emph{computation-aware Kalman filters} (CAKFs) and \emph{smoothers} (CAKSs): novel approximate versions of the Kalman filter and Rauch--Tung--Striebel (RTS) smoother.
Approximations are introduced both to reduce the computational cost through low-dimensional projection of the data (\cref{sec:projection}) and memory burden through covariance truncation (\cref{sec:truncation}).
Alongside their prediction for the underlying dynamics, they return a combined uncertainty estimate quantifying both epistemic uncertainty \emph{and} approximation error.
\Cref{fig:visual-abstract} showcases our approach on a large-scale spatiotemporal regression problem.

\textbf{Contribution}
We introduce the CAKF and CAKS, novel filtering and smoothing algorithms that are:
\begin{enumerate}[itemsep=-0.5ex,topsep=0ex,partopsep=0ex,label=(\arabic*)]
  \item \emph{iterative} and \emph{matrix-free}, and can fully leverage modern parallel hardware (i.e., GPUs);
  \item \emph{more efficient} both in time and space than their standard versions (see \cref{sec:computational-complexity}); and
  \item \emph{computation-aware}, i.e., they come with theoretical guarantees for their uncertainty estimates which capture the inevitable approximation error (\cref{thm:pointwise-error-cakf}).
\end{enumerate}
We demonstrate the scalability of our approach empirically on climate data with up to \(\ntstrain \cdot \nxsall\approx\num{4}\mathrm{M}\) observations and state-space dimension \(\gmpdim\approx230\mathrm{k}\).

  \section{BACKGROUND}
\label{sec:background}
Many temporal processes can be modeled with a linear-Gaussian state-space model, in which exact Bayesian inference can be done efficiently using the Kalman filter and Rauch--Tung--Striebel smoother.

\subsection{Bayesian Inference in Linear-Gaussian State-Space Models}
\label{sec:lgssm}
In the following, we want to infer the values of the \emph{states} $\gls{gmp:state} \in \R^{\gls{gmp:dim}}$ of an unobserved discrete-time Gauss--Markov process $\set{\gmp_\idxdtime}_{\idxdtime = 0}^{\gls{gmp:len}}$ (or a discretized continuous-time Gauss--Markov process with $\gmp_\idxdtime = \gmp(\ttrain{\idxdtime})$; see \cref{sec:temporal-interpolation}) defined by the \emph{dynamics model}
\(
\gmp_{\idxdtime + 1}
= \gls{gmp:A} \gmp_\idxdtime + \gls{gmp:b} + \gls{gmp:noise}
\)
with $\gmp_0 \sim \gaussian{\gmpmean_0}{\gmpcov_0}$ and $\gmpnoise_\idxdtime \sim \gaussian{\vec{0}}{\gls{gmp:noisecov}}$ from a given set of noisy \emph{observations} $\set{\gls{obs:vec}}_{\idxdtime = 1}^\gmplen$ made through the \emph{observation model}
\(
\gls{obs:rv}
\defeq \gls{obs:H} \gmp_\idxdtime + \gls{obs:noise} \in \R^{\gls{obs:dim}}.
\)
with $\obsnoise_\idxdtime \sim \gaussian{\vec{0}}{\gls{obs:noisecov}}$.
Collectively, the dynamics and observation models are referred to as a \emph{linear-Gaussian state-space model} (LGSSM).
The \emph{initial state} $\gmp_0$, the \emph{process noise} $\set{\gmpnoise_\idxdtime}_{\idxdtime = 0}^{\gmplen - 1}$, and the \emph{observation noise} $\set{\obsnoise_\idxdtime}_{\idxdtime = 1}^\gmplen$ are pairwise independent.
One can show that $\gmp_\idxdtime \sim \gaussian{\gls{gmp:mean}}{\gls{gmp:cov}}$ with $\gmpmean_{\idxdtime + 1} = \gmpA_\idxdtime \gmpmean_\idxdtime + \gmpb_\idxdtime$ and $\gmpcov_{\idxdtime + 1} = \gmpA_\idxdtime \gmpcov_\idxdtime \gmpA_\idxdtime\T + \gmpnoisecov_\idxdtime$.
The Kalman filter is an algorithm for computing conditional distributions of the form $\condrv{\gmp_\idxdtime \given \obsrv_{1:\idxdtime} = \obs_{1:\idxdtime}}$, $\idxdtime = 1, \dots, \gmplen$.
It alternates recursively between computing the moments
\begin{align*}
  \gls{kf:pmean} & \defeq \gmpA_{\idxdtime-1} \kfmean_{\idxdtime-1} + \gmpb_{\idxdtime-1}                             \\
  \gls{kf:pcov}  & \defeq \gmpA_{\idxdtime-1} \kfcov_{\idxdtime-1} \gmpA_{\idxdtime-1}\T + \gmpnoisecov_{\idxdtime-1}
\end{align*}
of
\(
\gmp_\idxdtime \mid \obsrv_{1:\idxdtime-1} = \obs_{1:\idxdtime - 1}
\sim \gaussian{\kfpmean_\idxdtime}{\kfpcov_\idxdtime}
\)
in the \emph{predict step} and the moments
\begin{subequations}
  \label{eq:kf-update-step}
  \begin{align}
    \gls{kf:mean} & \defeq \kfpmean_\idxdtime + \kfpcov_\idxdtime \obsH_\idxdtime \kfgram_\idxdtime^{-1} (\obsrv_\idxdtime - \obsH_\idxdtime \kfpmean_\idxdtime) \label{eq:kalman_filtering_mean} \\
    \gls{kf:cov}  & \defeq \kfpcov_\idxdtime - \kfpcov_\idxdtime \obsH_\idxdtime \kfgram_\idxdtime^{-1} \obsH_\idxdtime\T \kfpcov_\idxdtime \label{eq:kalman_filtering_cov}
  \end{align}
\end{subequations}
of
\(
\gmp_\idxdtime \mid \obsrv_{1:\idxdtime} = \obs_{1:\idxdtime}
\sim \gaussian{\kfmean_\idxdtime}{\kfcov_\idxdtime}
\)
in the \emph{update step}, where $\gls{kf:gram} \defeq \obsH_\idxdtime \kfpcov_\idxdtime \obsH_\idxdtime\T + \obsnoisecov_\idxdtime$ is the \emph{innovation matrix}.
The conditional distributions computed by the filter are mainly useful for forecasting.
Interpolation requires the full Bayesian posterior
\(
\condrv{\gmp_\idxdtime \given \obsrv_{1:\gmplen} = \obs_{1:\gmplen}}
\sim \gaussian{\ksmean_\idxdtime}{\kscov_\idxdtime}.
\)
Its moments can be computed from the filter moments via the Rauch--Tung--Striebel (RTS) smoother recursion
\begin{align*}
  \gls{ks:mean} & \defeq \kfmean_\idxdtime + \ksgain_\idxdtime (\ksmean_{\idxdtime + 1} - \kfpmean_{\idxdtime + 1})                    \\
  \gls{ks:cov}  & \defeq \kfcov_\idxdtime + \ksgain_\idxdtime (\kscov_{\idxdtime + 1} - \kfpcov_{\idxdtime + 1}) (\ksgain_\idxdtime)\T
\end{align*}
with $\gls{ks:gain} \defeq \kfcov_\idxdtime \gmpA_\idxdtime\T (\kfpcov_{\idxdtime + 1})\inv$, $\ksmean_{\gmplen} = \kfmean_{\gmplen}$, and $\kscov_{\gmplen} = \kfcov_{\gmplen}$.
See \citet{Sarkka2023BayesianFiltering} for an in-depth introduction to Kalman filtering and RTS smoothing.

\subsection{Spatiotemporal Regression}
A major application of state-space models is spatiotemporal Gaussian Process (GP) regression \citep{Hartikainen2010KalmanFiltering,Sarkka2013SpatiotemporalLearning}.
Suppose we aim to learn a function \(\gls{str:targetfn} : \gls{str:temporalinputspace} \times \gls{str:spatialinputspace} \to \R\) from training data \(\{((\gls{str:ttrain}, \gls{str:xstrain}), \obs_\idxdtime)\}_{\idxdtime=1}^{\ntstrain}\) with \(\ttrain{\idxdtime} \in \temporalinputspace\), \(\xstrain{\idxdtime} \in \spatialinputspace^{\obsdim_\idxdtime}\), and \(\obs_\idxdtime \in \R^{\obsdim_\idxdtime}\).
We assume a GP prior \(\gls{str:gpprior} \sim \gp{\gls{str:meanfn}}{\gls{str:covfn}}\) for $\strtargetfn$.
If the multi-output GP $\gls{stsgmp}(t, \vx) \defeq (\partial_t^i \gpprior(t, \vx))_{i = 0}^{\outputdim - 1} \in \R^{\gls{stsgmp:outputdim}}$ defined by \(\gpprior\) and \(\outputdim - 1\) of its time derivatives is a \emph{space-time separable Gauss--Markov process}\footnotemark (STSGMP), then one can construct an equivalent linear-Gaussian state-space model (see \cref{lem:discretized-stsgmp-is-markov}) with \(\gmp_\idxdtime \defeq \stsgmp(\ttrain{\idxdtime}, \xsall) \in \R^{\outputdim \times \nxsall}\), where \(\gls{str:xsall} \in \spatialinputspace^{\gls{str:nxsall}}\) such that $\xstrain{\idxdtime} \subset \xsall$ for all $\idxdtime = 1, \dotsc, \gmplen$.
\footnotetext{%
  See \cref{sec:st-gpr-ssm}.
  While not every GP prior induces an STSGMP, a broad class of common spatiotemporal models do (see \cref{rem:gp-priors-as-stsgmps} for details).
}
Therefore, assuming that $\xstrain{\idxdtime} = \xsall$ and $\obsdim_\idxdtime = \nxsall$, the computational cost of spatiotemporal GP regression can be reduced from \(\mathcal{O}(\ntstrain^3\nxsall^3)\) to \(\mathcal{O}(\ntstrain\nxsall^3)\) via Bayesian filtering and smoothing.

  \section{COMPUTATION-AWARE KALMAN FILTERING}
\label{sec:matrix-free-kalman-filtering}
While filtering and smoothing are efficient in time, they scale prohibitively with the dimension \(\gmpdim\) of the state space.
Direct implementations of the Kalman filter incur two major computational challenges that are addressed with the CAKF:
\begin{enumerate}[label=(C\arabic*)]
  \item \label{ch:statecov-space}
        The state covariances $\kfpcov_\idxdtime, \kfcov_\idxdtime \in \R^{\gmpdim \times \gmpdim}$ need to be stored in memory, requiring $\mathcal{O}(\gmpdim^2)$ space.
  \item \label{ch:gram-inv}
        The inversion of the innovation matrix $\kfgram_\idxdtime \in \R^{\obsdim_\idxdtime \times \obsdim_\idxdtime}$ costs $\mathcal{O}(\obsdim_\idxdtime^3)$ time and $\mathcal{O}(\obsdim_\idxdtime^2)$ space.
\end{enumerate}
Both of these quickly become prohibitive if $\gmpdim$ and/or $\obsdim_\idxdtime$ is large.
To mitigate these costs, we apply iterative, matrix-free linear algebra in the Kalman recursion.

\subsection{From Matrix-y to Matrix-Free}
\label{sec:projection}
We start by noting that the Kalman filter's update step at time $\idxdtime$ conditions the predictive belief $\condrv{\gmp_\idxdtime \mid \obsrv_{1:\idxdtime-1} = \obs_{1:\idxdtime - 1}}$ on the observation that $\obsrv_\idxdtime = \obs_\idxdtime$.
To reduce both the time and memory complexity of the update step, we project both sides of the observation onto a low-dimensional subspace:
\(
\gls{cakf:projobsrv}
\defeq \cakfacts_\idxdtime\T \obsrv_\idxdtime
= \cakfacts_\idxdtime\T \obs_\idxdtime
\rdefeq \gls{cakf:projobs},
\)
where $\gls{cakf:acts} \in \R^{\obsdim_\idxdtime \times \gls{cakf:projobsdim}}$ with $\cakfprojobsdim_\idxdtime \ll \obsdim_\idxdtime$.
The corresponding modified observation model then reads
\begin{equation}
  \label{eqn:proj-observation}
  \cakfprojobsrv_\idxdtime = \underbracket[0.1ex]{\cakfacts_\idxdtime\T \obsH_\idxdtime}_{\rdefeq \gls{cakf:projH}} \gmp_\idxdtime + \underbracket[0.1ex]{\cakfacts_\idxdtime\T \obsnoise_\idxdtime}_{\rdefeq \gls{cakf:projobsnoise}} \in \R^{\cakfprojobsdim_\idxdtime},
\end{equation}
where $\cakfprojobsnoise_\idxdtime \sim \gaussian{\vec{0}}{\cakfprojobsnoisecov_\idxdtime}$ with $\gls{cakf:projobsnoisecov} := \cakfacts_\idxdtime\T \obsnoisecov_\idxdtime \cakfacts_\idxdtime$.
Consequently, the modified filtering equations can be obtained from \eqref{eq:kf-update-step} by substituting $\obs_\idxdtime \mapsto \cakfprojobs_\idxdtime$, $\obsH_\idxdtime \mapsto \cakfprojH_\idxdtime$ and $\obsnoisecov_\idxdtime \mapsto \cakfprojobsnoisecov_\idxdtime$.
The inversion of the innovation matrix $\gls{cakf:projgram} \in \R^{\cakfprojobsdim \times \cakfprojobsdim}$ in the projected update step then costs $\mathcal{O}(\cakfprojobsdim_\idxdtime^3)$ time and $\mathcal{O}(\cakfprojobsdim_\idxdtime^2)$ memory, which solves \ref{ch:gram-inv}.

Since $\cakfacts_\idxdtime$ is not square, the projection results in a loss of information and the filtering moments $\set{\gls{cakf:mean}, \gls{cakf:cov}}_{\idxdtime = 0}^{\gmplen}$ and $\set{\gls{cakf:pmean}, \gls{cakf:pcov}}_{\idxdtime = 0}^{\gmplen}$ obtained from the Kalman recursion with the modified observation model \labelcref{eqn:proj-observation} are only approximations of the corresponding moments from the unmodified Kalman recursion.
We can choose the columns of $\cakfacts_\idxdtime$, the \emph{actions}, such that they retain the most informative parts of the observation, keeping the approximation error small; more on this in \cref{sec:conjugate_policy}.
Moreover, we show in \cref{sec:theory} that the approximation error in the state mean will be accounted for by an increase in the corresponding state covariance and hence our inference procedure is \emph{computation-aware} (in the sense of \cite{Wenger2022PosteriorComputational}).
In essence, this is because we made the projection onto $\cakfacts_\idxdtime$ part of the modified observation model \labelcref{eqn:proj-observation}, i.e., the likelihood accounts for the fact that we do not observe the data $\obs_\idxdtime$ in the orthogonal complement of $\linspan{\cakfacts_\idxdtime}$.
The resulting posterior is sometimes called a \emph{partial posterior}, and such posteriors are known to provide sensible uncertainty quantification under certain technical assumptions \citep{Cockayne2022Calibrated}.

Since $\cakfprojobsdim_\idxdtime \ll \obsdim_\idxdtime$, one can show that the updated state covariance $\cakfcov_\idxdtime$ under the modified observation model differs from the corresponding predictive state covariance $\cakfpcov_\idxdtime$ by a low-rank downdate
\(
\cakfcov_\idxdtime = \cakfpcov_\idxdtime - \cakfpcov_\idxdtime \cakfprojH_\idxdtime\T \cakfprojgramlsqrt_\idxdtime (\cakfpcov_\idxdtime \cakfprojH_\idxdtime\T \cakfprojgramlsqrt_\idxdtime)\T,
\)
where $\gls{cakf:projgramlsqrt} \in \R^{\cakfprojobsdim_\idxdtime \times \cakfprojobsdim_\idxdtime}$ with $\cakfprojgramlsqrt_\idxdtime \cakfprojgramlsqrt_\idxdtime\T = \cakfprojgram_\idxdtime\inv$.
It turns out that the recursion for the state covariances in the Kalman filter is compatible with the low-rank downdate structure.
More precisely, in \cref{prop:kf-ddcov}, we show that %
\begin{align*}
  \cakfpcov_\idxdtime & = \gmpcov_\idxdtime - \cakfpdd_\idxdtime (\cakfpdd_\idxdtime)\T, \\
  \cakfcov_\idxdtime  & = \gmpcov_\idxdtime - \cakfdd_\idxdtime \cakfdd_\idxdtime\T,
\end{align*}
where \(\gls{cakf:pdd} = \gmpA_{\idxdtime - 1} \cakfdd_{\idxdtime - 1}\) and \(\gls{cakf:dd}  = (\cakfpdd_\idxdtime \ \ \ \cakfpcov_\idxdtime \cakfprojH_\idxdtime\T \cakfprojgramlsqrt_\idxdtime)\).
Incidentally, this observation solves \ref{ch:statecov-space}: When implementing the CAKF using the recursions from \cref{prop:kf-ddcov}, we only need access to matrix-vector products $\gmpcov_\idxdtime \vv$, $\gmpA_\idxdtime \vv$, $\obsH_\idxdtime\T \vv$, and $\obsnoisecov_\idxdtime \vv$ with $\cakfprojobsdim_\idxdtime + 1$ vectors $\vv$.
In many cases, such matrix-vector products can be efficiently implemented or accurately approximated in a ``matrix-free'' fashion, i.e., without needing to store the matrix in memory, at cost (much) less than $\mathcal{O}(\gmpdim^2)$ space and sometimes less than $\mathcal{O}(\gmpdim^2)$ time (though matrix-free time complexity is often higher for dense matrices).
For instance, this is possible if $\set{\gmp_\idxdtime}_{\idxdtime = 0}^{\gmplen}$ is a discretization of a continuous-time space-time separable Gauss--Markov process with known covariance function, in which case an expression for the entries $(\gmpcov_k)_{ij}$ of the state covariance is known.
We emphasize the matrix-free implementation of our algorithm by highlighting matrices that are not instantiated in memory (e.g., $\matfree{\cakfpcov_\idxdtime}$)
in \cref{alg:mfkf,alg:projected_update,alg:mfks}.
Assuming the rank of the downdates in \cref{prop:kf-ddcov} is small (c.f.~\cref{sec:truncation}), such a matrix-free implementation of a Kalman filter incurs linear memory cost per time step.

Pseudocode for the procedure outlined above can be found in \cref{alg:mfkf,alg:projected_update}.
In the algorithm, the \textsc{IsMissing} line is included to allow a user to make predictions for intermediate states $k$ for which there is no associated data.
The choice of $\cakfacts_\idxdtime$ is given by a state-dependent \textsc{Policy}.
In \cref{alg:projected_update}, $\cakfacts_\idxdtime$ is selected in batch through a single call to \textsc{Policy} at the beginning of each update step.
This is mostly presented for clarity; in practice we implement the update step as shown in \cref{alg:update_pls}.
\Cref{alg:update_pls} can be derived as successive conditioning of $\cakfpstate_\idxdtime$ on the events $\inprod{\cakfact_\idxdtime^{(i)}}{\obsrv_\idxdtime}[2] = \inprod{\cakfact_\idxdtime^{(i)}}{\obs_\idxdtime}[2]$ for $i = 1, \dotsc, \cakfprojobsdim_\idxdtime$, where the actions $\gls{cakf:act}$ form the columns of $\cakfacts_\idxdtime$.
One can show that this is equivalent to conditioning on $\cakfacts_\idxdtime\T \obsrv_\idxdtime = \cakfacts_\idxdtime\T \obs_\idxdtime$.
However, such a sequential selection of the actions through calls to \textsc{Policy} that can access the current state of the iteration (e.g., through data residuals) allows the actions to adapt to the problem more effectively.

\begin{algorithm}
  \caption{Computation-Aware Kalman Filter (CAKF)}
  \label{alg:mfkf}
  \begin{algorithmic}[0]
    \small
    \Function{Filter}{$\cakfmean_0, \{ \gmpcov_\idxdtime, \gmpA_\idxdtime, \gmpb_\idxdtime, \obsH_\idxdtime, \obsnoisecov_\idxdtime, \obs_\idxdtime \}_{\idxdtime = 0}^{\gmplen}$}
      \State $\cakftdd_0 \gets (\quad) \in \R^{\gmpdim \times 0}$
      \For{$\idxdtime = 1, \dotsc, \gmplen$}
        \State $\just{\cakfpdd_\idxdtime}{\cakfpmean_\idxdtime} \gets \matfree{\gmpA_{\idxdtime - 1}}{\cakfmean_{\idxdtime - 1}} + \gmpb_{\idxdtime - 1}$ \Comment{Predict}
        \State $\cakfpdd_\idxdtime \gets \matfree{\gmpA_{\idxdtime - 1}}{\cakftdd_{\idxdtime - 1}}$
        \If{\(\neg \Call{IsMissing}{\obs_\idxdtime}\)} %
          \State \(
          \cakfmean_\idxdtime, \cakfdd_\idxdtime
          \gets
          \Call{Update}{
            \cakfpmean_\idxdtime, \cakfpdd_\idxdtime,
            \dotsc %
          }
          \)
        \Else
          \State $\cakfmean_\idxdtime, \cakfdd_\idxdtime \gets \cakfpmean_\idxdtime, \cakfpdd_\idxdtime$
        \EndIf
        \State $\cakftdd_\idxdtime \gets \Call{Truncate}{\cakfdd_\idxdtime}$
      \EndFor
      \State \textbf{return} \( \{ \cakfmean_\idxdtime, \cakfdd_\idxdtime \}_{\idxdtime = 0}^{\gmplen} \)
    \EndFunction
  \end{algorithmic}
\end{algorithm}

\begin{algorithm}
  \caption{CAKF Update Step}
  \label{alg:projected_update}
  \begin{algorithmic}[0]
    \small
    \vspace{-.55ex}
    \Function{Update}{$\cakfpmean, \cakfpdd, \gmpcov, \obsH, \obsnoisecov, \obs$}
      \State $\just{\cakfprojH\T}{\matfree{\cakfpcov}} \gets \matfree{\gmpcov - \cakfpdd (\cakfpdd)\T}$
      \State $\just{\cakfprojH\T}{\cakfacts} \gets \Call{Policy}{\cakfpmean, \matfree{\cakfpcov}, \dotsc}$
      \State $\just{\cakfprojH\T}{\cakfprojH\T} \gets \matfree{\obsH\T}{\cakfacts}$ %
      \State $\just{\cakfprojH\T}{\cakfprojobsnoisecov} \gets \cakfacts\T \matfree{\obsnoisecov}{\cakfacts}$
      \State $\just{\cakfprojH\T}{\cakfprojobs} \gets \cakfacts\T \obs$
      \State $\just{\cakfprojH\T}{\cakfprojgram} \gets \cakfprojH \matfree{\cakfpcov}{\cakfprojH\T} + \cakfprojobsnoisecov$
      \State $\just{\cakfprojH\T}{\cakfprojgramlsqrt} \gets \Call{lsqrt}{\cakfprojgram\pinv}$
      \State $\just{\cakfprojH\T}{\glslink{cakf:w}{\cakfw}} \gets \cakfprojH\T \cakfprojgram\pinv (\cakfprojobs - \cakfprojH \cakfpmean)$
      \State $\just{\cakfprojH\T}{\glslink{cakf:W}{\cakfW}} \gets \cakfprojH\T \cakfprojgramlsqrt$
      \State $\just{\cakfprojH\T}{\cakfmean} \gets \cakfpmean + \matfree{\cakfpcov}{\cakfw}$
      \State \(
      \just{\cakfprojH\T}{\cakfdd}
      \gets
      \begin{pmatrix}
        \cakfpdd & \matfree{\cakfpcov}{\cakfW}
      \end{pmatrix}
      \)
      \State \textbf{return} $(\cakfmean, \cakfdd)$
    \EndFunction
  \end{algorithmic}
\end{algorithm}

\subsection{Downdate Truncation} \label{sec:truncation}

While the algorithm is matrix-free in the sense of not needing to compute and store $\gmpdim \times \gmpdim$ matrices, the accumulation of the downdate matrices $\cakfdd_\idxdtime$ results in an $\mathcal{O}(\gmpdim \sum_{l = 1}^\idxdtime \cakfprojobsdim_l)$ memory cost at step $\idxdtime$, which can easily exceed $\mathcal{O}(\gmpdim^2)$ as $\idxdtime$ grows.
To address this, we introduce an optimal truncation of the downdate matrices in the \textsc{Truncate} procedure to control the memory requirements of the algorithm.

Consider the square root $\cakfdd_\idxdtime \in \R^{\gmpdim \times \gls{cakf:ddrank}}$ of a belief covariance downdate.
We truncate the downdate matrices by selecting $\gls{cakf:tddrank} \le \cakfddrank_\idxdtime$ (typically $\cakftddrank_\idxdtime \ll \cakfddrank_\idxdtime$), $\gls{cakf:tdd} \in \R^{\gmpdim \times \cakftddrank_\idxdtime}$, and $\mN_\idxdtime \in \R^{\gmpdim \times (\cakfddrank_\idxdtime - \cakftddrank_\idxdtime)}$ such that
\(
\cakfdd_\idxdtime \cakfdd_\idxdtime\T = \cakftdd_\idxdtime \ps{\cakftdd_\idxdtime}\T + \mN_\idxdtime \mN_\idxdtime\T
\)
and approximating $\cakfdd_\idxdtime \approx \cakftdd_\idxdtime$ as well as
\(
\cakfcov_\idxdtime
\approx \gls{cakf:tcov}
\defeq \gmpcov_\idxdtime - \cakftdd_\idxdtime \ps{\cakftdd_\idxdtime}\T.\)
Noting that
\(
\cakftcov_\idxdtime =\cakfcov_\idxdtime + \mN_\idxdtime \mN_\idxdtime\T
\),
we realise that the truncation of the downdate can be interpreted as the addition of independent \emph{computational uncertainty} \citep{Wenger2022PosteriorComputational}: additional noise $\rvq^\text{comp}_\idxdtime \sim \gaussian{\vec{0}}{\mN_\idxdtime \mN_\idxdtime\T}$ added to the posterior covariance to account for uncertainty due to incomplete computation, in this case, truncation.
To represent this, we augment the dynamics model with additional prior states $\smash{\gmptstate{\idxdtime}, \gmppstate{\idxdtime}}$ as visualized in \cref{fig:pgm}, such that
\(
\gls{gmp:tstate} \defeq \gmp_\idxdtime + \smash{\rvq^\text{comp}_\idxdtime},
\)
\(
\smash{\gmppstate{\idxdtime + 1}} \defeq \gmpA_\idxdtime \smash{\gmptstate{\idxdtime}} + \gmpb_\idxdtime + \gmpnoise_\idxdtime,
\)
and
\(
\gmp_\idxdtime \defeq \smash{\gmppstate{\idxdtime}}.
\)
Even though the truncation leads to a further approximation of the state beliefs, this approximation will be \emph{conservative}, which directly follows from the computational noise interpretation above.

We truncate by computing a singular-value decomposition of $\cakfdd_\idxdtime \cakfdd_\idxdtime\T$, and dropping the subspace corresponding to the smallest singular vectors.
By the Eckart--Young--Mirsky theorem \citep{Mirsky1960}, this truncation is optimal with respect to all unitarily invariant matrix norms.
The effect of rank truncation is that at most $\mathcal{O}(d \cakftddrank_\idxdtime)$ memory is required to store the downdate matrices, and that the cost of computing matrix-vector products with the truncated covariance $\cakftcov_\idxdtime$ is at most $\mathcal{O}(\rho_\idxdtime + d \cakftddrank_\idxdtime)$, where $\rho_\idxdtime$ is the cost of computing a matrix-vector product with $\gmpcov_\idxdtime$.

\subsection{Choice of Policy} \label{sec:conjugate_policy}

It remains to specify a \textsc{Policy} defining the actions \(\cakfacts_\idxdtime\).
This can have a significant impact on the algorithm, both from the perspective of how close the CAKF states are to the states of the true Kalman filter and in terms of its computational cost.
Heuristically, we would like to make $\cakfprojobsdim_k$ as small as possible while keeping $\gaussian{\cakfmean_\idxdtime}{\cakfcov_\idxdtime}$ close to $\gaussian{\kfmean_\idxdtime}{\kfcov_\idxdtime}$.
We discuss and compare a number of natural policy choices in more detail in \Cref{sec:details-policy-choice}.
In the experiments in \cref{sec:experiments} we exclusively use Lanczos/CG-based directions, corresponding to choosing the current residual $\smash{\cakfresidual_\idxdtime\iidx}$ as the action in iteration $i$ of \cref{alg:update_pls}, i.e., $\textsc{Policy}(i, \dotsc) = \smash{\cakfresidual_\idxdtime\iidx}$.
We found these to perform well empirically compared to other choices (see \Cref{fig:work-precision-policies}), and similar policies have been found effective in related work \citep{Cockayne2019BayesCG,Wenger2022PosteriorComputational}.

  \section{COMPUTATION-AWARE RTS SMOOTHING}
\label{sec:caks}
\begin{algorithm}
  \caption{Computation-Aware RTS Smoother (CAKS)}
  \label{alg:mfks}
  \begin{algorithmic}[0]
    \small
    \Function{Smooth}{$\{ \dotsc, \cakfmean_\idxdtime, \cakfdd_\idxdtime, \cakfw_\idxdtime, \cakfW_\idxdtime, \dotsc \}_{\idxdtime = 1}^{\gmplen}$}
      \State $\just{\caksW_{\gmplen}}{\caksw_{\gmplen}} \gets \cakfw_{\gmplen}$
      \State $\caksW_{\gmplen} \gets \cakfW_{\gmplen}$
      \For{$\idxdtime = \gmplen-1, \dotsc, 0$}
        \State $\just{\caksW_{\gmplen}}{\gls{caks:mean}} \gets \cakfmean_\idxdtime + \matfree{\cakfcov_\idxdtime \gmpA_\idxdtime\T}{\caksw_{\idxdtime+1}}$
        \State $\just{\caksW_{\gmplen}}{\gls{caks:dd}} \gets
          \begin{pmatrix}
            \cakfdd_\idxdtime & \matfree{\cakfcov_\idxdtime \gmpA_\idxdtime\T}{\caksW_{\idxdtime+1}}
          \end{pmatrix}
        $
        \State $\just{\caksW_{\gmplen}}{\gls{caks:w}} \gets \cakfw_\idxdtime + \matfree{(\mI - \cakfW_\idxdtime \cakfW_\idxdtime\T \cakfpcov_\idxdtime) \gmpA_\idxdtime\T}{\caksw_{\idxdtime+1}}$
        \State $\just{\caksW_{\gmplen}}{\caksW_\idxdtime} \gets
          \begin{pmatrix}
            \cakfW_\idxdtime &
            \matfree{(\mI - \cakfW_\idxdtime \cakfW_\idxdtime\T \cakfpcov_\idxdtime) \gmpA_\idxdtime\T}{\caksW_{\idxdtime+1}}
          \end{pmatrix}
        $
        \State $\just{\caksW_{\gmplen}}{\gls{caks:W}} \gets \Call{Truncate}{\caksW_\idxdtime}$
      \EndFor
      \State \textbf{return} \( \{ \caksmean_\idxdtime, \caksdd_\idxdtime \}_{\idxdtime = 0}^{\gmplen} \)
    \EndFunction
  \end{algorithmic}
\end{algorithm}

If the state-space dimension $\gmpdim$ is large, naive implementations of the RTS smoother face similar challenges to those outlined for the Kalman filter in \cref{sec:matrix-free-kalman-filtering}.
This is due to the \emph{smoother gain} matrices $\ksgain_\idxdtime$ defined in \cref{sec:lgssm} needing to be stored and inverted at $\mathcal{O}(\gmpdim^3)$ time and $\mathcal{O}(\gmpdim^2)$ memory cost.
Fortunately, we can apply a similar strategy to \cref{sec:projection} to make the smoother matrix-free.
Specifically, in \cref{prop:inverse-free-smoother} we show that the mean and covariance of the RTS smoother can be computed from quantities precomputed in the Kalman filter, i.e., without the need to compute any additional inverses:
\(
\ksmean_\idxdtime = \kfmean_\idxdtime + \kfcov_\idxdtime \gmpA_\idxdtime\T \ksw_\idxdtime
\)
and
\(
\kscov_\idxdtime = \gmpcov_\idxdtime - \ksdd_\idxdtime(\ksdd_\idxdtime)\T
\)
with
\(
\ksdd_\idxdtime =
\begin{pmatrix}
  \kfdd_\idxdtime & \kfcov_\idxdtime \gmpA_\idxdtime\T \ksW_\idxdtime
\end{pmatrix}
,
\)
where recursive expressions for $\ksw_\idxdtime$ and $\ksW_\idxdtime$ are given in \cref{eq:smooth_w,eq:smooth_W}.
Hence, just as for the filtering covariances, the smoother covariances take the form of a downdated prior covariance.
The terms $\ksw_\idxdtime$ and $\ksW_\idxdtime$ can be efficiently computed from quantities cached in \cref{alg:mfkf,alg:projected_update} without materializing any $\mathcal{O}(\gmpdim^2)$ matrices in memory.
Applying \cref{prop:inverse-free-smoother}\footnote{We would like to point out that \cref{prop:inverse-free-smoother} may be of independent interest since it is (to the best of our knowledge) a novel result about the RTS smoother that can be used as an alternative to the standard RTS smoothing recursions for increased numerical stability.
} in matrix-free form to the modified state-space model of the CAKF introduced in \cref{sec:matrix-free-kalman-filtering} yields \cref{alg:mfks} -- the \emph{computation-aware RTS smoother} (CAKS).

While the cost of filtering is reduced by the low-dimensional projection of the data, the same does not hold for the smoother.
Examining \cref{alg:projected_update} we see that $\cakfpcov_\idxdtime$ only appears in a product with $\cakfprojH$, whereas in \cref{alg:mfks} products with $\cakfpcov_\idxdtime$ appear directly.
It is also necessary to truncate the directions $\caksW_\idxdtime$ accumulated over the course of the smoother to mitigate a further $\mathcal{O}(\gmpdim \sum_{l = k}^{\gmplen} \cakfprojobsdim_l)$ storage cost.
This is implemented using the same procedure %
as in \cref{sec:truncation}.

In \cref{sec:matheron-sampling} we use Matheron's rule to derive an algorithm for sampling from the posterior process, i.e., the CAKS states $\caksstate_\idxdtime$, pseudocode for which is given in \cref{alg:cakf-caks-sampler}.
Counterintuitively, \cref{alg:cakf-caks-sampler} can draw samples from the smoother states without the requirement to run the CAKS, since all necessary quantities have already been computed in the CAKF.

In \cref{sec:temporal-interpolation} we show that both the CAKF and CAKS can also be applied if the dynamics model is a continuous-time Gauss--Markov process.
In this case, the CAKS provides efficient access to intermediate posterior states $\condrv{\gmp(t) \given \obsrv_{1:\gmplen} = \obs_{1:\gmplen}}$ for arbitrary $t \in \mathbb{T}$.

  \section{THEORETICAL ANALYSIS}
\label{sec:theory}

\vspace{-0.5em} %

\subsection{Computational Complexity}
\label{sec:computational-complexity}
\vspace{-0.5em} %
As mentioned in \cref{sec:matrix-free-kalman-filtering}, the CAKF and CAKS assume that we can efficiently evaluate matrix-vector products with $\gmpcov_\idxdtime$, $\gmpA_\idxdtime$, $\obsH_\idxdtime\T$, and $\obsnoisecov_\idxdtime$, without synthesizing the matrices in memory.
More precisely, we assume that matrix-vector products with these matrices can be computed with a memory complexity linear in the larger of their two dimensions and with the same worst-case time complexity.

\textbf{Filtering}
The CAKF predict step at time $\idxdtime$ costs at most
\(
\mathcal{O}(\gmpdim^2 \cakftddrank_{\idxdtime - 1})
\)
time and
\(
\mathcal{O}(\gmpdim \cakftddrank_{\idxdtime - 1})
\)
memory.
The CAKF update step at time $\idxdtime$ costs at most
\(
\mathcal{O} \ps[\big]{
  (
  \gmpdim \obsdim_\idxdtime
  + \obsdim_\idxdtime^2
  + \gmpdim^2
  ) \cakfprojobsdim_\idxdtime
}
\)
time and
\(
\mathcal{O} \ps[\big]{
  (
  \obsdim_\idxdtime
  + \gmpdim
  )
  \cakfprojobsdim_\idxdtime
}
\)
memory.
The SVD downdate truncation has a time complexity of at most $\mathcal{O}(\gmpdim (\cakftddrank_{\idxdtime - 1} + \cakfprojobsdim_\idxdtime)^2)$.

\textbf{Smoothing}
CAKS\;iteration\;$\idxdtime$\;costs %
\(
\mathcal{O} \ps[\big]{
  \gmpdim (
  \gmpdim
  + \cakfprojobsdim_\idxdtime
  ) \cakftddrank_{\idxdtime + 1}
  + \gmpdim \cakfprojobsdim_\idxdtime^2
}
\)
time and
\(
\mathcal{O} \ps[\big]{
  (
  \gmpdim
  + \cakfprojobsdim_\idxdtime
  ) \cakftddrank_{\idxdtime + 1}
}
\)
memory.

\textbf{Simplified Complexities}
In practice, especially for spatiotemporal GP regression, it virtually always holds that $\gmpdim = \mathcal{O}(\obsdim_\idxdtime)$.
With this assumption, the time and memory complexities of the CAKF update step simplify to $\mathcal{O}(\gmpdim^2 \cakfprojobsdim_\idxdtime)$ and $\mathcal{O}(\gmpdim \cakfprojobsdim_\idxdtime)$, respectively.
Similarly, iteration $\idxdtime$ of the smoother then costs $\mathcal{O}(\gmpdim^2 (\cakftddrank_{\idxdtime + 1} + \cakfprojobsdim_\idxdtime))$ time and $\mathcal{O}(\gmpdim \cakftddrank_{\idxdtime + 1})$ memory.
It is also sometimes desirable to set $\cakftddrank_{\idxdtime} \le \cakfmaxtddrank$ and $\cakfprojobsdim_\idxdtime \le \cakfmaxprojobsdim$, i.e., uniform in $\idxdtime$, with $\cakfmaxtddrank = \mathcal{O}(\cakfmaxprojobsdim)$.
In this case, running both CAKF and CAKS for $\gmplen$ time steps results in \textbf{worst-case time and memory complexities of \( \mathcal{O} \ps[\big]{\gmplen \gmpdim^2 \cakfmaxprojobsdim} \) and \( \mathcal{O} \ps[\big]{ \gmplen \gmpdim \cakfmaxprojobsdim}\)}.

\subsection{Error Bound for Spatiotemporal Regression}

It is important to understand the impact of the approximations made by the CAKF and CAKS on the resulting predictions.
So far we have argued informally, that the additional uncertainty of the CAKS captures the approximation error.
We will now make this statement rigorous for the case of spatiotemporal regression.

\begin{restatable}[Pointwise Worst-Case Prediction Error]{theorem}{thmErrorBoundCAKF}
  \label{thm:pointwise-error-cakf}
  Let \(\inputspace = \temporalinputspace \times \spatialinputspace\) and define a space-time separable Gauss--Markov process \(\stsgmp \sim \gp{\stsgmpmeanfn}{\stsgmpcovfn}\) such that its first component \(\gpprior \defeq \stsgmp_0 \sim \gp{\meanfn}{\covfn}\) defines a prior for the latent function \(\strtargetfn \in \rkhs{\covfn}\) generating the data, assumed to be an element of the RKHS defined by \(\gls{str:rkhs}\).
  Given observation noise \(\gls{str:noisescale}^2\geq 0\), let \(\gls{str:obsfn}(\cdot) \in \rkhs{\noisycovfn}\) be the observed process with \(\gls{str:noisycovfn}(\vz, \vz') \defeq \covfn(\vz, \vz') + \sigma^2 \delta(\vz, \vz')\).
  Given training inputs \(\gls{str:zstrain} \in \inputspace^{\gls{str:ntraindata}}\) and targets \(\gls{str:ystrain} = \strobsfn(\zstrain) \in \R^{\ntraindata}\), let \(\caksmean(\vz \mid \strobsfn)\) and \(\cakscov(\vz)\) be the mean and variance of the CAKS for an arbitrary test input \(\vz = (t, \vx) \in \inputspace \setminus \zstrain\).
  Then
  \begin{equation}
    \label{eqn:pointwise-error-cakf}
    \sup\limits_{y \in \rkhs{\noisycovfn} \setminus \set{0}} \frac{\abs{y(\vz) - \caksmean(\vz \mid y)_0}}{\norm{y}_{\rkhs{\noisycovfn}}} = \sqrt{\cakscov(\vz)_{0, 0} + \sigma^2}.
  \end{equation}
  If \(\noisescale^2=0\), this also holds for training inputs \(\vz \in \zstrain\).
\end{restatable}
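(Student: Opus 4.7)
The plan is to interpret the CAKS posterior as the result of exact Bayesian inference in a suitably augmented Gaussian-process model, and then invoke the classical duality between GP posterior variance and worst-case RKHS error (a version of which appears for iterative GP approximations in \citep{Wenger2022PosteriorComputational}; see also \citep{Cockayne2022Calibrated}). The STSGMP--state-space equivalence (\cref{lem:discretized-stsgmp-is-markov}) lifts the discrete-time state-space calculation back to a spatiotemporal GP calculation on $\inputspace$, so the whole argument can be stated in the GP picture.

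First I would express $\caksmean(\vz \mid y)_0$ as a bounded linear functional of $y^\star$ evaluated at training inputs. By \cref{sec:projection,sec:truncation}, the CAKS is exact Kalman filtering/smoothing in an augmented state-space model in which each observation $\obs_k$ is replaced by its projection $\cakfacts_k\T \obs_k$ and computational noise $\rvq^\text{comp}_k \sim \gaussian{\vec{0}}{\mN_k \mN_k\T}$ is injected between $\gmp_k$ and $\gmp_{k+1}$ at each downdate truncation. Under the STSGMP lift this corresponds to conditioning an augmented spatiotemporal GP on the linear functionals
\begin{equation*}
  L_k^{(i)} y \defeq \inprod{\cakfact_k^{(i)}}{y(\ttrain{k}, \xstrain{k})}[2]
\end{equation*}
of $y^\star$. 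In particular, $\caksmean(\vz \mid y)_0 = \sum_{k,i} \beta_k^{(i)}(\vz)\, L_k^{(i)} y$ is linear in $y$ with a coefficient vector $\beta(\vz)$ that is determined by the algorithm, and $\cakscov(\vz)_{0,0}$ is the corresponding posterior variance of $\gpprior(\vz) = f^\star(\vz)$ under that augmented model.

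Next I would apply the RKHS duality step. The map $E_\vz: y \mapsto y(\vz) - \caksmean(\vz \mid y)_0$ is a bounded linear functional on $\rkhs{\noisycovfn}$: point evaluation at $\vz \notin \zstrain$ is a bounded functional on $\rkhs{\noisycovfn}$, and each $L_k^{(i)}$ is a finite linear combination of point evaluations. By the Riesz representation theorem and the reproducing property,
\begin{equation*}
  \sup_{y \in \rkhs{\noisycovfn} \setminus \set{0}} \frac{\abs{E_\vz y}}{\norm{y}_{\rkhs{\noisycovfn}}} = \norm{e_\vz}_{\rkhs{\noisycovfn}},
\end{equation*}
where $e_\vz = \noisycovfn(\cdot, \vz) - \sum_{k, i} \beta_k^{(i)}(\vz)\, L_k^{(i)} \noisycovfn(\cdot, \cdot)$. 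Expanding $\norm{e_\vz}_{\rkhs{\noisycovfn}}^2$ via the reproducing property and substituting $\noisycovfn = \covfn + \sigma^2 \delta$: the diagonal $\sigma^2 \delta(\vz, \vz)$ contributes $\sigma^2$, all cross terms involving $\delta$ at $\vz$ vanish because $\vz \notin \zstrain$, and the remaining expression is exactly the GP posterior-variance formula for $f^\star(\vz)$ under the augmented model with the same coefficients $\beta(\vz)$, i.e., $\cakscov(\vz)_{0, 0}$. The $\sigma^2 = 0$ case at training inputs $\vz \in \zstrain$ goes through identically since then $\rkhs{\noisycovfn} = \rkhs{\covfn}$.

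The main obstacle is the first step: rigorously establishing that the iterative CAKS procedure --- including policy-dependent (and potentially residual-adaptive) action selection and per-step optimal downdate truncation --- is exact Bayesian inference in a precisely specified GP model whose observations are linear functionals of $y^\star$ at training inputs, so that the linear-functional representation of $\caksmean(\vz \mid \cdot)_0$ and the posterior-variance identification with $\cakscov(\vz)_{0, 0}$ are both exact. Once this equivalence is pinned down, the remainder reduces to a standard kernel-ridge-regression worst-case-error calculation in $\rkhs{\noisycovfn}$.
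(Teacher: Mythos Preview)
Your two-step plan---reduce the CAKS to exact GP inference under projected observations, then apply the RKHS duality between worst-case error and posterior standard deviation---is exactly the paper's route. The paper formalises the first step as \cref{prop:batch-gp-and-ssm} (CAKS equals the iteratively-approximated batch GP posterior of \citet{Wenger2022PosteriorComputational} with block-diagonal actions $\mactions = \operatorname{diag}(\cakfacts_1,\dotsc,\cakfacts_\gmplen)$, via \cref{lem:discretized-stsgmp-is-markov}) and the second step as \cref{thm:worst-case-error-batch-gp}; your Riesz-representer computation is precisely the proof of the latter.

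The gap is your inclusion of truncation. The paper proves the \emph{equality} in \cref{eqn:pointwise-error-cakf} only without truncation; with truncation it merely conjectures an inequality (see the paragraph following the theorem). Your claim that the RKHS computation in $\rkhs{\noisycovfn}$ yields ``exactly the GP posterior-variance formula \dots\ under the augmented model'' does not go through. For any affine estimator $y \mapsto \vbeta(\vz)\T y(\zstrain)$ one has
\begin{equation*}
  \norm{e_\vz}_{\rkhs{\noisycovfn}}^2
  = \noisycovfn(\vz,\vz) - 2\,\vbeta\T\noisycovfn(\zstrain,\vz) + \vbeta\T\noisycovfn(\zstrain,\zstrain)\,\vbeta,
\end{equation*}
and collapsing this to $\noisycovfn(\vz,\vz) - \noisycovfn(\vz,\zstrain)\mC\noisycovfn(\zstrain,\vz)$ (i.e., to $\approxpostcovfn(\vz,\vz)+\sigma^2$) requires $\mC\noisycovfn(\zstrain,\zstrain)\mC = \mC$. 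This identity holds for the projected-observation operator $\mC = \mactions(\mactions\T\noisycovfn(\zstrain,\zstrain)\mactions)^\dagger\mactions\T$, but not once computational noise $\rvq^{\text{comp}}_k$ has been injected between time steps: the implicit operator changes, and $\cakscov(\vz)_{0,0}$ is then strictly larger than the quantity the RKHS norm computes. At best you recover $\le$, which is the paper's conjecture, not the theorem. Dropping truncation, your argument coincides with the paper's.
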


The proof can be found in \cref{sec:error-bound}.
\Cref{thm:pointwise-error-cakf} says that the (relative) worst-case error of the CAKS's posterior mean \(\caksmean(\cdot \mid \strobsfn)_0\) computed for data from the data-generating process \(\strobsfn(\cdot)\) is tightly bounded by its predictive standard deviation \((\cakscov(\vz)_{0, 0} + \sigma^2)^{\nicefrac{1}{2}}\) (assuming no truncation).
Importantly, this guarantee is of the same form as the one satisfied by the \emph{exact} posterior predictive \(\gp{\gls{str:postmeanfn}}{\gls{str:postcovfn} + \sigma^2\delta}\) for the same prior (see Prop.~3.8 of \citet{Kanagawa2018GaussianProcesses}), %
except for the corresponding \emph{approximations}.
In this sense, \textbf{\Cref{thm:pointwise-error-cakf} makes the nomenclature \emph{computation-aware} rigorous, since both the error due to finite data and the inevitable approximation error incurred by using \(\caksmean_0 \approx \postmeanfn\) for prediction is quantified by its uncertainty estimate.
}
Finally, truncation only increases the marginal variance of the CAKS, which leads us to conject that the same guarantee as in \cref{eqn:pointwise-error-cakf} holds with inequality for truncation.

  \section{RELATED WORK}
Reducing the cost of Kalman filtering and smoothing in the high-dimensional regime is a fundamental problem.
A large family of methods accelerates Kalman filtering by truncating state covariance matrices.
This includes the \emph{ensemble Kalman filter} (EnKF) \citep{Evensen1994EnKF} and its variants, as well as the \emph{reduced-rank Kalman filter} (RRKF) \citep{Schmidt2023RankReducedKalman}.
However in contrast to this work, these approaches truncate the full state covariance rather than downdates, which can lead to overconfident uncertainty estimates \citep[Appendix E]{Schmidt2023RankReducedKalman}.
Some authors also propose dimension reduction techniques for the state space \citep[e.g.,][]{Solonen2016DimensionReduction} with the notable exception of \citet{Berberidis2017}, who focus on data dimension reduction, as we do here.
\citet{Bardsley2011} use a similar Lanczos-inspired methodology; in certain settings, this is equivalent to low-dimensional projections of the data. %
The main application of high-dimensional filtering and smoothing considered in \cref{sec:theory,sec:experiments} is to spatiotemporal GP regression.
This connection was first expounded in \citet{Hartikainen2010KalmanFiltering,Sarkka2013SpatiotemporalLearning} and generalized to a wider class of covariance functions in \citet{Todescato2020EfficientSpatiotemporal}.
These works focus on discretizing the GP to obtain a state-space model.
One can also apply the Kalman filter directly in the infinite-dimensional setting, as proposed by \citet{Sarkka2012InfiniteDimensionalKalman,Solin2013InfinitedimensionalBayesian}.

The CAKF is a probabilistic numerical method \citep{Hennig2015ProbabilisticNumerics,Cockayne2019BayesianProbabilistic,Hennig2022ProbabilisticNumerics}.
In particular, \cref{alg:update_pls} is closely related to the literature on \emph{probabilistic linear solvers} \citep{Hennig2015LinearSolvers,Cockayne2019BayesCG,wenger2020problinsolve}, which frequently employ the Lanczos process.
The idea of using such solvers for GP regression was explored in \citet{Wenger2022PosteriorComputational}, which first proposed the construction of computation-aware solvers; the sense in which the CAKF is computation-aware is slightly different, in that the truncation of the covariance downdates also plays a role.
\citet{Tatzel2023AcceleratingGeneralized} explore similar ideas in the context of Bayesian generalized linear models.

  \begin{figure}[t]
  \includegraphics[width=\linewidth]{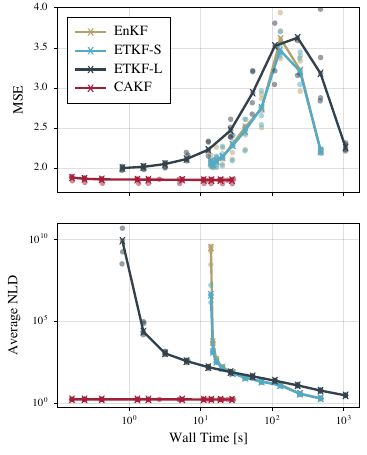}
  \caption{
    Comparison of the CAKF, the EnKF, and two variants of the ETKF on on-model data with state-space dimension $\gmpdim = \num{20000}$ while varying the rank parameters that govern the computational budget of the algorithms.
    The CAKF significantly outperforms the other filter variants for high-dimensional state spaces.
  }
  \label{fig:ensemble-on-model-wall-time}
  \vspace{-1em} %
\end{figure}

\begin{figure*}
  \vspace{-1em} %
  \begin{subfigure}[t]{0.49\textwidth}
    \centering
    \includegraphics[width=\textwidth]{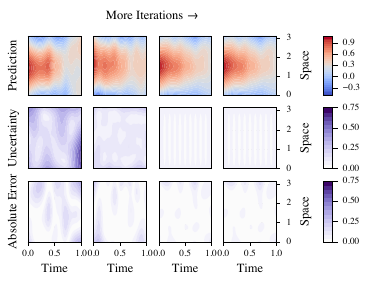}
    \caption{CAKS}
    \label{fig:toy-data-iterations}
  \end{subfigure}
  ~
  \begin{subfigure}[t]{0.49\textwidth}
    \centering
    \includegraphics[width=\textwidth]{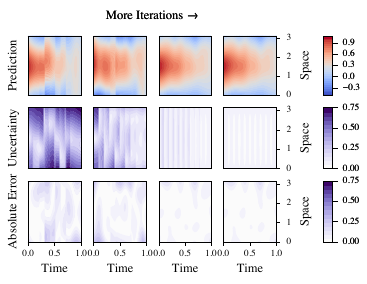}
    \caption{CAKS + Truncation}
    \label{fig:toy-data-iterations-truncation}
  \end{subfigure}
  \vspace{-0.5em}  %
  \caption{
    Predictive mean, predictive standard deviation, and pointwise absolute error for an increasing maximal number of iterations \(\cakfmaxprojobsdim \ge \cakfprojobsdim_\idxdtime\) per time step on a synthetic spatiotemporal regression problem.
  }
  \label{fig:toy-data}
  \vspace{-1em} %
\end{figure*}

\begin{figure*}[t]
  \vspace{-1em} %
  \centering
  \includegraphics[width=\textwidth]{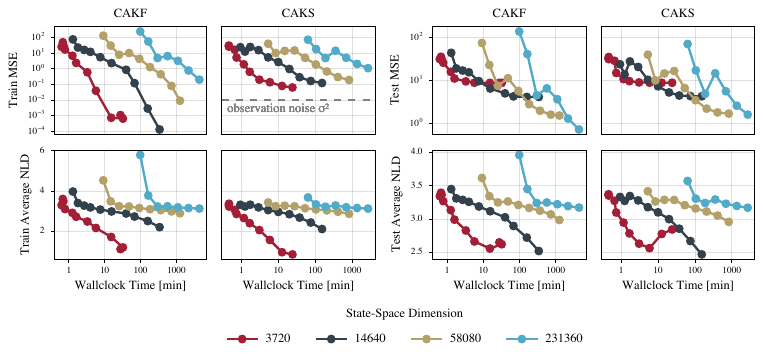}
  \vspace{-2em}  %
  \caption{
    \emph{Work-precision diagrams for the CAKF and CAKS on the ERA5 climate dataset.}
    The plot shows the mean squared error (MSE) and average negative log density (NLD) of the computation-aware filter and smoother for different problem sizes (i.e., state-space dimension) and number of iterations on the train and test set.
    The predictive error measured by test MSE \emph{decreases} with larger problem sizes, while the test NLD \emph{increases}.
    This is because we limit the computational budget and thus run \emph{fewer} iterations for larger problems, i.e., we trade reduced computation cost for increased uncertainty.
  }
  \label{fig:work-precision-era5}
  \vspace{-1em} %
\end{figure*}

\section{EXPERIMENTS}
\label{sec:experiments}
To evaluate the computation-aware Kalman filter and smoother empirically, we apply it to spatiotemporal regression problems with synthetic data and a large-scale dataset from the geosciences.

\textbf{Model}
All experiments will use Gaussian process priors $\gpprior \sim \gp{0}{\covfn}$ with space-time separable covariance functions
\(
\covfn((t, \vx), (t', \vx')) = \covfntime(t, t')\covfnspace(\vx, \vx'),
\)
where \(\covfntime\) is chosen such that the prior can be represented by an equivalent STSGMP (see \cref{sec:st-gpr-ssm}).
We will assume that the data is corrupted by i.i.d.~Gaussian measurement noise with standard deviation \(\lambda\).
See \cref{sec:appendix-experiments} for details on the model hyperparameters for the respective experiments.

\textbf{Evaluation}
We measure the performance of each method via the mean squared error (MSE) (\(\downarrow\)) of its predictive mean as well as via the average negative log density (NLD) (\(\downarrow\)), which additionally takes uncertainty quantification into account.
See \cref{sec:appendix-metrics} for details on the evaluation metrics.

\textbf{Implementation}
A flexible and efficient implementation of the CAKF and CAKS, including Matheron sampling and support for spatiotemporal modeling, is available as an open-source Julia library at
\begin{center}
  \href{https://github.com/marvinpfoertner/ComputationAwareKalmanExperiments.jl}{\faGithub\ / marvinpfoertner / ComputationAwareKalmanExperiments.jl}.
\end{center}
When applying the CAKF and CAKS to separable spatiotemporal Gauss--Markov models, the main performance bottleneck is the computation of matrix-vector products with the prior's state covariance, since this involves a multiplication with a large kernel Gram matrix $\covfnspace(\xsall, \xsall)$.
Our Julia implementation includes a custom CUDA kernel for multiplying with Gramians generated by covariance kernels \emph{without} materializing the matrix in memory.\footnotemark
\footnotetext{For reference, we observed up to a 600-fold speedup over the default CPU implementation when multiplying a $9600\!\times\!9600$ kernel Gram matrix of a three-dimensional Matérn($\nicefrac{3}{2}$) kernel with a $9600\!
    \times\!128$ matrix.}

\textbf{Hardware}
All experiments were run on a single dedicated machine equipped with an Intel i7-8700K CPU with \qty{32}{\giga\byte} of RAM and an NVIDIA GeForce RTX 2080 Ti GPU with \qty{11}{\giga\byte} of VRAM.

\subsection{Comparison to Other Methods}
We compare the performance of the CAKF/CAKS both to the standard Kalman filter and RTS smoother, as well as ensemble Kalman filters.

\textbf{Data}
To isolate the effect of the algorithm on the prediction, we sample on-model datasets from the prior.
To this end, we discretize the prior on regular grids in both time and space and draw a sample from the resulting discretized Gauss--Markov process.
We pick a random subset of these points as training data which are subseqently corrupted by additive Gaussian measurement noise.

\subsubsection{Comparison to EnKF and ETKF}
\label{sec:experiment-ensemble-on-model}
We start by comparing the CAKF against its main competitors: ensemble Kalman filters.
Among those, we consider the ensemble Kalman filter (EnKF) \citep{Evensen1994EnKF} and two variants of the ensemble transform Kalman filter (ETKF) \citep{Bishop2001AdaptiveSampling}.
The variants of the ETKF differ only in their initialization and prediction steps.
Namely, the first variant (ETKF-S) uses the same sampling-based initialization and prediction steps as the EnKF, while the second variant (ETKF-L) uses the Lanczos process for both (see \cref{sec:appendix-on-model-data}).
We vary the rank parameter $r = 1, 2, 4, 8, \dotsc, 1024$ that governs the computational budget of the algorithms, starting at $r = 2$ for the EnKF and the ETKF-S and at $r = 1$ for the ETKF-L and the CAKF (and we only run the latter up to $r = 512$).
For a fair comparison, we set both the maximal number of iterations as well as the truncation rank of the CAKF to the same constant value, i.e., $\cakftddrank_\idxdtime = \cakfmaxprojobsdim_\idxdtime = r$.
Note that the scalability issues of the EnKF and ETKF-S outlined in \cref{sec:comparison-enkf} limit the state-space dimension of this problem.
The results are visualized in \cref{fig:ensemble-on-model-wall-time,fig:ensemble-on-model-rank}.

\textbf{Interpretation}
The CAKF consistently outperforms the ensemble methods in terms of MSE and average NLD across all ranks.
While the difference in MSE is comparatively small, the CAKF achieves a significantly lower average NLD.
This indicates that the uncertainty quantification of the CAKF is considerably more accurate than that of the ensemble methods.

\subsubsection{Comparison to Kalman Filter and RTS Smoother}
As the CAKF and CAKS are approximations to the Kalman filter and RTS smoother, respectively, it is important to assess the approximation error.
To this end, we compute the errors in the mean and covariance estimates provided by the CAKF and CAKS with varying rank parameters $\cakftddrank_\idxdtime = \cakfmaxprojobsdim_\idxdtime \in \set{1, 16, 32, 64}$ as compared to the respective mean and covariance estimates produced by the Kalman filter and RTS smoother on a sufficiently small problem for which we can run these.
The results can be found in \cref{fig:error-dynamics}.

\textbf{Interpretation}
As expected, the error introduced by the CAKF and CAKS decreases with increasing rank.
Moreover, both the error in the mean and the error in the covariance are bounded and stable over time.

\subsection{Impact of Truncation}
To visualize the effect of the downdate truncation, we consider a synthetic dataset generated by adding Gaussian measurement noise to the target function
\(
\strtargetfn(t, x) \defeq \sin(x) \exp(-t).
\)
In \cref{fig:toy-data}, we illustrate the predictive mean of the CAKS, its predictive standard deviation, and the corresponding pointwise (absolute) error, both with (\cref{fig:toy-data-iterations}) and without (\cref{fig:toy-data-iterations-truncation}) truncation.
Each column corresponds to a larger number of iterations \(\cakfmaxprojobsdim\).
Here, the truncation rank is chosen as \(\cakftddrank_\idxdtime = \min(2\cakfprojobsdim_\idxdtime, \cakftddrank_\idxdtime)\).

\textbf{Interpretation}
For an increasing number of iterations, the error in the predictive mean \(\caksmean(\vz)_0\) decreases and its uncertainty \(\cakscov(\vz)_{0,0}\) reduces correspondingly.
When the belief is (optimally) truncated to save memory, the uncertainty tends to increase as \cref{fig:toy-data-iterations-truncation} illustrates.
Notice the trade-off between the degree of truncation and the impact on the prediction.
Finally, \cref{fig:toy-data} also illustrates that the uncertainty bounds the error in the predictive mean as shown in \cref{thm:pointwise-error-cakf}.

\subsection{Large-Scale Climate Dataset}
\label{sec:climate-dataset}
To demonstrate that our approach scales to large, real-world problems, we use the CAKS to interpolate earth surface temperature data over time using an STSGMP prior on the sphere.

\textbf{Data}
We consider the ``$\qty{2}{\meter}$-temperature'' variable from the ERA5 global reanalysis dataset \citep{Hersbach2020ERA5Global}.
The data reside on a 1440\(\times\)721 spatial latitude-longitude grid with an hourly temporal resolution.
For our experiments, we selected the first $\qty{48}{\hour}$ of 2022 with a temporal stride of $\qty{1}{\hour}$, i.e., $\ntstrain = 48$.
To show the effect of different problem sizes on our algorithms, we downsample the dataset by factors of 3, 6, 12, and 24 along both spatial dimensions using nearest neighbor downsampling.
A regular subgrid consisting of \qty{25}{\percent} of the points in the downsampled dataset is used for testing, while the remaining points are used as a training set.
The total number of spatial points and the number of spatial training points for each downsampled version of the dataset can be found in \cref{tbl:era5-sizes}.

\textbf{Evaluation}
We run the CAKF and the CAKS for the four different problem sizes (spatial downsampling factors of 3, 6, 12, and 24) corresponding to increasing state-space dimension (see \cref{tbl:era5-sizes}), for a maximum of \(\approx \num{4000000}\) training datapoints.
For each problem size, we vary the computational budget, defined by the number of actions $\cakfprojobsdim_\idxdtime$ and the rank $\cakftddrank_\idxdtime = \min(2 \cakfprojobsdim_\idxdtime, \cakfddrank_\idxdtime)$ of the downdates after truncation.
As the $\cakfprojobsdim_\idxdtime$ increases, the CAKF and CAKS approach the standard Kalman filter and RTS smoother.
A comparison to these methods on this problem is not practically realisable, and nor is comparison to ensemble Kalman filters; see \cref{sec:comparison-enkf} for a detailed explanation.
For the smallest problem, we use up to \(\cakfprojobsdim_\idxdtime = 2^{10}\) actions, while for the largest problem, we use up to \(\cakfprojobsdim_\idxdtime=2^6\).
The experimental results are visualized in a work-precision diagram in \cref{fig:work-precision-era5}.

\Cref{fig:3-64} was generated by running the CAKF and the CAKS with a spatial downsampling factor of 3, corresponding to a state-space dimension of $\num{231360}$ and $\approx \num{4000000}$ total training data points.
The number of actions is set to $\cakfprojobsdim_\idxdtime = 64$ and the maximal rank after truncation is set to $2 \cakfprojobsdim_\idxdtime = 128 \ge \cakftddrank_\idxdtime$.

\textbf{Interpretation}
As we increase the number of actions, i.e., the computational budget, the MSE and average NLD improve for both the CAKF and CAKS.
As the state-space dimension increases, inference becomes more computationally demanding and the CAKF and CAKS take longer to compute the posterior marginals.
However, with more data, both improve their generalization performance as measured by the MSE.
To stay within a fixed upper limit on the time and memory budget, we constrain the number of iterations \(\cakfprojobsdim_\idxdtime\) more as the problem size increases, which results in an increase in average NLD.
This is an example of the aforementioned trade-off between reduced computational resources and increased uncertainty.
  \section{CONCLUSION}
\label{sec:conclusion}
Kalman filtering and smoothing enable efficient inference in linear-Gaussian state-space models from a set of noisy observations.
However, in many practical applications such as spatiotemporal regression, the latent state is high-dimensional.
This results in prohibitive computational demands.
In this work, we introduced computation-aware versions of the Kalman filter and smoother, which significantly reduce the time and memory complexity, while quantifying their inevitable approximation error via an appropriate increase in predictive uncertainty.
Our experiments show that the CAKF and CAKS significantly outperform their main competitors, ensemble Kalman filtering methods, already on problems with moderately large state-space dimension.
This is mostly due to the fact that, unlike the ensemble methods, the CAKF and CAKS account for their inherent approximation error by design.
Further the CAKF and CAKS scale to significantly larger problems as evidenced by our benchmark experiment on a real-world climate dataset.
A natural next step is to extend our approach such that model selection via evidence maximization becomes possible.
Since the CAKF and CAKS are performing exact inference in a modified linear Gaussian state-space model, this is in theory directly possible by exploiting known techniques for the vanilla filter and smoother \citep[Sec.~16.3.2,][]{Sarkka2023BayesianFiltering}, however, the need for truncation complicates this.

  \acknowledgments{MP and PH gratefully acknowledge financial support by the European Research Council through ERC StG Action 757275 / PANAMA and ERC CoG Action 101123955 / ANUBIS; the DFG Cluster of Excellence “Machine Learning - New Perspectives for Science”, EXC 2064/1, project number 390727645; the German Federal Ministry of Education and Research (BMBF) through the Tübingen AI Center (FKZ: 01IS18039A); the DFG SPP 2298 (Project HE 7114/5-1); and the Carl Zeiss Foundation (project "Certification and Foundations of Safe Machine Learning Systems in Healthcare"); as well as funds from the Ministry of Science, Research and Arts of the State of Baden-Württemberg.
The authors thank the International Max Planck Research School for Intelligent Systems (IMPRS-IS) for supporting MP.
JW was supported by the Gatsby Charitable Foundation (GAT3708), the Simons Foundation (542963), the NSF AI Institute for Artificial and Natural Intelligence (ARNI: NSF DBI 2229929) and the Kavli Foundation.
JC is supported by EPSRC grant EP/Y001028/1.
}

  \bibliographystyle{unsrtnat}
  \bibliography{references}

  \newpage
  \appendix

\numberwithin{figure}{section}
\numberwithin{table}{section}
\numberwithin{algorithm}{section}

\numberwithin{definition}{section}
\numberwithin{theorem}{section}
\numberwithin{proposition}{section}
\numberwithin{lemma}{section}
\numberwithin{corollary}{section}
\numberwithin{remark}{section}

\onecolumn

\thispagestyle{empty}
\aistatstitle{%
  Computation-Aware Kalman Filtering and Smoothing: \\%
  Supplementary Materials%
}

The supplementary materials contain derivations for our theoretical framework and proofs for the mathematical statements in the main text.
We also provide implementation specifics and describe our experimental setup in more detail.

\startcontents[sections]
\vspace{1em}
\printcontents[sections]{l}{1}{\setcounter{tocdepth}{3}}
\vspace{3em}

\section{NOTATION}
\label{sec:notation}
\setglossarystyle{long3col}
\setlength{\glsdescwidth}{0.8\linewidth}

\printunsrtglossary[type=gmp]
\printunsrtglossary[type=obs]
\printunsrtglossary[type=kf]
\printunsrtglossary[type=cakf]
\printunsrtglossary[type=ks]
\printunsrtglossary[type=caks]
\printunsrtglossary[type=samp]
\printunsrtglossary[type=casamp]
\printunsrtglossary[type=str]
\printunsrtglossary[type=stsgmp]
\printunsrtglossary[type=itergp]

\section{DERIVATION OF THE ALGORITHM}
\label{sec:kf-rts}
\begin{figure*}
  \centering
  \begin{tikzpicture}
  \def\dx{2.0}

  \filldraw[TUgray, semitransparent] (-3.75*\dx,-2) rectangle (3.5*\dx,-1);
  \filldraw[TUgray, semitransparent] (-3.75*\dx,-0.5) rectangle (3.5*\dx,0.5);
  \filldraw[TUgray, semitransparent] (-3.75*\dx,1.5) rectangle (3.5*\dx,2.5);

  \node[anchor=south west,text=TUdark] at (-3.75*\dx,-2) {Observations};
  \node[anchor=south west,text=TUdark] at (-3.75*\dx,-0.5) {Dynamics};
  \node[anchor=south west,text=TUdark] at (-3.75*\dx,1.5) {Computation};

  \node[var] (xm) at (-2*\dx,0) {$\gmp_{\idxdtime-1}$};
  \node[var] (xt) at (0,0) {$\gmp_\idxdtime$};
  \node[var] (xp) at (2*\dx,0) {$\gmp_{\idxdtime+1}$};

  \node[obs,TUdark,text=white] (ym) at (-2*\dx,-1.5) {$\cakfprojobsrv_{\idxdtime-1}$};
  \node[obs,TUdark,text=white] (yt) at (0,-1.5) {$\cakfprojobsrv_\idxdtime$};
  \node[obs,TUdark,text=white] (yp) at (2*\dx,-1.5) {$\cakfprojobsrv_{\idxdtime+1}$};

  \node[text=TUdark,anchor=west] at (ym.east) {$\in \R^{\cakfprojobsdim_{\idxdtime-1}}$};
  \node[text=TUdark,anchor=west] at (yt.east) {$\in \R^{\cakfprojobsdim_{\idxdtime}}$};
  \node[text=TUdark,anchor=west] at (yp.east) {$\in \R^{\cakfprojobsdim_{\idxdtime+1}}$};

  \node[anchor=south east] at (3.5*\dx,-0.5) {$\R^\gmpdim$};
  \node[anchor=south east] at (3.5*\dx,1.5) {$\R^\gmpdim$};

  \node[var, text=TUdark, draw=TUdark] (xmm) at (-2*\dx-0.3*\dx,2) {$\gmppstate{\idxdtime-1}$};
  \node[var, text=TUdark, draw=TUdark] (xmt) at (0-0.3*\dx,2) {$\gmppstate{\idxdtime}$};
  \node[var, text=TUdark, draw=TUdark] (xmp) at (2*\dx-0.3*\dx,2) {$\gmppstate{\idxdtime+1}$};

  \node[var] (xtm) at (-2*\dx+0.3*\dx,2) {$\gmptstate{\idxdtime-1}$};
  \node[var] (xtt) at (0.3*\dx,2) {$\gmptstate{\idxdtime}$};
  \node[var] (xtp) at (2*\dx+0.3*\dx,2) {$\gmptstate{\idxdtime+1}$};

  \draw[edge,->] (xmm) -- (xm);
  \draw[edge,->] (xmt) -- (xt);
  \draw[edge,->] (xmp) -- (xp);

  \draw[edge,->] (xm) -- (ym);
  \draw[edge,->] (xt) -- (yt);
  \draw[edge,->] (xp) -- (yp);

  \draw[edge,->] (xm) -- (xtm);
  \draw[edge,->] (xt) -- (xtt);
  \draw[edge,->] (xp) -- (xtp);

  \draw[edge,->] (xtm) -- (xmt);
  \draw[edge,->] (xtt) -- (xmp);

  \draw[bend right,->, TUred,dashed] (xmm) edge node[pos=0.3,below,anchor=north east]{\footnotesize $(\cakfpmean_{\idxdtime-1},\cakfpdd_{\idxdtime-1})$} (xm);
  \draw[bend right,->, TUred,dashed] (xmt) edge node[pos=0.3,below,anchor=north east]{\footnotesize $(\cakfpmean_t,\cakfpdd_t)$} (xt);
  \draw[bend right,->, TUred,dashed] (xmp) edge node[pos=0.3,below,anchor=north east]{\footnotesize $(\cakfpmean_{\idxdtime+1},\cakfpdd_{\idxdtime+1})$} (xp);

  \draw[bend right,->, TUgreen,dashed] (xm) edge node[pos=0.6,above,anchor=south west]{\footnotesize $(\cakfmean_{\idxdtime-1},\cakfdd_{\idxdtime-1})$} (xtm);
  \draw[bend right,->, TUgreen,dashed] (xt) edge node[pos=0.6,above,anchor=south west]{\footnotesize $(\cakfmean_\idxdtime,\cakfdd_\idxdtime)$} (xtt);
  \draw[bend right,->, TUgreen,dashed] (xp) edge node[pos=0.6,above,anchor=south west]{\footnotesize $(\cakfmean_{\idxdtime+1},\cakfdd_{\idxdtime+1})$} (xtp);

  \draw[bend left,->, TUblue,dashed] (xtm) edge node[midway,above,anchor=south]{\footnotesize $(\cakfmean_{\idxdtime-1},\cakftdd_{\idxdtime-1})$} (xmt);
  \draw[bend left,->, TUblue,dashed] (xtt) edge node[midway,above,anchor=south]{\footnotesize $(\cakfmean_\idxdtime,\cakftdd_\idxdtime)$} (xmp);

  \draw[->,TUviolet, dashed] (xp) edge node[midway,below,anchor=north]{\footnotesize $(\caksw_{\idxdtime+1},\caksW_{\idxdtime+1})$} (xt);
  \draw[->,TUviolet, dashed] (xt) edge node[midway,below,anchor=north]{\footnotesize $(\caksw_{\idxdtime},\caksW_{\idxdtime})$} (xm);

  \draw[edge,path fading=west,->] ([xshift=5pt, yshift=5pt]-2*\dx-0.3*\dx,2) (xmm)+(-\dx,0) -- (xmm);
  \draw[edge,path fading=east,->] ([xshift=5pt, yshift=5pt]+2*\dx-0.3*\dx,2) (xtp) -- +(1*\dx,0);

\end{tikzpicture}
  \caption{%
    Probabilistic graphical model for the computation-aware Kalman filter and RTS smoother.
    Solid arrows and circles define the joint generative model (i.e., the posterior computed by filter and smoother).
    Dashed arrows visualize the information flow between nodes, with the corresponding ``messages'' in parentheses.
  }
  \label{fig:pgm}
\end{figure*}

\begin{definition}[Linear-Gaussian State-Space Model]
  \label{def:lgssm}
  A \emph{linear-Gaussian state-space model} (LGSSM) is a pair $(\set{\gls{gmp:state}}_{\idxdtime = 0}^{\gls{gmp:len}}, \set{\gls{obs:rv}}_{\idxdtime = 1}^\gmplen)$ of discrete-time stochastic processes defined by
  \begin{align*}
    \gmp_\idxdtime     & \defeq \glslink{gmp:A}{\gmpA_{\idxdtime - 1}} \gmp_{\idxdtime - 1} + \glslink{gmp:b}{\gmpb_{\idxdtime - 1}} + \glslink{gmp:noise}{\gmpnoise_{\idxdtime - 1}} \in \R^{\gls{gmp:dim}},\qquad \\
    \obsrv_{\idxdtime} & \defeq \gls{obs:H} \gmp_\idxdtime + \vc_\idxdtime + \gls{obs:noise} \in \R^{\gls{obs:dim}},\qquad
  \end{align*}
  where
  \begin{align*}
    \gmp_0                    & \sim \gaussian{\gmpmean_0}{\gmpcov_0}                                         \\
    \gmpnoise_{\idxdtime - 1} & \sim \gaussian{\vec{0}}{\glslink{gmp:noisecov}{\gmpnoisecov_{\idxdtime - 1}}} \\
    \obsnoise_\idxdtime       & \sim \gaussian{\vec{0}}{\gls{obs:noisecov}}
  \end{align*}
  are pairwise independent.
\end{definition}

\subsection{Filtering}
\begin{theorem}[Kalman Filter]
  \label{thm:kalman-filter}
  Let $(\gmp, \obsrv)$ be the LGSSM from \cref{def:lgssm}.
  Fix $\set{\gls{obs:vec}}_{\idxdtime = 1}^\gmplen$ with $\obs_\idxdtime \in \R^{\obsdim_\idxdtime}$.
  Then
  \begin{equation*}
    \condrv{\gmp_\idxdtime \given \obsrv_{1:\idxdtime - 1} = \obs_{1:\idxdtime - 1}}
    \sim
    \gaussian{\kfpmean_\idxdtime}{\kfpcov_\idxdtime},
  \end{equation*}
  where
  \begin{align*}
    \gls{kf:pmean} & \defeq \gmpA_{\idxdtime - 1} \kfmean_{\idxdtime - 1} + \gmpb_{\idxdtime - 1},                               \\
    \gls{kf:pcov}  & \defeq \gmpA_{\idxdtime - 1} \kfcov_{\idxdtime - 1} \gmpA_{\idxdtime - 1}\T + \gmpnoisecov_{\idxdtime - 1},
  \end{align*}
  and
  \begin{equation*}
    \condrv{\gmp_\idxdtime \given \obsrv_{1:\idxdtime} = \obs_{1:\idxdtime}}
    \sim
    \gaussian{\kfmean_\idxdtime}{\kfcov_\idxdtime},
  \end{equation*}
  where $\kfmean_0 = \gmpmean_0$, $\kfcov_0 = \gmpcov_0$, and
  \begin{align*}
    \gls{kf:mean}     & \defeq \kfpmean_\idxdtime + \kfgain_\idxdtime \kfresidual_\idxdtime,                 \\
    \gls{kf:cov}      & \defeq \kfpcov_\idxdtime - \kfgain_\idxdtime \kfgram_\idxdtime \kfgain_\idxdtime\T,  \\
    \intertext{for $\idxdtime = 1, \dotsc, \gmplen$ with}
    \gls{kf:residual} & \defeq \obs_\idxdtime - \obsH_\idxdtime \kfpmean_\idxdtime - \vc_\idxdtime,          \\
    \gls{kf:gram}     & \defeq \obsH_\idxdtime \kfpcov_\idxdtime \obsH_\idxdtime\T + \obsnoisecov_\idxdtime, \\
    \gls{kf:gain}     & \defeq \kfpcov_\idxdtime \obsH_\idxdtime\T \kfgram_\idxdtime\inv.
  \end{align*}
\end{theorem}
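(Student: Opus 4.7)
The plan is to proceed by induction on $\idxdtime$, establishing the two Gaussian statements in lockstep via a predict/update decomposition. The base case is immediate: $\kfmean_0 = \gmpmean_0$ and $\kfcov_0 = \gmpcov_0$ by definition, and the empty conditioning on $\obsrv_{1:0}$ yields the prior $\gmp_0 \sim \gaussian{\gmpmean_0}{\gmpcov_0}$. For the inductive step, I would assume $\condrv{\gmp_{\idxdtime - 1} \given \obsrv_{1:\idxdtime - 1} = \obs_{1:\idxdtime - 1}} \sim \gaussian{\kfmean_{\idxdtime - 1}}{\kfcov_{\idxdtime - 1}}$ and derive the predictive statement first, then the updated one.

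For the predict step, observe that $\gmp_\idxdtime = \gmpA_{\idxdtime - 1} \gmp_{\idxdtime - 1} + \gmpb_{\idxdtime - 1} + \gmpnoise_{\idxdtime - 1}$, and that $\gmpnoise_{\idxdtime - 1}$ is independent of $\gmp_0$ and of all noise terms indexed strictly before $\idxdtime - 1$; in particular, it is independent of $\obsrv_{1:\idxdtime - 1}$, which is a measurable function of $\gmp_0$, $\set{\gmpnoise_l}_{l = 0}^{\idxdtime - 2}$, and $\set{\obsnoise_l}_{l = 1}^{\idxdtime - 1}$. Consequently, conditioning on $\obsrv_{1:\idxdtime - 1} = \obs_{1:\idxdtime - 1}$ leaves $\gmpnoise_{\idxdtime - 1}$ unchanged in distribution, and the affine image of the conditional Gaussian for $\gmp_{\idxdtime - 1}$ plus this independent Gaussian noise is again Gaussian, with the claimed mean $\kfpmean_\idxdtime$ and covariance $\kfpcov_\idxdtime$.

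For the update step, I would consider the joint conditional distribution of $(\gmp_\idxdtime, \obsrv_\idxdtime)$ given $\obsrv_{1:\idxdtime - 1} = \obs_{1:\idxdtime - 1}$. Using $\obsrv_\idxdtime = \obsH_\idxdtime \gmp_\idxdtime + \vc_\idxdtime + \obsnoise_\idxdtime$ together with the fact that $\obsnoise_\idxdtime$ is independent of $\gmp_\idxdtime$ and of $\obsrv_{1:\idxdtime - 1}$ (again by the pairwise independence assumption in \cref{def:lgssm}), this joint is Gaussian with block mean $(\kfpmean_\idxdtime, \obsH_\idxdtime \kfpmean_\idxdtime + \vc_\idxdtime)$, marginal covariances $\kfpcov_\idxdtime$ and $\obsH_\idxdtime \kfpcov_\idxdtime \obsH_\idxdtime\T + \obsnoisecov_\idxdtime = \kfgram_\idxdtime$, and cross-covariance $\kfpcov_\idxdtime \obsH_\idxdtime\T$. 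Applying the standard Gaussian conditioning formula to condition on $\obsrv_\idxdtime = \obs_\idxdtime$ then yields conditional mean $\kfpmean_\idxdtime + \kfpcov_\idxdtime \obsH_\idxdtime\T \kfgram_\idxdtime\inv (\obs_\idxdtime - \obsH_\idxdtime \kfpmean_\idxdtime - \vc_\idxdtime) = \kfpmean_\idxdtime + \kfgain_\idxdtime \kfresidual_\idxdtime = \kfmean_\idxdtime$ and conditional covariance $\kfpcov_\idxdtime - \kfpcov_\idxdtime \obsH_\idxdtime\T \kfgram_\idxdtime\inv \obsH_\idxdtime \kfpcov_\idxdtime = \kfpcov_\idxdtime - \kfgain_\idxdtime \kfgram_\idxdtime \kfgain_\idxdtime\T = \kfcov_\idxdtime$, as required.

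The main obstacle is purely bookkeeping: one has to carefully justify the conditional-independence steps, namely that conditioning on $\obsrv_{1:\idxdtime - 1}$ does not alter the distribution of $\gmpnoise_{\idxdtime - 1}$ in the predict step and does not alter the conditional distribution of $\obsnoise_\idxdtime$ given $\gmp_\idxdtime$ in the update step, so that the Gaussian conditioning formula is applicable. Once this is carefully set up from the pairwise independence hypothesis in \cref{def:lgssm}, the rest is routine linear algebra with Gaussian densities; writing the covariance update as $\kfpcov_\idxdtime - \kfgain_\idxdtime \kfgram_\idxdtime \kfgain_\idxdtime\T$ rather than the asymmetric form is a trivial rearrangement using $\kfgain_\idxdtime = \kfpcov_\idxdtime \obsH_\idxdtime\T \kfgram_\idxdtime\inv$ and symmetry of $\kfgram_\idxdtime$.
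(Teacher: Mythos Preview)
The paper does not actually give a proof of \cref{thm:kalman-filter}; it states the result as background (cf.\ the reference to \citet{Sarkka2023BayesianFiltering} in \cref{sec:lgssm}) and moves directly to \cref{prop:kf-ddcov}. Your inductive predict/update argument is the standard derivation and is correct, so there is nothing to compare against here.
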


\begin{proposition}[Downdate-Form Kalman Filter]
  \label{prop:kf-ddcov}
  The Kalman state covariances can equivalently be computed via
  \begin{align*}
    \kfpcov_\idxdtime & = \gmpcov_\idxdtime - \kfpdd_\idxdtime (\kfpdd_\idxdtime)\T, \\
    \kfcov_\idxdtime  & = \gmpcov_\idxdtime - \kfdd_\idxdtime \kfdd_\idxdtime\T,
  \end{align*}
  where $\kfdd_0 \defeq \pvec{\ } \in \R^{\gmpdim \times 0}$ and
  \begin{align*}
    \gls{kf:pdd}
     & \defeq \gmpA_{\idxdtime - 1} \kfdd_{\idxdtime - 1}, \\
    \gls{kf:dd}
     & \defeq
    \begin{pmatrix}
      \kfpdd_\idxdtime & \kfpcov_\idxdtime \kfW_\idxdtime
    \end{pmatrix}
  \end{align*}
  with $\gls{kf:W} \defeq \obsH_\idxdtime\T \kfgramlsqrt_\idxdtime$, and $\gls{kf:gramlsqrt} \kfgramlsqrt_\idxdtime\T = \kfgram_\idxdtime\inv$ for $k = 1, \dotsc, \gmplen$.
\end{proposition}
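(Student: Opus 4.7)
The plan is to prove both identities simultaneously by induction on $\idxdtime$, with the update identity at step $\idxdtime$ following from the predict identity at step $\idxdtime$. The base case is immediate: by definition $\kfcov_0 = \gmpcov_0$, and since $\kfdd_0$ has zero columns, $\kfdd_0 \kfdd_0\T = 0$, so $\kfcov_0 = \gmpcov_0 - \kfdd_0 \kfdd_0\T$ trivially holds.

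For the predict step, I would assume $\kfcov_{\idxdtime-1} = \gmpcov_{\idxdtime-1} - \kfdd_{\idxdtime-1} \kfdd_{\idxdtime-1}\T$ and use the standard Kalman predict recursion together with the prior moment recursion $\gmpcov_\idxdtime = \gmpA_{\idxdtime-1} \gmpcov_{\idxdtime-1} \gmpA_{\idxdtime-1}\T + \gmpnoisecov_{\idxdtime-1}$ noted in the background section. Substituting the inductive hypothesis and distributing $\gmpA_{\idxdtime-1}(\cdot)\gmpA_{\idxdtime-1}\T$ yields
\begin{equation*}
\kfpcov_\idxdtime = \gmpcov_\idxdtime - (\gmpA_{\idxdtime-1} \kfdd_{\idxdtime-1})(\gmpA_{\idxdtime-1} \kfdd_{\idxdtime-1})\T = \gmpcov_\idxdtime - \kfpdd_\idxdtime (\kfpdd_\idxdtime)\T,
\end{equation*}
recovering the claimed form with $\kfpdd_\idxdtime = \gmpA_{\idxdtime-1} \kfdd_{\idxdtime-1}$.

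For the update step, the key calculation is to rewrite the standard covariance update using the factorisation $\kfgram_\idxdtime^{-1} = \kfgramlsqrt_\idxdtime \kfgramlsqrt_\idxdtime\T$:
\begin{equation*}
\kfcov_\idxdtime = \kfpcov_\idxdtime - \kfpcov_\idxdtime \obsH_\idxdtime\T \kfgramlsqrt_\idxdtime \kfgramlsqrt_\idxdtime\T \obsH_\idxdtime \kfpcov_\idxdtime = \kfpcov_\idxdtime - (\kfpcov_\idxdtime \kfW_\idxdtime)(\kfpcov_\idxdtime \kfW_\idxdtime)\T,
\end{equation*}
using $\kfW_\idxdtime = \obsH_\idxdtime\T \kfgramlsqrt_\idxdtime$ and symmetry of $\kfpcov_\idxdtime$. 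Substituting the predict-form identity just established and collapsing the two downdates into a block outer product gives
\begin{equation*}
\kfcov_\idxdtime = \gmpcov_\idxdtime - \kfpdd_\idxdtime(\kfpdd_\idxdtime)\T - (\kfpcov_\idxdtime \kfW_\idxdtime)(\kfpcov_\idxdtime \kfW_\idxdtime)\T = \gmpcov_\idxdtime - \kfdd_\idxdtime \kfdd_\idxdtime\T,
\end{equation*}
closing the induction.

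There is no real obstacle here; the proof is a bookkeeping exercise. The only subtle point is verifying that a valid square root $\kfgramlsqrt_\idxdtime$ of $\kfgram_\idxdtime^{-1}$ always exists, which follows because $\kfgram_\idxdtime = \obsH_\idxdtime \kfpcov_\idxdtime \obsH_\idxdtime\T + \obsnoisecov_\idxdtime$ is symmetric positive (semi)definite and we are assuming the standard Kalman filter is well-defined (otherwise $\kfgramlsqrt_\idxdtime$ should be interpreted via a pseudo-inverse, consistent with the \textsc{lsqrt} call in \cref{alg:projected_update}).
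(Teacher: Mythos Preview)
Your proposal is correct and follows essentially the same approach as the paper's proof: induction on $\idxdtime$, with the base case $\kfcov_0 = \gmpcov_0 - \kfdd_0\kfdd_0\T$, the predict identity obtained by substituting the inductive hypothesis into the Kalman predict recursion and using $\gmpcov_\idxdtime = \gmpA_{\idxdtime-1}\gmpcov_{\idxdtime-1}\gmpA_{\idxdtime-1}\T + \gmpnoisecov_{\idxdtime-1}$, and the update identity obtained by factoring $\kfgram_\idxdtime\inv = \kfgramlsqrt_\idxdtime\kfgramlsqrt_\idxdtime\T$ and collapsing the two downdates into a single block outer product.
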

\begin{proof}
  For $\idxdtime = 0$, we find that
  \begin{equation*}
    \kfcov_0
    = \gmpcov_0
    = \gmpcov_0 - \mat{0}_{\gmpdim \times \gmpdim}
    = \gmpcov_0 - \kfdd_0 \kfdd_0\T.
  \end{equation*}
  Now let $1 \le k \le \gmplen$ and assume that the statement holds for $k - 1$.
  Then
  \begin{align*}
    \kfpcov_\idxdtime
     & = \gmpA_{\idxdtime - 1} \kfcov_{\idxdtime - 1} \gmpA_{\idxdtime - 1}\T + \gmpnoisecov_{\idxdtime - 1}                                                                                                \\
     & = \gmpA_{\idxdtime - 1} \gmpcov_{\idxdtime - 1} \gmpA_{\idxdtime - 1}\T + \gmpnoisecov_{\idxdtime - 1} - \gmpA_{\idxdtime - 1} \kfdd_{\idxdtime - 1} \kfdd_{\idxdtime - 1}\T \gmpA_{\idxdtime - 1}\T \\
     & = \gmpcov_\idxdtime - \gmpA_{\idxdtime - 1} \kfdd_{\idxdtime - 1} (\gmpA_{\idxdtime - 1} \kfdd_{\idxdtime - 1})\T                                                                                    \\
     & = \gmpcov_\idxdtime - \kfpdd_\idxdtime (\kfpdd_\idxdtime)\T,
  \end{align*}
  and
  \begin{align*}
    \kfcov_\idxdtime
     & = \kfpcov_\idxdtime - \kfpcov_\idxdtime \obsH_\idxdtime\T \kfgram_\idxdtime\inv \obsH_\idxdtime \kfpcov_\idxdtime                                                                   \\
     & = \gmpcov_\idxdtime - \kfpdd_\idxdtime (\kfpdd_\idxdtime)\T - \kfpcov_\idxdtime \obsH_\idxdtime\T \kfgramlsqrt_\idxdtime \kfgramlsqrt_\idxdtime\T \obsH_\idxdtime \kfpcov_\idxdtime \\
     & = \gmpcov_\idxdtime -
    \begin{pmatrix}
      \kfpdd_\idxdtime & \kfpcov_\idxdtime \obsH_\idxdtime\T \kfgramlsqrt_\idxdtime
    \end{pmatrix}
    \begin{pmatrix}
      \kfpdd_\idxdtime & \kfpcov_\idxdtime \obsH_\idxdtime\T \kfgramlsqrt_\idxdtime
    \end{pmatrix}
    \T                                                                                                                                                                                     \\
     & = \gmpcov_\idxdtime - \kfdd_\idxdtime \kfdd_\idxdtime\T.
  \end{align*}
\end{proof}

\subsection{Smoothing}
\begin{theorem}[RTS Smoother]
  \label{thm:rts-smoother}
  Let $(\gmp, \obsrv)$ be the LGSSM from \cref{def:lgssm}.
  Then
  \begin{equation*}
    \condrv{\gmp_\idxdtime \given \obsrv_{1:\gmplen} = \obs_{1:\gmplen}}
    \sim
    \gaussian{\ksmean_\idxdtime}{\kscov_\idxdtime},
  \end{equation*}
  where $\ksmean_\gmplen = \kfmean_\gmplen$, $\kscov_\gmplen = \kfcov_\gmplen$, and
  \begin{align*}
    \gls{ks:mean} & \defeq \kfmean_\idxdtime + \ksgain_\idxdtime (\ksmean_{\idxdtime + 1} - \kfpmean_{\idxdtime + 1})                    \\
    \gls{ks:cov}  & \defeq \kfcov_\idxdtime + \ksgain_\idxdtime (\kscov_{\idxdtime + 1} - \kfpcov_{\idxdtime + 1}) (\ksgain_\idxdtime)\T
  \end{align*}
  for $\idxdtime = 1, \dotsc, \gmplen - 1$ with $\gls{ks:gain} \defeq \kfcov_\idxdtime \gmpA_\idxdtime\T (\kfpcov_{\idxdtime + 1})\inv$.
\end{theorem}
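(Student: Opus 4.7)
The proof proceeds by backward induction on $k$ from $\gmplen$ down to $1$. The base case $k = \gmplen$ is immediate from the definitions $\ksmean_\gmplen = \kfmean_\gmplen$, $\kscov_\gmplen = \kfcov_\gmplen$ together with \cref{thm:kalman-filter}. For the inductive step, assume $\condrv{\gmp_{k+1} \given \obsrv_{1:\gmplen} = \obs_{1:\gmplen}} \sim \gaussian{\ksmean_{k+1}}{\kscov_{k+1}}$. The central step is to invoke the Markov property of the LGSSM: the future observations $\obsrv_{k+1:\gmplen}$ are measurable with respect to $\gmp_{k+1}$ together with the noise variables $\gmpnoise_{k+1:\gmplen-1}, \obsnoise_{k+1:\gmplen}$, which by \cref{def:lgssm} are jointly independent of $(\gmp_0, \gmpnoise_{0:k-1}, \obsnoise_{1:k})$ and hence of $(\gmp_k, \obsrv_{1:k})$. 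Consequently,
\begin{equation*}
  \condrv{\gmp_k \given \gmp_{k+1}, \obsrv_{1:\gmplen} = \obs_{1:\gmplen}} \sim \condrv{\gmp_k \given \gmp_{k+1}, \obsrv_{1:k} = \obs_{1:k}}.
\end{equation*}

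Next, I would evaluate the right-hand side from the joint Gaussian law of $(\gmp_k, \gmp_{k+1})$ given $\obsrv_{1:k} = \obs_{1:k}$. \Cref{thm:kalman-filter} supplies the marginals $\gaussian{\kfmean_k}{\kfcov_k}$ and $\gaussian{\kfpmean_{k+1}}{\kfpcov_{k+1}}$, while the cross-covariance equals $\kfcov_k \gmpA_k\T$ via $\gmp_{k+1} = \gmpA_k \gmp_k + \gmpb_k + \gmpnoise_k$ and independence of $\gmpnoise_k$ from $(\gmp_k, \obsrv_{1:k})$. Applying the standard Gaussian conditioning formula then gives
\begin{equation*}
  \condrv{\gmp_k \given \gmp_{k+1}, \obsrv_{1:k} = \obs_{1:k}} \sim \gaussian{\kfmean_k + \ksgain_k (\gmp_{k+1} - \kfpmean_{k+1})}{\kfcov_k - \ksgain_k \kfpcov_{k+1} \ksgain_k\T}
\end{equation*}
with $\ksgain_k = \kfcov_k \gmpA_k\T (\kfpcov_{k+1})\inv$ as stated.

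Finally, I would marginalise $\gmp_{k+1}$ against its smoother distribution from the inductive hypothesis. Since the conditional mean is affine in $\gmp_{k+1}$ and the conditional covariance is deterministic, the resulting marginal remains Gaussian. The tower property produces $\ksmean_k = \kfmean_k + \ksgain_k (\ksmean_{k+1} - \kfpmean_{k+1})$, and the law of total covariance yields
\begin{equation*}
  \kscov_k = \kfcov_k - \ksgain_k \kfpcov_{k+1} \ksgain_k\T + \ksgain_k \kscov_{k+1} \ksgain_k\T = \kfcov_k + \ksgain_k (\kscov_{k+1} - \kfpcov_{k+1}) \ksgain_k\T,
\end{equation*}
matching the stated recursion. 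The main obstacle is the rigorous justification of the Markov/conditional-independence reduction: once that is in place, the remainder is a direct application of Gaussian conditioning together with the tower and total-covariance identities. A minor subtlety is that the reduction must hold not merely for the conditional expectation and covariance but in distribution, since we use that the conditional covariance does not depend on $\gmp_{k+1}$ to conclude that the marginal stays Gaussian.
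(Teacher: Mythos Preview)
The paper does not actually prove this theorem: it is stated without proof as a standard result, with the reader referred to \citet{Sarkka2023BayesianFiltering} for background on Kalman filtering and RTS smoothing. Your argument is the classical derivation---backward induction, the conditional-independence reduction $\condrv{\gmp_k \given \gmp_{k+1}, \obsrv_{1:\gmplen}} = \condrv{\gmp_k \given \gmp_{k+1}, \obsrv_{1:k}}$, Gaussian conditioning on the joint filtering law of $(\gmp_k,\gmp_{k+1})$, then marginalisation via tower and total covariance---and it is correct as written.
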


\begin{proposition}[Inverse-Free RTS Smoother]
  \label{prop:inverse-free-smoother}
  The RTS smoother moments can be equivalently computed by the recursion
  \begin{align*}
    \ksmean_\idxdtime & = \kfpmean_\idxdtime + \kfpcov_\idxdtime \ksw_\idxdtime                                         \\
    \kscov_\idxdtime  & = \kfpcov_\idxdtime - \kfpcov_\idxdtime \ksW_\idxdtime \ps{\kfpcov_\idxdtime \ksW_\idxdtime}\T,
  \end{align*}
  where $\ksw_\gmplen = \obsH_\gmplen\T \kfgram_\gmplen\inv \kfresidual_\gmplen$, $\ksW_\gmplen \ps{\ksW_\gmplen}\T = \obsH_\gmplen\T \kfgram_\gmplen\inv \obsH_\gmplen$, and
  \begin{align}
    \gls{ks:w}
     & = \obsH_\idxdtime\T \kfgram_\idxdtime\inv \kfresidual_\idxdtime + \ps{\mI - \obsH_\idxdtime\T \kfgain_\idxdtime\T} \gmpA_\idxdtime\T \ksw_{\idxdtime + 1}                                                                                                   \nonumber                                                                                \\
     & = \obsH_\idxdtime\T \kfgram_\idxdtime\inv \kfresidual_\idxdtime + \ps{\mI - \obsH_\idxdtime\T \kfgram_\idxdtime\inv \obsH_\idxdtime \kfpcov_\idxdtime} \gmpA_\idxdtime\T \ksw_{\idxdtime + 1}                                                                               \label{eq:smooth_w}                                                      \\
    \gls{ks:W} \ps{\ksW_\idxdtime}\T
     & = \obsH_\idxdtime\T \kfgram_\idxdtime\inv \obsH_\idxdtime + \ps{\mI - \obsH_\idxdtime\T \kfgain_\idxdtime\T} \gmpA_\idxdtime\T \ksW_{\idxdtime + 1} \ps[\big]{\ps{\mI - \obsH_\idxdtime\T \kfgain_\idxdtime\T} \gmpA_\idxdtime\T \ksW_{\idxdtime + 1}}\T                                    \nonumber                                                \\
     & = \obsH_\idxdtime\T \kfgram_\idxdtime\inv \obsH_\idxdtime + \ps{\mI - \obsH_\idxdtime\T \kfgram_\idxdtime\inv \obsH_\idxdtime \kfpcov_\idxdtime} \gmpA_\idxdtime\T \ksW_{\idxdtime + 1} \ps[\big]{\ps{\mI - \obsH_\idxdtime\T \kfgram_\idxdtime\inv \obsH_\idxdtime \kfpcov_\idxdtime} \gmpA_\idxdtime\T \ksW_{\idxdtime + 1}}\T \label{eq:smooth_W}
  \end{align}
  for $\idxdtime = 1, \dotsc, \gmplen - 1$.
  Moreover,
  \begin{align}
    \ksmean_\idxdtime & = \kfmean_\idxdtime + \kfcov_\idxdtime \gmpA_\idxdtime\T \ksw_{\idxdtime + 1},                                                              \\
    \kscov_\idxdtime  & = \kfcov_\idxdtime - \kfcov_\idxdtime \gmpA_\idxdtime\T \ksW_{\idxdtime + 1} \ps{\kfcov_\idxdtime \gmpA_\idxdtime\T \ksW_{\idxdtime + 1}}\T
  \end{align}
  for $\idxdtime = 1, \dotsc, \gmplen - 1$.
\end{proposition}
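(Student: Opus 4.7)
The plan is to prove the claim by downward induction on $\idxdtime$, taking as the inductive hypothesis (IH) the two inverse-free identities $\ksmean_{\idxdtime+1} = \kfpmean_{\idxdtime+1} + \kfpcov_{\idxdtime+1} \ksw_{\idxdtime+1}$ and $\kscov_{\idxdtime+1} = \kfpcov_{\idxdtime+1} - \kfpcov_{\idxdtime+1} \ksW_{\idxdtime+1} (\kfpcov_{\idxdtime+1} \ksW_{\idxdtime+1})\T$. For the base case $\idxdtime = \gmplen$, I would simply substitute the stated initializations $\ksw_\gmplen = \obsH_\gmplen\T \kfgram_\gmplen\inv \kfresidual_\gmplen$ and $\ksW_\gmplen \ksW_\gmplen\T = \obsH_\gmplen\T \kfgram_\gmplen\inv \obsH_\gmplen$ into $\kfpmean_\gmplen + \kfpcov_\gmplen \ksw_\gmplen$ and $\kfpcov_\gmplen - \kfpcov_\gmplen \ksW_\gmplen (\kfpcov_\gmplen \ksW_\gmplen)\T$ and recognize the Kalman update $\kfmean_\gmplen = \kfpmean_\gmplen + \kfgain_\gmplen \kfresidual_\gmplen$ and $\kfcov_\gmplen = \kfpcov_\gmplen - \kfgain_\gmplen \kfgram_\gmplen \kfgain_\gmplen\T$ of \cref{thm:kalman-filter}, using $\kfgain_\gmplen = \kfpcov_\gmplen \obsH_\gmplen\T \kfgram_\gmplen\inv$.

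For the inductive step I would first derive the auxiliary forms $\ksmean_\idxdtime = \kfmean_\idxdtime + \kfcov_\idxdtime \gmpA_\idxdtime\T \ksw_{\idxdtime+1}$ and $\kscov_\idxdtime = \kfcov_\idxdtime - \kfcov_\idxdtime \gmpA_\idxdtime\T \ksW_{\idxdtime+1} (\kfcov_\idxdtime \gmpA_\idxdtime\T \ksW_{\idxdtime+1})\T$. Starting from the standard RTS recursion of \cref{thm:rts-smoother} and the IH, the difference $\ksmean_{\idxdtime+1} - \kfpmean_{\idxdtime+1}$ equals $\kfpcov_{\idxdtime+1} \ksw_{\idxdtime+1}$, and the analogous downdate for $\kscov_{\idxdtime+1} - \kfpcov_{\idxdtime+1}$ is $-\kfpcov_{\idxdtime+1} \ksW_{\idxdtime+1} (\kfpcov_{\idxdtime+1} \ksW_{\idxdtime+1})\T$; multiplication by $\ksgain_\idxdtime = \kfcov_\idxdtime \gmpA_\idxdtime\T \kfpcov_{\idxdtime+1}\inv$ then cancels the inverse on both sides, yielding the two auxiliary identities. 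This is the \emph{inverse-free} form claimed at the end of the proposition.

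Next I would convert to the predictive-covariance form by expanding $\kfmean_\idxdtime = \kfpmean_\idxdtime + \kfpcov_\idxdtime \obsH_\idxdtime\T \kfgram_\idxdtime\inv \kfresidual_\idxdtime$ and writing $\kfcov_\idxdtime$ as $(\mI - \kfgain_\idxdtime \obsH_\idxdtime) \kfpcov_\idxdtime$. Using the symmetry $\kfpcov_\idxdtime\T = \kfpcov_\idxdtime$ and $\kfgram_\idxdtime\T = \kfgram_\idxdtime$, the identity
\begin{equation*}
  (\mI - \kfgain_\idxdtime \obsH_\idxdtime) \kfpcov_\idxdtime
  = \kfpcov_\idxdtime (\mI - \obsH_\idxdtime\T \kfgram_\idxdtime\inv \obsH_\idxdtime \kfpcov_\idxdtime)
  = \kfpcov_\idxdtime (\mI - \obsH_\idxdtime\T \kfgain_\idxdtime\T)
\end{equation*}
lets me factor $\kfpcov_\idxdtime$ out on the left of every term. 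Reading off the remaining factor as $\ksw_\idxdtime$ reproduces both equalities in \eqref{eq:smooth_w}; applying the same factorization once on each side of the sum of outer products gives \eqref{eq:smooth_W}, which closes the induction.

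The main obstacle is bookkeeping in the covariance step: the cross term $\kfpcov_\idxdtime \obsH_\idxdtime\T \kfgram_\idxdtime\inv \obsH_\idxdtime \kfpcov_\idxdtime$ coming from $\kfcov_\idxdtime$ must combine cleanly with the outer-product update so that everything can be sandwiched as $\kfpcov_\idxdtime \ksW_\idxdtime \ksW_\idxdtime\T \kfpcov_\idxdtime$; this works precisely because both $\kfpcov_\idxdtime$ and $\kfgram_\idxdtime$ are symmetric, and $\kfcov_\idxdtime$ admits the symmetric factorization above. If $\kfpcov_\idxdtime$ is singular one has to read the equalities of $\ksw_\idxdtime$ and $\ksW_\idxdtime \ksW_\idxdtime\T$ as holding after left/right multiplication by $\kfpcov_\idxdtime$, which is all that is needed in $\ksmean_\idxdtime$ and $\kscov_\idxdtime$; no inverse of $\kfpcov_\idxdtime$ is ever required.
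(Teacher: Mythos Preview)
Your proposal is correct and follows essentially the same argument as the paper: downward induction with the predictive-form identities as the hypothesis, base case from the Kalman update at $\idxdtime=\gmplen$, and the inductive step obtained by first canceling $(\kfpcov_{\idxdtime+1})\inv$ in $\ksgain_\idxdtime$ to get the $\kfcov_\idxdtime \gmpA_\idxdtime\T$ form, then expanding $\kfmean_\idxdtime,\kfcov_\idxdtime$ via the filter update and factoring $\kfpcov_\idxdtime$ out on the left. The only cosmetic difference is that the paper writes $\kfcov_\idxdtime = \kfpcov_\idxdtime - \kfgain_\idxdtime \kfgram_\idxdtime \kfgain_\idxdtime\T$ rather than $(\mI - \kfgain_\idxdtime \obsH_\idxdtime)\kfpcov_\idxdtime$, but your symmetric factorization $(\mI - \kfgain_\idxdtime \obsH_\idxdtime)\kfpcov_\idxdtime = \kfpcov_\idxdtime(\mI - \obsH_\idxdtime\T \kfgain_\idxdtime\T)$ is exactly what makes that step go through.
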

\begin{proof}
  For $\idxdtime = \gmplen$, we have
  \begin{equation*}
    \ksmean_\gmplen
    = \kfmean_\gmplen
    = \kfpmean_\gmplen + \kfgain_\gmplen \kfresidual_\gmplen
    = \kfpmean_\gmplen + \kfpcov_\gmplen \underbracket[0.1ex]{\obsH_\gmplen\T \kfgram_\gmplen\inv \kfresidual_\gmplen}_{= \ksw_\gmplen}
  \end{equation*}
  and
  \begin{equation*}
    \kscov_\gmplen
    = \kfcov_\gmplen
    = \kfpcov_\gmplen - \kfgain_\gmplen \kfgram_\gmplen \kfgain_\gmplen\T
    = \kfpcov_\gmplen - \kfpcov_\gmplen \underbracket[0.1ex]{\obsH_\gmplen\T \kfgram_\gmplen\inv \kfgram_\gmplen \kfgram_\gmplen\inv \obsH_\gmplen}_{= \ksW_\gmplen \ps{\ksW_\gmplen}\T} \kfpcov_\gmplen.
  \end{equation*}
  Now let $1 \le \idxdtime < \gmplen$ and assume that
  \begin{alignat*}{5}
    \ksmean_{\idxdtime + 1}
     & = \kfpmean_{\idxdtime + 1} + \kfpcov_{\idxdtime + 1} \ksw_{\idxdtime + 1}
     & \qquad \Leftrightarrow \qquad                                                                                                &
     & \ksmean_{\idxdtime + 1} - \kfpmean_{\idxdtime + 1}
     & = \kfpcov_{\idxdtime + 1} \ksw_{\idxdtime + 1},                                                                                \\
    \kscov_{\idxdtime + 1}
     & = \kfpcov_{\idxdtime + 1} - \kfpcov_{\idxdtime + 1} \ksW_{\idxdtime + 1} \ps{\kfpcov_{\idxdtime + 1} \ksW_{\idxdtime + 1}}\T
     & \qquad \Leftrightarrow \qquad                                                                                                &
     & \kscov_{\idxdtime + 1} - \kfpcov_{\idxdtime + 1}
     & = - \kfpcov_{\idxdtime + 1} \ksW_{\idxdtime + 1} \ps{\kfpcov_{\idxdtime + 1} \ksW_{\idxdtime + 1}}\T.
  \end{alignat*}
  It follows that
  \begin{align*}
    \ksmean_\idxdtime
     & = \kfmean_\idxdtime + \ksgain_\idxdtime (\ksmean_{\idxdtime + 1} - \kfpmean_{\idxdtime + 1})                                                                                                                \\
     & = \kfmean_\idxdtime + \ksgain_\idxdtime \kfpcov_{\idxdtime + 1} \ksw_{\idxdtime + 1}                                                                                                                        \\
     & = \kfmean_\idxdtime + \kfcov_\idxdtime \gmpA_\idxdtime\T \ksw_{\idxdtime + 1}                                                                                                                               \\
     & = \kfpmean_\idxdtime + \kfgain_\idxdtime \kfresidual_\idxdtime + \ps{\kfpcov_\idxdtime - \kfgain_\idxdtime \kfgram_\idxdtime \kfgain_\idxdtime\T} \gmpA_\idxdtime\T \ksw_{\idxdtime + 1}                    \\
     & = \kfpmean_\idxdtime + \kfpcov_\idxdtime \ps[\Big]{\obsH_\idxdtime\T \kfgram_\idxdtime\inv \kfresidual_\idxdtime + \ps{\mI - \obsH_\idxdtime\T \kfgain_\idxdtime\T} \gmpA_\idxdtime\T \ksw_{\idxdtime + 1}} \\
     & = \kfpmean_\idxdtime + \kfpcov_\idxdtime \ksw_\idxdtime
  \end{align*}
  and
  \begin{align*}
    \kscov_\idxdtime
     & = \kfcov_\idxdtime + \ksgain_\idxdtime (\kscov_{\idxdtime + 1} - \kfpcov_{\idxdtime + 1}) (\ksgain_\idxdtime)\T                                                                                                                                                                                                                              \\
     & = \kfcov_\idxdtime - \kfcov_\idxdtime \gmpA_\idxdtime\T (\kfpcov_{\idxdtime + 1})\inv \kfpcov_{\idxdtime + 1} \ksW_{\idxdtime + 1} \ps{\kfpcov_{\idxdtime + 1} \ksW_{\idxdtime + 1}}\T (\kfpcov_{\idxdtime + 1})\inv \gmpA_\idxdtime \kfcov_\idxdtime                                                                                        \\
     & = \kfcov_\idxdtime - \kfcov_\idxdtime \gmpA_\idxdtime\T \ksW_{\idxdtime + 1} \ps[\big]{\kfcov_\idxdtime \gmpA_\idxdtime\T \ksW_{\idxdtime + 1}}\T                                                                                                                                                                                            \\
     & = \kfpcov_\idxdtime - \kfgain_\idxdtime \kfgram_\idxdtime \kfgain_\idxdtime\T - \ps{\kfpcov_\idxdtime - \kfgain_\idxdtime \kfgram_\idxdtime \kfgain_\idxdtime\T} \gmpA_\idxdtime\T \ksW_{\idxdtime + 1} \ps[\big]{\ps{\kfpcov_\idxdtime - \kfgain_\idxdtime \kfgram_\idxdtime \kfgain_\idxdtime\T} \gmpA_\idxdtime\T \ksW_{\idxdtime + 1}}\T \\
     & = \kfpcov_\idxdtime - \kfpcov_\idxdtime \ps[\Big]{\obsH_\idxdtime\T \kfgram_\idxdtime\inv \obsH_\idxdtime + \ps{\mI - \obsH_\idxdtime\T \kfgain_\idxdtime\T} \gmpA_\idxdtime\T \ksW_{\idxdtime + 1} \ps[\big]{\ps{\mI - \obsH_\idxdtime\T \kfgain_\idxdtime\T} \gmpA_\idxdtime\T \ksW_{\idxdtime + 1}}\T} \kfpcov_\idxdtime                  \\
     & = \kfpcov_\idxdtime - \kfpcov_\idxdtime \mW_\idxdtime \kfpcov_\idxdtime.
  \end{align*}
\end{proof}

\subsection{Sampling via Matheron's Rule}
\label{sec:matheron-sampling}
The naive approach to sampling from a multivariate normal distribution (e.g., by Cholesky factorization or eigendecomposition) has cubic cost and requires storing the covariance matrix in memory, which is not possible for large state-space dimension.
We alleviate this by applying Matheron's rule \citep{Matheron1963PrinciplesGeostatistics,Wilson2020EfficientlySampling} to the (computation-aware) Kalman filter and RTS smoother recursions, making it possible to sample the filtering and smoothing posteriors by transforming samples from the prior.
\begin{lemma}[Matheron's Rule]
  \label{lem:matherons-rule}
  Let $\rvx \sim \gaussian{\vmu}{\mSigma}$, $\mA \in \R^{N_\mA \times D}$, $\mB \in \R^{N_\mB \times D}$, and $\vbeta \in \range{\mB \mSigma}$.
  Define
  \begin{equation*}
    \bm{\mathcal{M}}_{\mSigma, \mA, \mB, \vbeta} \colon \R^D \to \R^{N_\mA},
    \vxi \mapsto \expectation[\tilde{\rvx}][\gaussian{\vxi}{\mSigma}]{\condrv{\mA \tilde{\rvx} \given \mB \tilde{\rvx} = \vbeta}}.
  \end{equation*}
  Then $(\condrv{\mA \rvx \given \mB \rvx = \vbeta}) \stackrel{d}{=} \bm{\mathcal{M}}_{\mSigma, \mA, \mB, \vbeta}(\rvx)$.
\end{lemma}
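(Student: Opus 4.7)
The plan is to compute $\bm{\mathcal{M}}_{\mSigma, \mA, \mB, \vbeta}$ explicitly as an affine map, then verify that its push-forward of $\rvx \sim \gaussian{\vmu}{\mSigma}$ has the same mean and covariance as the conditional distribution $\condrv{\mA \rvx \given \mB \rvx = \vbeta}$. Since both laws are Gaussian---the former as an affine image of a Gaussian, the latter because Gaussian conditionals are Gaussian---matching the first two moments is sufficient.

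First, I would apply the standard Gaussian conditioning formula to $\tilde{\rvx} \sim \gaussian{\vxi}{\mSigma}$ to obtain
\[
\bm{\mathcal{M}}_{\mSigma, \mA, \mB, \vbeta}(\vxi) = \mA \vxi + \mK (\vbeta - \mB \vxi) = (\mA - \mK \mB) \vxi + \mK \vbeta,
\]
where $\mK \defeq \mA \mSigma \mB\T (\mB \mSigma \mB\T)^{+}$ plays the role of a Kalman-gain-like matrix. Because $\mB \mSigma \mB\T$ may be singular, the pseudoinverse is required; the hypothesis $\vbeta \in \range{\mB \mSigma}$, combined with the identity $\range{\mB \mSigma} = \range{\mB \mSigma \mB\T}$ (an easy consequence of $(\mB \mSigma \mB\T) \vv = \vec{0} \Rightarrow \norm{\mSigma^{1/2} \mB\T \vv}^2 = 0$, which in turn gives $\mSigma \mB\T \vv = \vec{0}$), ensures that this affine expression is well-defined and independent of the particular generalized inverse chosen.

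Second, since $\bm{\mathcal{M}}_{\mSigma, \mA, \mB, \vbeta}(\rvx) = (\mA - \mK \mB)\rvx + \mK \vbeta$ is an affine function of a Gaussian, it is itself Gaussian. Its mean $\mA \vmu + \mK(\vbeta - \mB \vmu)$ matches the conditional mean by construction. For the covariance I would expand
\[
(\mA - \mK \mB) \mSigma (\mA - \mK \mB)\T = \mA \mSigma \mA\T - \mC \mK\T - \mK \mC\T + \mK (\mB \mSigma \mB\T) \mK\T,
\]
where $\mC \defeq \mA \mSigma \mB\T$, and use the pseudoinverse identities $(\mB \mSigma \mB\T)^{+} (\mB \mSigma \mB\T) (\mB \mSigma \mB\T)^{+} = (\mB \mSigma \mB\T)^{+}$ together with the symmetry $((\mB \mSigma \mB\T)^{+})\T = (\mB \mSigma \mB\T)^{+}$ to collapse the last three terms into the single term $-\mC (\mB \mSigma \mB\T)^{+} \mC\T$. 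The result equals the standard conditional covariance $\mA \mSigma \mA\T - \mA \mSigma \mB\T (\mB \mSigma \mB\T)^{+} \mB \mSigma \mA\T$.

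The main obstacle is careful bookkeeping with the pseudoinverse in the rank-deficient case. The key auxiliary fact needed is $\range{\mC\T} = \range{\mB \mSigma \mA\T} \subseteq \range{\mB \mSigma} = \range{\mB \mSigma \mB\T}$, which ensures that $(\mB \mSigma \mB\T)(\mB \mSigma \mB\T)^{+}$ acts as the identity on $\range{\mC\T}$; this is precisely what makes the cancellations in the covariance computation go through. Once this algebra is handled, the remainder of the argument reduces to the elementary observation that Gaussianity is preserved under affine maps.
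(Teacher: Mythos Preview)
Your proposal is correct and follows essentially the same route as the paper: compute $\bm{\mathcal{M}}_{\mSigma,\mA,\mB,\vbeta}$ explicitly as the affine map $\vxi \mapsto \mA\vxi + \mA\mSigma\mB\T(\mB\mSigma\mB\T)^{+}(\vbeta-\mB\vxi)$, then verify that its push-forward of $\rvx$ agrees with the conditional law. The only difference is that the paper, having rewritten $\bm{\mathcal{M}}(\rvx)$ as $\rva + \covariance{\rva}{\rvb}\,\covariance{\rvb}{\rvb}^{-1}(\vbeta-\rvb)$ with $\rva=\mA\rvx$, $\rvb=\mB\rvx$, defers the final distributional equality to \citet[Theorem~1]{Wilson2020EfficientlySampling}, whereas you carry out the moment-matching directly; your version is therefore more self-contained and also more careful about the rank-deficient case (the paper's proof writes $(\mB\mSigma\mB\T)^{-1}$ rather than the pseudoinverse).
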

\begin{proof}
  Let $\rva \defeq \mA \rvx$ and $\rvb \defeq \mB \rvx$.
  Then
  \begin{equation*}
    \begin{pmatrix}
      \rva \\
      \rvb
    \end{pmatrix}
    \sim
    \gaussian{
      \begin{pmatrix}
        \mA \vmu \\
        \mB \vmu
      \end{pmatrix}
    }{
      \begin{pmatrix}
        \mA \mSigma \mA\T & \mA \mSigma \mB\T \\
        \mB \mSigma \mA\T & \mB \mSigma \mB\T
      \end{pmatrix}
    }
  \end{equation*}
  and
  \begin{equation*}
    \bm{\mathcal{M}}_{\mSigma, \mA, \mB, \vbeta}(\vxi)
    = \expectation[\tilde{\rvx}][\gaussian{\vxi}{\mSigma}]{\condrv{\mA \tilde{\rvx} \given \mB \tilde{\rvx} = \vbeta}}
    = \mA \vxi + \mA \mSigma \mB\T (\mB \mSigma \mB\T)\inv (\vbeta - \mB \vxi).
  \end{equation*}
  Hence,
  \begin{equation*}
    \bm{\mathcal{M}}_{\mSigma, \mA, \mB, \vbeta}(\rvx)
    = \underbrace{\mA \rvx}_{= \rva} + \underbrace{\mA \mSigma \mB\T}_{= \covariance{\rva}{\rvb}} (\underbrace{\mB \mSigma \mB\T}_{= \covariance{\rvb}{\rvb}})\inv (\vbeta - \underbrace{\mB \rvx}_{= \rvb})
  \end{equation*}
  and thus the statement follows from \citet[Theorem 1]{Wilson2020EfficientlySampling}.
\end{proof}
To proceed, we assume that it is feasible to obtain an (approximate) sample from the initial state $\gmp_0 \sim \gaussian{\gmpmean_{0}}{\gmpcov_{0}}$, as well as (approximate) samples from the dynamics and observational noise $\gmpnoise_{\idxdtime-1} \sim \gaussian{\vec{0}}{\gmpnoisecov_{\idxdtime-1}}$, $\obsnoise_\idxdtime \sim \gaussian{\vec{0}}{\obsnoisecov_\idxdtime}$, $\idxdtime=1,\dots, \gmplen$.
This assumption is reasonable because the covariance matrices $\gmpcov_{0}, \gmpnoisecov_{\idxdtime-1}, \obsnoisecov_\idxdtime$ are often simple or highly structured; for example it is common for $\obsnoisecov_\idxdtime$ to be diagonal.
Moreover, for discretized spatiotemporal Gauss--Markov processes one can use function space approximations like random Fourier features (RFF) \citep{Rahimi2007RandomFeatures} to obtain approximate samples from $\gmp_0$ and $\gmpnoise_{\idxdtime-1}$ \citep[see also][]{Wilson2020EfficientlySampling}.
Finally, Krylov methods can be used to approximate matrix square roots of the covariances in a matrix-free fashion \citep[see e.g.,][]{Pleiss2020MatrixSquareRoots}.
With these samples, \cref{prop:matheron_filter_smooth} shows how Matheron sampling can be implemented for the standard Kalman filter and RTS smoother, while \cref{prop:matheron_smooth_inverse_free} gives an equivalent form of Matheron sampling for the smoother that circumvents inversion of state covariance matrices.

Each of these approaches can be applied to the modified state-space model used in the CAKF and the CAKS at low cost, again recycling computed values from the filtering pass in \cref{alg:mfkf,alg:projected_update}.
The resulting algorithm for sampling from the computation-aware posterior process $\set{\condrv{\gmp_\idxdtime \given \cakfprojobsrv_{1:\gmplen} = \cakfprojobs_{1:\gmplen}}}_{\idxdtime = 0}^{\gmplen}$ is detailed in \cref{alg:cakf-caks-sampler}.
If it is stopped early before \cref{algline:cakf-sample}, then it can also be used to compute samples from the CAKF states $\set{\condrv{\gmp_\idxdtime \given \cakfprojobsrv_{1:\idxdtime} = \cakfprojobs_{1:\idxdtime}}}_{\idxdtime = 0}^{\gmplen}$.
Also note that \cref{alg:cakf-caks-sampler} allows us to sample from the full Bayesian posterior without running the CAKS, since all quantities used above have already been computed by the filter.

\begin{theorem}
  \label{prop:matheron_filter_smooth}
  Let $(\gmp, \obsrv)$ be the LGSSM from \cref{def:lgssm}.
  Fix $\obs_\idxdtime \in \R^{\obsdim_\idxdtime}$ for $\idxdtime = 1, \dotsc, \gmplen$ and define
  \begin{align*}
    \gls{kf:pstate} & \defeq \gmpA_{\idxdtime - 1} \kfstate_{\idxdtime - 1} + \gmpb_{\idxdtime - 1} + \gmpnoise_{\idxdtime - 1} \\
    \gls{kf:pobsrv} & \defeq \obsH_\idxdtime \kfpstate_\idxdtime + \vc_\idxdtime + \obsnoise_\idxdtime                          \\
    \gls{kf:state}  & \defeq \kfpstate_\idxdtime + \kfgain_\idxdtime (\obs_\idxdtime - \kfpobsrv_\idxdtime)
  \end{align*}
  for $\idxdtime = 1, \dotsc, \gmplen$, where $\kfstate_0 \defeq \gmp_0$, as well as
  \begin{equation*}
    \gls{ks:state} \defeq \kfstate_\idxdtime + \ksgain_\idxdtime (\ksstate_{\idxdtime + 1} - \kfpstate_{\idxdtime + 1})
  \end{equation*}
  for $\idxdtime = \gmplen - 1, \dotsc, 0$, where $\ksstate_\gmplen \defeq \kfstate_\gmplen$.
  Then
  \begin{equation*}
    (\ksstate_1, \dotsc, \ksstate_\gmplen)
    \stackrel{d}{=}
    \ps[\Big]{\condrv{(\gmp_1, \dotsc, \gmp_\gmplen) \given \obsrv_1 = \obs_1, \dotsc, \obsrv_\gmplen = \obs_\gmplen}}.
  \end{equation*}
\end{theorem}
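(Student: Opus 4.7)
The plan is to use the Matheron rule (Lemma~\ref{lem:matherons-rule}) twice, combined with the (forward and backward) Markov structure of the posterior process, to verify the distributional identity by induction on the time index.

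First, I would prove the following forward claim by induction on $\idxdtime = 0, 1, \dotsc, \gmplen$:
\begin{equation*}
  (\kfstate_\idxdtime, \kfpstate_{\idxdtime+1}) \stackrel{d}{=} \ps[\big]{\condrv{(\gmp_\idxdtime, \gmp_{\idxdtime+1}) \given \obsrv_{1:\idxdtime} = \obs_{1:\idxdtime}}}.
\end{equation*}
For $\idxdtime = 0$, $\kfstate_0 = \gmp_0$ by definition and $\kfpstate_1 = \gmpA_0 \kfstate_0 + \gmpb_0 + \gmpnoise_0$ with $\gmpnoise_0 \sim \gaussian{\vec{0}}{\gmpnoisecov_0}$ independent, matching the unconditioned joint $(\gmp_0, \gmp_1)$. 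In the inductive step, independence of $\obsnoise_\idxdtime$ from $\kfpstate_\idxdtime$ together with the predictive hypothesis gives $(\kfpstate_\idxdtime, \kfpobsrv_\idxdtime) \stackrel{d}{=} \condrv{(\gmp_\idxdtime, \obsrv_\idxdtime) \given \obsrv_{1:\idxdtime-1} = \obs_{1:\idxdtime - 1}}$. Applying Lemma~\ref{lem:matherons-rule} with $\mA$ projecting to the $\gmp_\idxdtime$-component, $\mB$ projecting to the $\obsrv_\idxdtime$-component, and $\vbeta = \obs_\idxdtime$, and recognizing the transformation coefficient as the Kalman gain $\kfgain_\idxdtime = \kfpcov_\idxdtime \obsH_\idxdtime\T \kfgram_\idxdtime\inv$, yields $\kfstate_\idxdtime \stackrel{d}{=} \condrv{\gmp_\idxdtime \given \obsrv_{1:\idxdtime} = \obs_{1:\idxdtime}}$. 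Extending to the joint with $\kfpstate_{\idxdtime + 1}$ follows from pushing $\kfstate_\idxdtime$ through the (independent) dynamics noise and invoking the Markov property of $\gmp$ conditional on $\obsrv_{1:\idxdtime}$.

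Next, I would establish the main claim by backward induction on $\idxdtime = \gmplen, \gmplen - 1, \dotsc, 1$: namely, that
\begin{equation*}
  (\ksstate_\idxdtime, \ksstate_{\idxdtime+1}, \dotsc, \ksstate_\gmplen) \stackrel{d}{=} \ps[\big]{\condrv{(\gmp_\idxdtime, \gmp_{\idxdtime+1}, \dotsc, \gmp_\gmplen) \given \obsrv_{1:\gmplen} = \obs_{1:\gmplen}}}.
\end{equation*}
The base case $\idxdtime = \gmplen$ is immediate from Step~1. For the inductive step, the key observation is that the posterior remains a Markov chain, so
\begin{equation*}
  \condrv{\gmp_\idxdtime \given \gmp_{\idxdtime+1:\gmplen}, \obsrv_{1:\gmplen} = \obs_{1:\gmplen}} = \condrv{\gmp_\idxdtime \given \gmp_{\idxdtime+1}, \obsrv_{1:\idxdtime} = \obs_{1:\idxdtime}}.
\end{equation*}
By Step~1, $(\kfstate_\idxdtime, \kfpstate_{\idxdtime+1})$ realises precisely the joint law on the right-hand side of the conditioning event. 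Applying Lemma~\ref{lem:matherons-rule} once more, this time with $\mA$ projecting to the first component, $\mB$ projecting to the second, and the induced coefficient matching the smoother gain $\ksgain_\idxdtime = \kfcov_\idxdtime \gmpA_\idxdtime\T (\kfpcov_{\idxdtime+1})\inv$, gives for any fixed $\vx$ that
\begin{equation*}
  \kfstate_\idxdtime + \ksgain_\idxdtime (\vx - \kfpstate_{\idxdtime+1}) \stackrel{d}{=} \condrv{\gmp_\idxdtime \given \gmp_{\idxdtime+1} = \vx,\, \obsrv_{1:\idxdtime} = \obs_{1:\idxdtime}}.
\end{equation*}
Combining this with the inductive hypothesis on $(\ksstate_{\idxdtime+1}, \dotsc, \ksstate_\gmplen)$ and the backward Markov identity then produces the desired joint distributional equality.

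The delicate point --- and the main obstacle --- is the ``plug-in'' step in the backward induction: Lemma~\ref{lem:matherons-rule} provides a conditional distribution for a deterministic $\vbeta$, whereas here we substitute the random variable $\ksstate_{\idxdtime+1}$. To make this rigorous, I would argue via disintegration: since $\ksstate_{\idxdtime+1:\gmplen}$ is (by the inductive hypothesis) conditionally independent of $\kfstate_\idxdtime$ given $\kfpstate_{\idxdtime+1}$ under the joint law constructed by the algorithm (which mirrors the backward Markov factorisation), the tower property lets one condition first on $\kfpstate_{\idxdtime+1} = \vx$, apply the fixed-$\vbeta$ version of Matheron's rule, and then integrate against the law of $\ksstate_{\idxdtime+1}$. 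Alternatively, since every quantity is a linear function of the jointly Gaussian primitives $\gmp_0, \set{\gmpnoise_\idxdtime}, \set{\obsnoise_\idxdtime}$, one can verify the claim by direct computation: the tuple $(\ksstate_1, \dotsc, \ksstate_\gmplen)$ is Gaussian, and its marginal means and cross-covariances agree with those given by \cref{thm:rts-smoother} by induction, which together characterise the joint law uniquely.
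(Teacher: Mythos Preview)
Your step-by-step approach is different from the paper's, and the place you flag as ``delicate'' is exactly where the paper's argument is cleaner. The paper does \emph{not} apply \cref{lem:matherons-rule} locally at each time step. Instead it stacks all the independent Gaussian primitives into one vector $\rvx = (\gmp_0, \tilde{\gmpnoise}_{0:\gmplen-1}, \tilde{\obsnoise}_{1:\gmplen})$, takes $\bm{\mathcal{U}}$ and $\bm{\mathcal{Y}}$ to be the linear maps sending $\rvx$ to $(\gmp_1,\dotsc,\gmp_\gmplen)$ and $(\obsrv_1,\dotsc,\obsrv_\gmplen)$ respectively, and applies Matheron's rule \emph{once} to this global joint. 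The crucial observation is that the Matheron map $\bm{\mathcal{M}}_{\tilde{\mSigma},\bm{\mathcal{U}},\bm{\mathcal{Y}},\obs_{1:\gmplen}}(\vxi)$, being an affine function of the mean argument $\vxi$, is computed componentwise by running the Kalman filter and RTS smoother recursions with $\vxi$ in place of the primitive means (the gain matrices $\kfgain_\idxdtime,\ksgain_\idxdtime$ depend only on covariances, which are unchanged). A short induction then shows that evaluating these recursions at the \emph{random} input $\rvx$ produces exactly $\kfpstate_\idxdtime$, $\kfstate_\idxdtime$, and $\ksstate_\idxdtime$ pointwise, so the distributional identity follows in one stroke from \cref{lem:matherons-rule}.

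Your route can be made rigorous---the conditional independence of $\kfstate_\idxdtime$ and $\ksstate_{\idxdtime+1:\gmplen}$ given $\kfpstate_{\idxdtime+1}$ does hold, since the latter is a function of $\kfpstate_{\idxdtime+1}$ and noises at times $\ge \idxdtime+1$ that are independent of everything generating $\kfstate_\idxdtime$---but the plug-in step still requires a disintegration argument to pass from ``Matheron for fixed $\vbeta$'' to the random substitution $\vbeta = \ksstate_{\idxdtime+1}$, since you are conditioning on $\ksstate_{\idxdtime+1:\gmplen}$ rather than on $\kfpstate_{\idxdtime+1}$. Your fallback of matching moments directly also works but is more laborious. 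The paper's single global application of Matheron sidesteps all of this: because the entire filter-plus-smoother is one affine map of the primitives, no random plug-in ever arises.
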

\begin{proof}
  Let $\tilde{\gmpnoise}_\idxdtime \defeq \gmpnoise_\idxdtime + \gmpb_\idxdtime$ and $\tilde{\obsnoise}_\idxdtime \defeq \obsnoise_\idxdtime + \vc_\idxdtime$.
  Then $\rvx \defeq (\gmp_0, \tilde{\gmpnoise}_0, \dotsc, \tilde{\gmpnoise}_{\gmplen - 1}, \tilde{\obsnoise}_1, \dotsc, \tilde{\obsnoise}_\gmplen)$ is jointly Gaussian with pairwise independent components and covariance matrix $\tilde{\mSigma}$.
  \begin{align*}
    \gmp_{\idxdtime + 1} & = \gmpA_\idxdtime \gmp_\idxdtime + \tilde{\gmpnoise}_\idxdtime, \qquad \text{and} \\
    \obsrv_\idxdtime     & = \obsH_\idxdtime \gmp_\idxdtime + \tilde{\obsnoise}_\idxdtime
  \end{align*}
  pointwise.
  Let $\bm{\mathcal{U}}$ and $\bm{\mathcal{Y}}$ be the linear operators defined by
  \begin{align*}
    \bm{\mathcal{U}}(\gmp_0, \tilde{\gmpnoise}_0, \dotsc, \tilde{\gmpnoise}_{\gmplen - 1}, \tilde{\obsnoise}_1, \dotsc, \tilde{\obsnoise}_\gmplen) & = (\gmp_1, \dotsc, \gmp_\gmplen), \qquad \text{and} \\
    \bm{\mathcal{Y}}(\gmp_0, \tilde{\gmpnoise}_0, \dotsc, \tilde{\gmpnoise}_{\gmplen - 1}, \tilde{\obsnoise}_1, \dotsc, \tilde{\obsnoise}_\gmplen) & = (\obsrv_1, \dotsc, \obsrv_\gmplen).
  \end{align*}
  By \cref{thm:kalman-filter,thm:rts-smoother}, the operator $\bm{\mathcal{M}}_{\tilde{\mSigma}, \bm{\mathcal{U}}, \bm{\mathcal{Y}}, \vy_{1:\gmplen}}$ from \cref{lem:matherons-rule} corresponding to this model is given by
  \begin{equation*}
    \bm{\mathcal{M}}_{\tilde{\mSigma}, \bm{\mathcal{U}}, \bm{\mathcal{Y}}, \vy_{1:\gmplen}}(\vxi)
    = (\ksmean_1(\vxi), \dotsc, \ksmean_\gmplen(\vxi)),
  \end{equation*}
  where, for $\vxi = (\vxi^\gmp_0, \vxi^{\tilde{\gmpnoise}}_0, \dotsc, \vxi^{\tilde{\gmpnoise}}_{\gmplen - 1}, \vxi^{\tilde{\obsnoise}}_1, \dotsc, \vxi^{\tilde{\obsnoise}}_\gmplen)$,
  \begin{align*}
    \kfmean_0(\vxi)          & \defeq \vxi^\gmp_0,                                                                                                                                                      \\
    \kfpmean_\idxdtime(\vxi) & \defeq \gmpA_{\idxdtime - 1} \kfmean_{\idxdtime - 1}(\vxi) + \vxi^{\tilde{\gmpnoise}}_{\idxdtime - 1},                                                                   \\
    \kfmean_\idxdtime(\vxi)  & \defeq \kfpmean_\idxdtime(\vxi) + \kfgain_\idxdtime (\vy_\idxdtime - (\obsH_\idxdtime \kfpmean_\idxdtime(\vxi) + \vxi^{\tilde{\obsnoise}}_\idxdtime)), \qquad \text{and} \\
    \ksmean_\idxdtime(\vxi)  & \defeq \kfmean_\idxdtime(\vxi) + \ksgain_\idxdtime (\ksmean_{\idxdtime + 1}(\vxi) - \kfpmean_{\idxdtime + 1}(\vxi)),
  \end{align*}
  Note that $\kfmean_0(\rvx) = \gmp_0$ and hence
  \begin{align*}
    \kfpmean_\idxdtime(\rvx) & = \gmpA_{\idxdtime - 1} \underbrace{\kfmean_{\idxdtime - 1}(\rvx)}_{= \kfstate_\idxdtime} + \tilde{\gmpnoise}_{\idxdtime - 1} = \kfpstate_\idxdtime, \qquad \text{and}                       \\
    \kfmean_\idxdtime(\rvx)  & = \kfpstate_\idxdtime + \kfgain_\idxdtime (\obs_\idxdtime - \underbrace{(\obsH_\idxdtime \kfpmean_\idxdtime(\rvx) + \tilde{\obsrv}_\idxdtime)}_{= \kfpobsrv_\idxdtime}) = \kfstate_\idxdtime
  \end{align*}
  by induction on $\idxdtime$.
  Moreover, $\ksmean_\gmplen(\rvx) = \kfmean_\gmplen(\rvx) = \kfstate_\gmplen = \ksstate_\gmplen$ and thus
  \begin{equation*}
    \ksmean_\idxdtime(\rvx)
    = \underbrace{\kfmean_\idxdtime(\rvx)}_{= \kfstate_\idxdtime} + \ksgain_\idxdtime (\underbrace{\ksmean_{\idxdtime + 1}(\rvx)}_{= \ksstate_{\idxdtime + 1}} - \underbrace{\kfpmean_{\idxdtime + 1}(\rvx)}_{= \kfpstate_\idxdtime})
    = \ksstate_\idxdtime
  \end{equation*}
  by induction on $\idxdtime$.
  Finally, by \cref{lem:matherons-rule}, we arrive at
  \begin{align*}
    (\ksstate_1, \dotsc, \ksstate_\gmplen)
    = \bm{\mathcal{M}}_{\tilde{\mSigma}, \bm{\mathcal{U}}, \bm{\mathcal{Y}}, \vy_{1:\gmplen}}(\rvx)
    \stackrel{d}{=} \ps[\Big]{\condrv{\bm{\linop{U}}(\rvx) \given \bm{\linop{Y}}(\rvx) = \obs_{1:\gmplen}}}
    = \ps[\Big]{\condrv{(\gmp_1, \dotsc, \gmp_\gmplen) \given \obsrv_1 = \obs_1, \dotsc, \obsrv_\gmplen = \obs_\gmplen}}.
  \end{align*}
\end{proof}

\begin{proposition}[Inverse-Free Posterior Sampling] \label{prop:matheron_smooth_inverse_free}
  Samples from the smoothing posterior can be equivalently computed by means of the recursion
  \begin{equation*}
    \ksstate_\idxdtime = \kfpstate_\idxdtime + \kfpcov_\idxdtime \kswrv_\idxdtime,
  \end{equation*}
  where $\kswrv_\gmplen \defeq \obsH_\gmplen\T \kfgram_\gmplen\inv (\obs_\gmplen - \kfpobsrv_\gmplen)$, and
  \begin{align*}
    \gls{ks:wrv}
     & \defeq \obsH_\idxdtime \kfgram_\idxdtime\inv (\obs_\idxdtime - \kfpobsrv_\idxdtime) + \ps{\mI - \obsH_\idxdtime\T \kfgain_\idxdtime\T} \gmpA_\idxdtime\T \kswrv_{\idxdtime + 1}                              \\
     & = \obsH_\idxdtime \kfgram_\idxdtime\inv (\obs_\idxdtime - \kfpobsrv_\idxdtime) + \ps{\mI - \obsH_\idxdtime\T \kfgram_\idxdtime\inv \obsH_\idxdtime \kfpcov_\idxdtime} \gmpA_\idxdtime\T \rvw_{\idxdtime + 1}
  \end{align*}
  for $\idxdtime = 0, \dotsc, \gmplen - 1$.
  Moreover,
  \begin{equation*}
    \ksstate_\idxdtime = \kfstate_\idxdtime + \kfcov_\idxdtime \gmpA_\idxdtime\T \kswrv_{\idxdtime + 1}
  \end{equation*}
  pointwise for $\idxdtime = 0, \dotsc, \gmplen - 1$.
\end{proposition}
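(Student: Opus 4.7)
The plan is to perform downward induction on $\idxdtime$ from $\gmplen$ to $0$, showing that the ansatz $\ksstate_\idxdtime = \kfpstate_\idxdtime + \kfpcov_\idxdtime \kswrv_\idxdtime$ reproduces the Matheron-rule sampler established in \cref{prop:matheron_filter_smooth}. The argument is the pathwise analog of the proof of \cref{prop:inverse-free-smoother}, with $\kfresidual_\idxdtime = \obs_\idxdtime - \obsH_\idxdtime \kfpmean_\idxdtime - \vc_\idxdtime$ replaced by the sample residual $\obs_\idxdtime - \kfpobsrv_\idxdtime$. This parallelism is essentially automatic because the Matheron recursion in \cref{prop:matheron_filter_smooth} is affine in its inputs, so the same linear-algebraic manipulations carry over verbatim from the moment-space setting to the sample-space setting.

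For the base case $\idxdtime = \gmplen$, I would unfold $\ksstate_\gmplen = \kfstate_\gmplen = \kfpstate_\gmplen + \kfgain_\gmplen(\obs_\gmplen - \kfpobsrv_\gmplen)$ and use $\kfgain_\gmplen = \kfpcov_\gmplen \obsH_\gmplen\T \kfgram_\gmplen\inv$ to recover $\kfpstate_\gmplen + \kfpcov_\gmplen \kswrv_\gmplen$ for the stated initial value of $\kswrv_\gmplen$.

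For the inductive step, assume $\ksstate_{\idxdtime + 1} - \kfpstate_{\idxdtime + 1} = \kfpcov_{\idxdtime + 1} \kswrv_{\idxdtime + 1}$. Substituting into the Matheron recursion and using $\ksgain_\idxdtime = \kfcov_\idxdtime \gmpA_\idxdtime\T (\kfpcov_{\idxdtime + 1})\inv$ gives
\[
  \ksstate_\idxdtime = \kfstate_\idxdtime + \ksgain_\idxdtime \kfpcov_{\idxdtime + 1} \kswrv_{\idxdtime + 1} = \kfstate_\idxdtime + \kfcov_\idxdtime \gmpA_\idxdtime\T \kswrv_{\idxdtime + 1},
\]
which immediately yields the ``moreover'' claim. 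Expanding $\kfstate_\idxdtime = \kfpstate_\idxdtime + \kfgain_\idxdtime(\obs_\idxdtime - \kfpobsrv_\idxdtime)$ and $\kfcov_\idxdtime = \kfpcov_\idxdtime - \kfgain_\idxdtime \kfgram_\idxdtime \kfgain_\idxdtime\T$, then factoring $\kfpcov_\idxdtime$ out on the left and recognising $\kfgain_\idxdtime = \kfpcov_\idxdtime \obsH_\idxdtime\T \kfgram_\idxdtime\inv$, produces
\[
  \ksstate_\idxdtime = \kfpstate_\idxdtime + \kfpcov_\idxdtime \ps[\Big]{\obsH_\idxdtime\T \kfgram_\idxdtime\inv (\obs_\idxdtime - \kfpobsrv_\idxdtime) + \ps{\mI - \obsH_\idxdtime\T \kfgain_\idxdtime\T} \gmpA_\idxdtime\T \kswrv_{\idxdtime + 1}},
\]
i.e., $\ksstate_\idxdtime = \kfpstate_\idxdtime + \kfpcov_\idxdtime \kswrv_\idxdtime$ with $\kswrv_\idxdtime$ as defined. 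The two equivalent forms of $\kswrv_\idxdtime$ in the statement correspond to the two equivalent expansions of $\kfgain_\idxdtime\T$.

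The main obstacle is essentially bookkeeping rather than mathematical: one has to keep careful track of which symbols denote random samples (the pathwise quantities $\kfstate_\idxdtime, \kfpstate_\idxdtime, \kfpobsrv_\idxdtime, \kswrv_\idxdtime$) and which denote the deterministic gains and covariances precomputed by the Kalman filter, since the algebraic identities being invoked are formally identical to those in the proof of \cref{prop:inverse-free-smoother}. Given this, no new technical ingredients are required beyond what already appears in the proofs of \cref{prop:matheron_filter_smooth,prop:inverse-free-smoother}.
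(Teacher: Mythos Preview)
Your proposal is correct and follows essentially the same approach as the paper's proof: downward induction on $\idxdtime$, base case via unfolding $\ksstate_\gmplen = \kfstate_\gmplen$ and the Kalman gain, inductive step via the Matheron recursion from \cref{prop:matheron_filter_smooth} together with $\ksgain_\idxdtime \kfpcov_{\idxdtime+1} = \kfcov_\idxdtime \gmpA_\idxdtime\T$ and the expansion of $\kfstate_\idxdtime$ and $\kfcov_\idxdtime$. Your observation that this is the pathwise analog of the proof of \cref{prop:inverse-free-smoother} is exactly how the paper's argument proceeds as well.
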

\begin{proof}
  \begin{equation*}
    \ksstate_\gmplen
    = \kfstate_\gmplen
    = \kfpstate_\gmplen + \kfgain_\gmplen (\obs_\gmplen - \kfpobsrv_\gmplen)
    = \kfpstate_\gmplen + \kfpcov_\gmplen \underbracket[0.1ex]{\obsH_\gmplen\T \kfgram_\gmplen\inv (\obs_\gmplen - \kfpobsrv_\gmplen)}_{= \kswrv_\gmplen}
  \end{equation*}
  Now assume that $\ksstate_{\idxdtime + 1} = \kfpstate_{\idxdtime + 1} + \kfpcov_{\idxdtime + 1} \kswrv_{\idxdtime + 1}$, which is equivalent to $\ksstate_{\idxdtime + 1} - \kfpstate_{\idxdtime + 1} = \kfpcov_{\idxdtime + 1} \kswrv_{\idxdtime + 1}$.
  Then
  \begin{align*}
    \ksstate_\idxdtime
     & = \kfstate_\idxdtime + \ksgain_\idxdtime (\ksstate_{\idxdtime + 1} - \kfpstate_{\idxdtime + 1})                                                                                                                                                                    \\
     & = \kfstate_\idxdtime + \kfcov_\idxdtime \gmpA_\idxdtime\T (\kfpcov_{\idxdtime + 1})\inv \kfpcov_{\idxdtime + 1} \kswrv_{\idxdtime + 1}                                                                                                                             \\
     & = \kfstate_\idxdtime + \kfcov_\idxdtime \gmpA_\idxdtime\T \kswrv_{\idxdtime + 1}                                                                                                                                                                                   \\
     & = \kfpstate_\idxdtime + \kfgain_\idxdtime (\obs_\idxdtime - \kfpobsrv_\idxdtime) + \kfcov_\idxdtime \gmpA_\idxdtime\T (\kfpcov_{\idxdtime + 1})\inv \kfpcov_{\idxdtime + 1} \kswrv_{\idxdtime + 1}                                                                 \\
     & = \kfpstate_\idxdtime + \kfgain_\idxdtime (\obs_\idxdtime - \kfpobsrv_\idxdtime) + (\kfpcov_\idxdtime - \kfgain_\idxdtime \kfgram_\idxdtime \kfgain_\idxdtime\T) \gmpA_\idxdtime\T \kswrv_{\idxdtime + 1}                                                          \\
     & = \kfpstate_\idxdtime + \kfpcov_\idxdtime \underbracket[0.1ex]{\ps[\Big]{\obsH_\idxdtime\T \kfgram_\idxdtime\inv (\obs_\idxdtime - \kfpobsrv_\idxdtime) + (\mI - \obsH_\idxdtime\T \kfgain_\idxdtime\T) \gmpA_\idxdtime\T \kswrv_\idxdtime}}_{= \kswrv_\idxdtime}.
  \end{align*}
\end{proof}

\begin{algorithm}
  \caption{CAKF/CAKS Sampler}
  \label{alg:cakf-caks-sampler}
  \begin{algorithmic}[1]
    \small
    \Function{Sample}{$\{ \cdots, \cakfdd_\idxdtime, \cakfprojgramlsqrt_\idxdtime, \cakfW_\idxdtime, \dotsc \}_{\idxdtime = 0}^{\gmplen}$}
      \State $\cakfstate_0 \sim \gaussian{\gmpmean_{0}}{\matfree{\gmpcov_{0}}}$
      \For{$\idxdtime = 1, \dotsc, \gmplen$}
        \State $\just{\cakfpstate_\idxdtime}{\gmpnoise_{\idxdtime-1}} \sim \gaussian{\vec{0}}{\matfree{\gmpnoisecov_{\idxdtime-1}}}$
        \State $\just{\cakfpstate_\idxdtime}{\cakfprojobsnoise_\idxdtime} \sim \gaussian{\vec{0}}{\cakfprojobsnoisecov_\idxdtime}$
        \State $\just{\cakfpstate_\idxdtime}{\gls{cakf:pstate}} \gets \matfree{\gmpA_{\idxdtime-1}}{\cakfstate_{\idxdtime-1}} + \gmpb_{\idxdtime-1} + \gmpnoise_{\idxdtime-1}$
        \State $\just{\cakfpstate_\idxdtime}{\gls{cakf:wrv}} \gets \cakfW_\idxdtime \ps{\cakfprojgramlsqrt_\idxdtime\T \ps{\cakfprojobs_\idxdtime - \cakfprojH_\idxdtime \cakfpstate_\idxdtime - \cakfprojobsnoise_\idxdtime}}$
        \State $\just{\cakfpstate_\idxdtime}{\gls{cakf:state}} \gets \cakfpstate_\idxdtime + \matfree{\cakfpcov_\idxdtime}{\cakfwrv_\idxdtime}$
      \EndFor
      \State $\cakswrv_n = \cakfwrv_{\gmplen}$ \label{algline:cakf-sample}
      \For{$\idxdtime = \gmplen - 1, \dotsc, 0$}
        \State $\gls{caks:wrv} \gets \cakfwrv_\idxdtime + \matfree{(\mI - \cakfW_\idxdtime \cakfW_\idxdtime\T \cakfpcov_\idxdtime) \gmpA_\idxdtime\T}{\cakswrv_{\idxdtime+1}}$
      \EndFor
      \State \textbf{return} \( \{ \gls{caks:state} = \cakfstate_\idxdtime + \matfree{\cakfcov_\idxdtime \gmpA_\idxdtime\T}{\cakswrv_\idxdtime} \}_{\idxdtime = 0}^{\gmplen} \)
    \EndFunction
  \end{algorithmic}
\end{algorithm}

\subsection{Temporal Interpolation}
\label{sec:temporal-interpolation}
In practice, we are often interested in interpolating a set of discrete-time measurements.
To this end, we need to exchange the discrete-time dynamics model in \cref{def:lgssm} by a continuous-time dynamics model.
\begin{definition}[Continuous-Discrete LGSSM]
  A \emph{continuous-discrete linear-Gaussian state-space model} (CD-LGSSM) is a pair $(\set{\gmp(t)}_{t \in \mathbb{T}}, \set{\obsrv_\idxdtime}_{\idxdtime = 1}^\gmplen)$ of a continuous-time Gauss--Markov process $\gmp$ with transition kernels
  \begin{equation*}
    \prob{\gmp(t) \given \gmp(s) = \gmpval(s)} = \gaussian{\gmpA(t, s) \gmpval(s) + \gmpb(t, s)}{\gmpnoisecov(t, s)}
  \end{equation*}
  and a discrete-time Gauss--Markov process $\set{\obsrv_\idxdtime}_{\idxdtime = 1}^\gmplen$ defined by
  \begin{equation*}
    \obsrv_\idxdtime = \obsH_\idxdtime \gmp(\ttrain{\idxdtime}) + \obsnoise_\idxdtime,
  \end{equation*}
  where $\obsnoise_\idxdtime \sim \gaussian{\vec{0}}{\obsnoisecov_\idxdtime}$ for $k = 1, \dotsc, \gmplen$ and $\gmp$ are pairwise independent.
\end{definition}
We write $\gmpmean(t) \defeq \expectation{\gmp(t)}$ and $\gmpcov(t) \defeq \covariance{\gmp(t)}{\gmp(t)}$ for the mean and covariance functions of the latent continuous-time Gauss--Markov process.

In practice, we want to be able to access the interpolant efficiently (i.e., without revisiting all training data) and on-demand, as we often do not know the query locations in advance.
It is well-known that this can be achieved by running the Kalman filter and RTS smoother on the discretized LGSSM corresponding to $\gmp_\idxdtime \defeq \gmp(\ttrain{\idxdtime})$ followed by an interpolation step on the filter and/or smoother states that only involves the neighboring training time points.
The above is formalized in \cref{cor:temporal-interpolation-kf-rts}.

\begin{corollary}[Temporal Interpolation in Kalman Filter and RTS Smoother]
  \label{cor:temporal-interpolation-kf-rts}
  Let $t \in \mathbb{T}$ and $0 \le k \le \gmplen$.
  \begin{enumerate}[label=(\roman*)]
    \item If $\ttrain{\idxdtime} \le t < \ttrain{\idxdtime + 1}$ with $1 \le k < \gmplen$ or $\ttrain{\idxdtime} \le t$ with $\idxdtime = \gmplen$, then
          \begin{equation*}
            \kfstate(t)
            \defeq \ps{\condrv{\gmp(t) \given \obsrv_{1:\idxdtime} = \obs_{1:\idxdtime}}}
            \sim \gaussian{\kfmean(t)}{\kfcov(t)}
          \end{equation*}
          with
          \begin{align*}
            \kfmean(t)
             & \defeq \gmpA(t, \ttrain{\idxdtime}) \kfmean_{\idxdtime} + \gmpb(t, \ttrain{\idxdtime}), \qquad \text{and}                                         \\
            \kfcov(t)
             & \defeq \gmpA(t, \ttrain{\idxdtime}) \kfcov_{\idxdtime} \gmpA(t, \ttrain{\idxdtime})\T + \gmpnoisecov(t, \ttrain{\idxdtime})                       \\
             & = \gmpcov(t) - \underbrace{\gmpA(t, \ttrain{\idxdtime}) \kfdd_{\idxdtime}}_{\rdefeq \kfdd(t)} (\gmpA(t, \ttrain{\idxdtime}) \kfdd_{\idxdtime})\T.
          \end{align*}
          For $t < \ttrain{1}$, we extend the definition by $\kfstate(t) \defeq \gmp(t)$, i.e., $\kfmean(t) \defeq \gmpmean(t)$, $\kfcov(t) \defeq \gmpcov(t)$, and $\kfdd(t) \in \R^{\gmpdim \times 0}$.
    \item If $\ttrain{\idxdtime} \le t < \ttrain{\idxdtime + 1}$ with $1 \le k < \gmplen$ or $t < \ttrain{\idxdtime + 1}$ with $\idxdtime = 0$, then
          \begin{equation*}
            \ksstate(t)
            \defeq \ps{\condrv{\gmp(t) \given \obsrv_{1:\gmplen} = \obs_{1:\gmplen}}}
            \sim \gaussian{\ksmean(t)}{\kscov(t)}
          \end{equation*}
          with
          \begin{align*}
            \ksmean(t)
             & \defeq \kfmean(t) + \ksgain(t) (\ksmean_{\idxdtime + 1} - \kfmean(t))                                                                              \\
             & = \kfmean(t) + \kfcov(t) \mA(\ttrain{\idxdtime + 1}, t)\T \ksw_{\idxdtime + 1}, \qquad \text{and}                                                  \\
            \kscov(t)
             & \defeq \kscov(t) + \ksgain(t) (\kscov_{\idxdtime + 1} - \kfcov(t)) \ksgain(t)\T                                                                    \\
             & = \kfcov(t) - \kfcov(t) \mA(t, \ttrain{\idxdtime + 1})\T \ksW_{\idxdtime + 1} (\kfcov(t) \mA(t, \ttrain{\idxdtime + 1})\T \ksW_{\idxdtime + 1})\T,
          \end{align*}
          where $\ksgain(t) \defeq \kfcov(t) \mA(t, \ttrain{\idxdtime + 1})\T (\kfpcov_{\idxdtime + 1})\inv$.
          Again, we extend the definition by $\ksstate(t) \defeq \kfstate(t)$ for $t > \ttrain{\gmplen}$, i.e., $\ksmean(t) \defeq \kfmean(t)$ and $\kscov(t) \defeq \kfcov(t)$.
  \end{enumerate}
\end{corollary}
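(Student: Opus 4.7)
My plan is to treat the query time $t$ as an extra Gauss--Markov step inserted into the training time grid and then reduce both statements to results already established for the discrete-time Kalman filter (\cref{thm:kalman-filter}, \cref{prop:kf-ddcov}) and RTS smoother (\cref{thm:rts-smoother}, \cref{prop:inverse-free-smoother}). The key enabling facts are (a) the Markov property of $\gmp$, which makes $\gmp(t)$ conditionally independent of $\obsrv_{1:\idxdtime}$ given $\gmp(\ttrain{\idxdtime})$ for $t \geq \ttrain{\idxdtime}$, and (b) the Chapman--Kolmogorov property, which guarantees that inserting $t$ between $\ttrain{\idxdtime}$ and $\ttrain{\idxdtime+1}$ leaves all marginals at the original grid points unchanged.

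For part (i), by the Markov property the conditional law factors as
\begin{equation*}
  \prob{\gmp(t) \given \obsrv_{1:\idxdtime} = \obs_{1:\idxdtime}}
  = \int \prob{\gmp(t) \given \gmp(\ttrain{\idxdtime}) = \vu}\, \prob{\gmp(\ttrain{\idxdtime}) = \vu \given \obsrv_{1:\idxdtime} = \obs_{1:\idxdtime}}\, d\vu.
\end{equation*}
Substituting the Gaussian transition kernel with parameters $\gmpA(t, \ttrain{\idxdtime}), \gmpb(t, \ttrain{\idxdtime}), \gmpnoisecov(t, \ttrain{\idxdtime})$ and the filter marginal $\gaussian{\kfmean_\idxdtime}{\kfcov_\idxdtime}$ from \cref{thm:kalman-filter} yields the asserted mean and covariance via the standard affine-plus-noise marginalization. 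For the downdate form, I substitute $\kfcov_\idxdtime = \gmpcov_\idxdtime - \kfdd_\idxdtime \kfdd_\idxdtime\T$ from \cref{prop:kf-ddcov} and appeal to the prior self-consistency identity $\gmpcov(t) = \gmpA(t, \ttrain{\idxdtime}) \gmpcov_\idxdtime \gmpA(t, \ttrain{\idxdtime})\T + \gmpnoisecov(t, \ttrain{\idxdtime})$, which is precisely the same marginalization applied to the unconditioned prior. The edge case $t < \ttrain{1}$ holds trivially since no conditioning has taken place.

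For part (ii), I augment the time grid with $t$, creating a CD-LGSSM with transitions $\gmp(\ttrain{\idxdtime}) \to \gmp(t) \to \gmp(\ttrain{\idxdtime+1})$ and no observation at $t$. By Chapman--Kolmogorov, $\gmpA(\ttrain{\idxdtime+1}, t) \gmpA(t, \ttrain{\idxdtime}) = \gmpA(\ttrain{\idxdtime+1}, \ttrain{\idxdtime})$ with matching noise covariances, so all filter/smoother quantities at the original grid points coincide with those from the original model; in particular $\kfpmean_{\idxdtime+1}, \kfpcov_{\idxdtime+1}, \ksmean_{\idxdtime+1}, \kscov_{\idxdtime+1}$ are unchanged. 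Applying the RTS recursion of \cref{thm:rts-smoother} at the inserted index then directly produces the gain $\ksgain(t) = \kfcov(t) \gmpA(\ttrain{\idxdtime+1}, t)\T (\kfpcov_{\idxdtime+1})\inv$ and the two update equations, while applying \cref{prop:inverse-free-smoother} at the inserted index expresses them in inverse-free form in terms of the cached messages $\ksw_{\idxdtime+1}, \ksW_{\idxdtime+1}$ from the discrete recursion -- again because these messages were built at $\ttrain{\idxdtime+1}$ using only quantities downstream of that time point.

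The main obstacle is verifying cleanly that the augmentation preserves the discrete recursions and their cached messages: one must check both the prior consistency identity and that no filter update at the inserted index disturbs the downstream smoother pass, which follows from the absence of an observation at $t$ and a careful tracking through the derivation of \cref{prop:inverse-free-smoother}. The endpoint cases $t < \ttrain{1}$ (no prior observations) and $t > \ttrain{\gmplen}$ (no subsequent observations) reduce to the prior and filter distributions respectively, as stated.
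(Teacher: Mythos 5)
Your proposal is correct and matches the paper's intended argument: the paper's proof is literally the one-line citation "This follows from \cref{thm:kalman-filter,prop:kf-ddcov,thm:rts-smoother,prop:inverse-free-smoother}," and your grid-augmentation argument (insert $t$ as an unobserved state, use the Markov property for part (i) and Chapman--Kolmogorov consistency plus the cached smoother messages for part (ii)) is exactly the fleshed-out version of how those four discrete-time results are meant to be applied. No gaps.
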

\begin{proof}
  This follows from \cref{thm:kalman-filter,prop:kf-ddcov,thm:rts-smoother,prop:inverse-free-smoother}.
\end{proof}

\Cref{cor:temporal-interpolation-kf-rts} also shows that the efficient interpolation capabilities extend to the downdate-form versions of the Kalman filter and RTS smoother from \cref{prop:kf-ddcov,prop:inverse-free-smoother} that form the basis of the CAKF and CAKS.
Using this result, we can derive \cref{alg:cakf-interpolation,alg:caks-interpolation}.
An efficient version of \cref{alg:cakf-caks-sampler} that allows for sampling at intermediate points can be derived analogously.
\begin{figure}
  \centering
  \begin{minipage}[t]{0.46\textwidth}
    \input{algorithms/cakf-interpolation}
  \end{minipage}
  ~
  \begin{minipage}[t]{0.52\textwidth}
    \input{algorithms/caks-interpolation}
  \end{minipage}
\end{figure}

\subsection{Iterative Version of the CAKF Update Step}

\begin{algorithm}[t]
  \caption{CAKF Update Step (Iterative Version)} \label{alg:update_pls}
  \begin{algorithmic}[1]
    \Function{Update}{$\cakfpmean_\idxdtime, \cakfpdd_\idxdtime, \gmpcov_\idxdtime, \obsH_\idxdtime, \obsnoisecov_\idxdtime, \obs_\idxdtime$}
      \State $\matfree{\cakfpcov_\idxdtime} \gets \matfree{\gmpcov_\idxdtime - \cakfpdd_\idxdtime (\cakfpdd_\idxdtime)\T} \in \R^{\gmpdim \times \gmpdim}$
      \State $\matfree{\cakfgram_\idxdtime} \gets \matfree{\obsH_\idxdtime \cakfpcov_\idxdtime \obsH_\idxdtime\T + \obsnoisecov_\idxdtime} \in \R^{\ntraindata_\idxdtime \times \ntraindata_\idxdtime}$
      \State $\cakfrepw_\idxdtime\iidx[0] \gets \vec{0} \in \R^{\ntraindata_\idxdtime}$
      \State $\cakfrepwdd_\idxdtime\iidx[0] \gets (\quad) \in \R^{\ntraindata_\idxdtime \times 0}$
      \State $\cakfresidual_\idxdtime\iidx[0] \gets \obs_\idxdtime - \matfree{\obsH_\idxdtime}{\cakfpmean_\idxdtime}$
      \While{$\neg \Call{StoppingCriterion}{i, \cakfresidual_\idxdtime\iidx[i], \dotsc}$}
        \State $\cakfact_\idxdtime\iidx \gets \Call{Policy}{i, \cakfresidual_\idxdtime\iidx[i - 1], \dotsc}$
        \State $\cakfresidual_\idxdtime\iidx \gets \cakfresidual_\idxdtime\iidx[0] - \matfree{\cakfgram_\idxdtime}{\cakfrepw_\idxdtime\iidx[i - 1]}$
        \State $\alpha_\idxdtime\iidx \gets \inprod{\cakfact_\idxdtime\iidx}{\cakfresidual_\idxdtime\iidx}$
        \State $\vd_\idxdtime\iidx \gets \matfree{\ps{\mI - \cakfrepwdd_\idxdtime\iidx[i - 1] \ps{\cakfrepwdd_\idxdtime\iidx[i - 1]}\T \cakfgram_\idxdtime}}{\cakfact_\idxdtime\iidx}$ \label{alg:line:orthogonalise}
        \State $\eta_\idxdtime\iidx \gets \inprod{\cakfact_\idxdtime\iidx}{\matfree{\cakfgram_\idxdtime}{\vd_\idxdtime\iidx}}$
        \State $\cakfrepw_\idxdtime\iidx \gets \cakfrepw_\idxdtime\iidx[i - 1] + \frac{\alpha_\idxdtime\iidx}{\eta_\idxdtime\iidx} \vd_\idxdtime\iidx$
        \State \(
        \cakfrepwdd_\idxdtime\iidx
        \gets
        \begin{pmatrix}
          \cakfrepwdd_\idxdtime\iidx[i - 1] & \frac{1}{\sqrt{\eta_\idxdtime\iidx}} \vd\iidx
        \end{pmatrix}
        \in \R^{\ntraindata_\idxdtime \times i}
        \)
      \EndWhile
      \State $\gls{cakf:w} \gets \obsH_\idxdtime\T \cakfrepw_\idxdtime\iidx$ \label{line:calc_filter_w}
      \State $\gls{cakf:W} \gets \obsH_\idxdtime\T \cakfrepwdd_\idxdtime\iidx$ \label{line:calc_filter_W}
      \State $\cakfmean_\idxdtime \gets \cakfpmean_\idxdtime + \matfree{\mP_\idxdtime^-}{\cakfw_\idxdtime}$
      \State \(
      \cakfdd_\idxdtime
      \gets
      \begin{pmatrix}
        \cakfpdd_\idxdtime & \matfree{\cakfpcov_\idxdtime}{\cakfW_\idxdtime}
      \end{pmatrix}
      \)
      \State \textbf{return} $(\cakfmean_\idxdtime, \cakfdd_\idxdtime)$
    \EndFunction
  \end{algorithmic}
\end{algorithm}

\begin{proposition}
  When an identical \textup{\textsc{Policy}} is used, \cref{alg:projected_update,alg:update_pls} are equivalent (in exact precision).
\end{proposition}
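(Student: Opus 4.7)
The plan is to show that both algorithms produce identical values of the filter ``messages'' $\cakfw_\idxdtime$ and $\cakfW_\idxdtime$, since these fully determine the outputs $\cakfmean_\idxdtime$ and $\cakfdd_\idxdtime$ of the update step in both \cref{alg:projected_update,alg:update_pls}. The starting point is the standard fact that, because $(\gmp_\idxdtime, \obsrv_\idxdtime)$ is jointly Gaussian under the predictive belief, conditioning on the vector event $\cakfacts_\idxdtime\T \obsrv_\idxdtime = \cakfacts_\idxdtime\T \obs_\idxdtime$ (as in \cref{alg:projected_update}) is equivalent to successively conditioning on the scalar events $\inprod{\cakfact_\idxdtime\iidx}{\obsrv_\idxdtime} = \inprod{\cakfact_\idxdtime\iidx}{\obs_\idxdtime}$ for $i = 1, \dotsc, \cakfprojobsdim_\idxdtime$ (the semantic interpretation of \cref{alg:update_pls} already noted in the main text). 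It therefore suffices to exhibit closed forms for the terminal accumulators $\cakfrepw_\idxdtime\iidx[\cakfprojobsdim_\idxdtime]$ and $\cakfrepwdd_\idxdtime\iidx[\cakfprojobsdim_\idxdtime]$ that match the batch expressions after applying lines~\ref{line:calc_filter_w}--\ref{line:calc_filter_W}.

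The core of the argument will be an induction on $i$ establishing three invariants after the $i$-th iteration: (a) $(\cakfrepwdd_\idxdtime\iidx)\T \cakfgram_\idxdtime \cakfrepwdd_\idxdtime\iidx = \mI_i$, so that the columns of $\cakfrepwdd_\idxdtime\iidx$ are $\cakfgram_\idxdtime$-orthonormal; (b) $\range{\cakfrepwdd_\idxdtime\iidx} = \linspan{\cakfact_\idxdtime\iidx[1], \dotsc, \cakfact_\idxdtime\iidx}$; and (c) $\cakfrepw_\idxdtime\iidx = \cakfrepwdd_\idxdtime\iidx (\cakfrepwdd_\idxdtime\iidx)\T \cakfresidual_\idxdtime\iidx[0]$. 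The $\cakfgram_\idxdtime$-conjugation on line~\ref{alg:line:orthogonalise} is precisely a Gram--Schmidt step, so propagates (a) and (b) directly. For (c), one uses the inductive hypothesis at index $i - 1$ together with the symmetry of $\cakfgram_\idxdtime$ to rewrite
\begin{equation*}
  \alpha_\idxdtime\iidx
  = (\cakfact_\idxdtime\iidx)\T \cakfresidual_\idxdtime\iidx
  = (\cakfact_\idxdtime\iidx)\T \bigl(\mI - \cakfgram_\idxdtime \cakfrepwdd_\idxdtime\iidx[i-1] (\cakfrepwdd_\idxdtime\iidx[i-1])\T\bigr) \cakfresidual_\idxdtime\iidx[0]
  = (\vd_\idxdtime\iidx)\T \cakfresidual_\idxdtime\iidx[0],
\end{equation*}
which ensures that the rank-one update $\cakfrepw_\idxdtime\iidx = \cakfrepw_\idxdtime\iidx[i-1] + (\alpha_\idxdtime\iidx / \eta_\idxdtime\iidx) \vd_\idxdtime\iidx$ correctly extends the decomposition claimed in (c).

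At $i = \cakfprojobsdim_\idxdtime$, invariants (a) and (b) identify $\cakfrepwdd_\idxdtime\iidx[\cakfprojobsdim_\idxdtime] (\cakfrepwdd_\idxdtime\iidx[\cakfprojobsdim_\idxdtime])\T$ with the basis-independent form $\cakfacts_\idxdtime (\cakfacts_\idxdtime\T \cakfgram_\idxdtime \cakfacts_\idxdtime)^{-1} \cakfacts_\idxdtime\T = \cakfacts_\idxdtime \cakfprojgramlsqrt_\idxdtime \cakfprojgramlsqrt_\idxdtime\T \cakfacts_\idxdtime\T$ of the $\cakfgram_\idxdtime$-oblique projector onto $\range{\cakfacts_\idxdtime}$; left- and right-multiplying by $\obsH_\idxdtime\T$ and $\obsH_\idxdtime$ then reproduces the batch expression for $\cakfW_\idxdtime \cakfW_\idxdtime\T$. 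Invariant (c) combined with this identification gives $\cakfrepw_\idxdtime\iidx[\cakfprojobsdim_\idxdtime] = \cakfacts_\idxdtime \cakfprojgram_\idxdtime^{-1} (\cakfprojobs_\idxdtime - \cakfprojH_\idxdtime \cakfpmean_\idxdtime)$, and left-multiplication by $\obsH_\idxdtime\T$ recovers the batch expression for $\cakfw_\idxdtime$. The main obstacle is the derivation of the key identity $\alpha_\idxdtime\iidx = (\vd_\idxdtime\iidx)\T \cakfresidual_\idxdtime\iidx[0]$, which is what couples the ``current residual'' $\cakfresidual_\idxdtime\iidx$ used in the algorithm to the ``initial residual'' $\cakfresidual_\idxdtime\iidx[0]$ appearing in invariant (c); once this link is established, the remaining pieces reduce to routine Gram--Schmidt bookkeeping in the $\cakfgram_\idxdtime$ inner product.
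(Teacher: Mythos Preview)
Your proposal is correct and follows the same core approach as the paper: an induction showing that the columns of $\cakfrepwdd_\idxdtime\iidx$ are $\cakfgram_\idxdtime$-orthonormal (your invariant (a)) and span the same subspace as the actions (your invariant (b)), from which the equality $\cakfrepwdd_\idxdtime\iidx[\cakfprojobsdim_\idxdtime](\cakfrepwdd_\idxdtime\iidx[\cakfprojobsdim_\idxdtime])\T = \cakfacts_\idxdtime(\cakfacts_\idxdtime\T\cakfgram_\idxdtime\cakfacts_\idxdtime)\pinv\cakfacts_\idxdtime\T$ and hence equality of $\cakfW_\idxdtime\cakfW_\idxdtime\T$ follow.

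The one substantive addition in your argument is invariant (c), $\cakfrepw_\idxdtime\iidx = \cakfrepwdd_\idxdtime\iidx(\cakfrepwdd_\idxdtime\iidx)\T\cakfresidual_\idxdtime\iidx[0]$, together with the identity $\alpha_\idxdtime\iidx = (\vd_\idxdtime\iidx)\T\cakfresidual_\idxdtime\iidx[0]$ that drives it. The paper's proof only treats the covariance side explicitly and ends after establishing the equality of $\cakfW_\idxdtime\cakfW_\idxdtime\T$; it does not spell out why the mean accumulator $\cakfrepw_\idxdtime\iidx[\cakfprojobsdim_\idxdtime]$ yields the same $\cakfw_\idxdtime$ as the batch formula. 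Your invariant (c) fills that gap cleanly and is the more complete argument.
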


\begin{proof}
  The principal difference between the two algorithms is that the quantities $\cakfw_\idxdtime$ and $\cakfW_\idxdtime$ are calculated differently.
  To show that these actually take the same values for the same policy, first note that in \cref{alg:projected_update} we have that $\cakfW_\idxdtime \cakfW_\idxdtime\T = \cakfprojH_\idxdtime\T \cakfprojgram_\idxdtime^\dagger \cakfprojH_\idxdtime = \obsH_\idxdtime\T \cakfacts_\idxdtime (\cakfacts_\idxdtime\T \cakfgram_\idxdtime \cakfacts_\idxdtime)^\dagger \cakfacts_\idxdtime\T \obsH_\idxdtime$.
  In \cref{alg:update_pls} the matrix $\cakfrepwdd_\idxdtime^{\iidx[\cakfprojobsdim_\idxdtime]}$ has the same span as $\cakfacts_\idxdtime$, but is orthogonalised to remove the need for the matrix inversion in $\cakfW_\idxdtime \cakfW_\idxdtime\T$.
  This essentially follows from the fact that \cref{alg:line:orthogonalise} implements a version of the Gram-Schmidt procedure with an adjustment to enforce orthogonality with-respect to $\inprod{\cdot}{\cdot}[\cakfgram_k]$ rather than the standard Euclidean inner product.

  To show this we proceed by induction.
  For the base step we need only show that $(\cakfrepwdd_\idxdtime^{\iidx[1]})^\top \cakfgram_\idxdtime \cakfrepwdd_\idxdtime^{\iidx[1]} = \mI$; this follows from the fact that since $\cakfrepwdd_\idxdtime^{\iidx[0]}$ is an empty matrix, $\vd_\idxdtime^{\iidx[1]} = \cakfact_\idxdtime^{\iidx[1]}$ and therefore
  \begin{align*}
    (\vd_\idxdtime^{\iidx[1]})^\top \cakfgram_\idxdtime \vd_\idxdtime^{\iidx[1]}                            & = (\cakfact_\idxdtime^{\iidx[1]})^\top \cakfgram_\idxdtime \vd_\idxdtime^{\iidx[1]} = \eta_\idxdtime^{\iidx[1]} \\
    \implies \norm*{\frac{\vd_\idxdtime^{\iidx[1]}}{\sqrt{\eta_\idxdtime^{\iidx[1]}}}}[\cakfgram_\idxdtime] & = 1.
  \end{align*}
  For the induction step suppose that $\cakfrepwdd_\idxdtime^{\iidx[i-1]}$ is $\cakfgram_\idxdtime$-orthonormal.
  Let $\vz^{\iidx} = \frac{\vd_\idxdtime^{\iidx}}{\sqrt{\eta_\idxdtime^{\iidx}}}$ and consider the matrix $$ (\cakfrepwdd_\idxdtime^{\iidx})\T\cakfgram_\idxdtime \cakfrepwdd_\idxdtime^{\iidx} =
    \begin{pmatrix}
      (\cakfrepwdd_\idxdtime^{\iidx[i-1]})^\top \cakfgram_\idxdtime \cakfrepwdd_\idxdtime^{\iidx[i-1]} & (\cakfrepwdd_\idxdtime^{\iidx[i-1]})\T \cakfgram_\idxdtime \vz^{\iidx} \\
      (\vz^{\iidx})\T \cakfgram_\idxdtime (\cakfrepwdd_\idxdtime^{\iidx[i-1]})                         & (\vz^{\iidx})\T \cakfgram_\idxdtime \vz^{\iidx}
    \end{pmatrix}
    .
  $$
  It is straightforward to show that $(\vz^{\iidx})\T \cakfgram_\idxdtime (\cakfrepwdd_\idxdtime^{\iidx[i-1]}) = 0$, since
  \begin{align}
    (\vd_\idxdtime^{\iidx})\T \cakfgram_\idxdtime (\cakfrepwdd_\idxdtime^{\iidx[i-1]})
     & =
    (\cakfact_\idxdtime^{\iidx})\T (\mI - \cakfgram_\idxdtime \cakfrepwdd_\idxdtime\iidx[i - 1] \ps{\cakfrepwdd_\idxdtime\iidx[i - 1]}\T )\cakfgram_\idxdtime (\cakfrepwdd_\idxdtime^{\iidx[i-1]})                        \\
     & = (\cakfact_\idxdtime^{\iidx})\T \cakfgram_\idxdtime \cakfrepwdd_\idxdtime^{\iidx[i-1]}
    - (\cakfact_\idxdtime^{\iidx})\T \cakfgram_\idxdtime \cakfrepwdd_\idxdtime\iidx[i - 1] \underbracket[0.1ex]{\ps{\cakfrepwdd_\idxdtime\iidx[i - 1]}\T \cakfgram_\idxdtime (\cakfrepwdd_\idxdtime^{\iidx[i-1]})}_{=\mI} \\
     & = \vec{0}
  \end{align}
  by the inductive assumption.
  It remains to show that $(\vz^{\iidx})\T \cakfgram \vz^{\iidx} = 1$.
  This follows from observing that
  \begin{align*}
    \norm{\vd_\idxdtime^{\iidx}}[\cakfgram_\idxdtime]^2 & = (\cakfact_\idxdtime^{\iidx})\T(\mI - \ \cakfgram_\idxdtime \cakfrepwdd_\idxdtime\iidx[i - 1] \ps{\cakfrepwdd_\idxdtime\iidx[i - 1]}\T ) \cakfgram_\idxdtime \vd_\idxdtime^{\iidx}                                                                                                   \\
                                                        & = (\cakfact_\idxdtime^{\iidx})\T \cakfgram_\idxdtime \vd_\idxdtime^{\iidx} - (\cakfact_\idxdtime^{\iidx})\T \cakfgram_\idxdtime \cakfrepwdd_\idxdtime\iidx[i - 1] \underbracket[0.1ex]{\ps{\cakfrepwdd_\idxdtime\iidx[i - 1]}\T \cakfgram_\idxdtime \vd_\idxdtime^{\iidx}}_{=\vec{0}} \\
                                                        & = \eta_\idxdtime^{\iidx}
  \end{align*}
  where equality with zero is from the calculation above.
  It follows that $\cakfacts_\idxdtime(\cakfacts_\idxdtime\T \cakfgram \cakfacts_\idxdtime) \cakfacts_\idxdtime\T = (\cakfrepwdd_\idxdtime^{\iidx[\cakfprojobsdim_\idxdtime]})\T \cakfrepwdd_\idxdtime^{\iidx[\cakfprojobsdim_\idxdtime]}$, which completes the proof.
\end{proof}

\section{SPACE-TIME SEPARABLE GAUSS--MARKOV PROCESSES}
\label{sec:stsgmps}
Assume we are given a spatiotemporal regression problem over the domain \(\gls{str:inputspace} = \temporalinputspace \times \spatialinputspace\) and a Gaussian process prior
\begin{equation}
  \gls{str:gpprior} \sim \gp{\gls{str:meanfn}}{\gls{str:covfn}}
\end{equation}
for the latent function \(\gls{str:targetfn} \colon \inputspace \to \R\), where \(\meanfn : \inputspace \to \R\) and \(\covfn : \inputspace \times \inputspace \to \R\).
Our goal will be to translate this batch GP regression problem into state-space form where under suitable assumptions the state dynamics are Markovian, such that we can perform exact and importantly linear-time inference via Bayesian filtering and smoothing.

\subsection{Spatiotemporal GP Regression in State-Space Form}
\label{sec:st-gpr-ssm}
As a first step, we augment the state with a sufficient number of \(\outputdim -1\) time derivatives, i.e.,
\begin{equation}
  \stsgmp(t, \vx) =
  \begin{pmatrix}
    \stsgmp_0(t, \vx) \\
    \vdots            \\
    \stsgmp_{d'-1}(t, \vx)
  \end{pmatrix}
  \defeq
  \begin{pmatrix}
    \stsgmp(t, \vx)             \\
    \pderiv{t}{\stsgmp(t, \vx)} \\
    \vdots                      \\
    \pderiv[(\outputdim -1)]{t}{\stsgmp(t, \vx)}
  \end{pmatrix}
  \in \R^{\outputdim},
\end{equation}
and assume the resulting Gaussian process is space-time separable.
\begin{definition}[Space-Time Separable Gaussian Process]
  \label{def:space-time-separable-gp}
  A $\gls{stsgmp:outputdim}$-output Gaussian process $\gls{stsgmp} \sim \gp{\gls{stsgmp:meanfn}}{\gls{stsgmp:covfn}}$ with index set $\gls{str:temporalinputspace} \times \gls{str:spatialinputspace}$ is called \emph{space-time separable} if $\stsgmpmeanfn(t, \vx) = \gls{stsgmp:meanfntime}(t) \cdot \gls{stsgmp:meanfnspace}(\vx)$ and
  \begin{equation*}
    \stsgmpcovfn((t_1, \vx_1), (t_2, \vx_2))
    = \gls{stsgmp:covfntime}(t_1, t_2) \cdot \gls{stsgmp:covfnspace}(\vx_1, \vx_2).
  \end{equation*}
\end{definition}
Then given that the temporal process $\stsgmp^t \sim \gp{\stsgmpmeanfntime}{\stsgmpcovfntime}$ is Markovian, we obtain the desired state-space representation, which can be computed exactly in closed form under suitable assumptions on the covariance function \(\stsgmpcovfntime\) (see \cref{rem:gp-priors-as-stsgmps}).
The following result formalizing this argument has been presented previously, but without an explicit proof \citep{Sarkka2012InfiniteDimensionalKalman,Solin2016StochasticDifferential,Hamelijnck2021SpatioTemporalVariationalGaussian}.
\begin{lemma}
  \label{lem:discretized-stsgmp-is-markov}
  Let $\stsgmp \sim \gp{\stsgmpmeanfn}{\stsgmpcovfn}$ be a space-time separable $\outputdim$-output Gaussian process with index set $\temporalinputspace \times \spatialinputspace$ such that $\stsgmp^t \sim \gp{\stsgmpmeanfntime}{\stsgmpcovfntime}$ is Markov with transition densities
  \(
  p(\stsgmp^t(t) \mid \stsgmp^t(s)) = \gaussianpdf{\stsgmp^t(t)}{\gmpA^t(t, s) \stsgmp^t(s) + \gmpb^t(t, s)}{\gmpnoisecov^t(t, s)}.
  \)
  Let $\xsall \in \spatialinputspace^{\nxsall}$ and define
  \begin{equation}
    \gmp(t)
    \defeq
    \stsgmp(t, \xsall)
    =
    \begin{pmatrix}
      \stsgmp_0(t, \xsall) \\
      \vdots               \\
      \stsgmp_{d'-1}(t, \xsall)
    \end{pmatrix}
    \in \R^{\outputdim \cdot \nxsall}
  \end{equation}
  for all $t \in \temporalinputspace$.
  Then $\gmp$ is a Gauss--Markov process with transition densities
  \begin{equation*}
    p(\gmp(t) \mid \gmp(s))
    = \gaussianpdf{\gmp(t)}{\gmpA(t, s) \gmp(s) + \gmpb(t, s)}{\gmpnoisecov(t, s)},
  \end{equation*}
  where
  \begin{align*}
    \gmpA(t, s)        & \defeq \gmpA^t(t, s) \otimes \mI_{\nxsall}                                 \\
    \gmpb(t, s)        & \defeq \gmpb^t(t, s) \otimes \stsgmpmeanfnspace(\xsall), \qquad \text{and} \\
    \gmpnoisecov(t, s) & \defeq \gmpnoisecov^t(t, s) \otimes \stsgmpcovfnspace(\xsall, \xsall).
  \end{align*}
\end{lemma}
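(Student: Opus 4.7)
The plan is to proceed in three stages: first establish that $\gmp$ is a Gaussian process with Kronecker-structured mean and covariance, then use Gaussian conditioning together with the mixed-product property of Kronecker products to derive the claimed transition densities, and finally verify the Markov property by reducing it to that of $\stsgmp^t$.

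First, I would observe that $\gmp(t) = \stsgmp(t, \xsall)$ is a linear functional of the joint Gaussian process $\stsgmp$, so $\set{\gmp(t)}_{t \in \temporalinputspace}$ is itself a Gaussian process. By stacking the $\outputdim$ temporal components of $\stsgmp$ outermost and the $\nxsall$ spatial evaluations innermost (as in the definition), space-time separability gives the mean $\expectation{\gmp(t)} = \stsgmpmeanfntime(t) \otimes \stsgmpmeanfnspace(\xsall)$ and the cross-covariance $\covariance{\gmp(t)}{\gmp(s)} = \stsgmpcovfntime(t, s) \otimes \stsgmpcovfnspace(\xsall, \xsall)$. This is just a direct computation from the definitions and the fact that Kronecker products commute with the ordering choice.

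Next I would compute $\condrv{\gmp(t) \given \gmp(s)}$ using the standard Gaussian conditioning formulas. Writing $\stsgmpcovfntime \defeq \stsgmpcovfntime(s,s)$ and $\stsgmpcovfnspace \defeq \stsgmpcovfnspace(\xsall, \xsall)$ for brevity, the Kronecker inverse identity $(A \otimes B)\inv = A\inv \otimes B\inv$ together with the mixed-product property $(A \otimes B)(C \otimes D) = AC \otimes BD$ causes the spatial factor to collapse to $\mI_{\nxsall}$ in the conditional mean slope, giving
\begin{align*}
  \expectation{\condrv{\gmp(t) \given \gmp(s)}}
   & = (\gmpA^t(t,s) \otimes \mI_{\nxsall}) \gmp(s) + \gmpb^t(t,s) \otimes \stsgmpmeanfnspace(\xsall), \\
  \covariance{\condrv{\gmp(t) \given \gmp(s)}}{\condrv{\gmp(t) \given \gmp(s)}}
   & = \gmpnoisecov^t(t,s) \otimes \stsgmpcovfnspace,
\end{align*}
where I use the identifications $\gmpA^t(t,s) = \stsgmpcovfntime(t,s) \stsgmpcovfntime\inv$, $\gmpb^t(t,s) = \stsgmpmeanfntime(t) - \gmpA^t(t,s) \stsgmpmeanfntime(s)$, and $\gmpnoisecov^t(t,s) = \stsgmpcovfntime(t,t) - \stsgmpcovfntime(t,s) \stsgmpcovfntime\inv \stsgmpcovfntime(s,t)$ coming from applying the same Gaussian conditioning to the assumed transition densities of $\stsgmp^t$. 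This matches the formulas in the statement.

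Finally, to upgrade the pairwise conditional identity to the full Markov property, I would show that the innovation $\veta(t,s) \defeq \gmp(t) - \gmpA(t,s)\gmp(s) - \gmpb(t,s)$ is uncorrelated with $\gmp(s')$ for every $s' \leq s$; since all variables are jointly Gaussian, uncorrelatedness implies independence, and hence $\condrv{\gmp(t) \given \gmp(s'_0), \dotsc, \gmp(s'_n), \gmp(s)} \stackrel{d}{=} \condrv{\gmp(t) \given \gmp(s)}$ for any finite collection $s'_0 < \dotsb < s'_n \leq s$. The cross-covariance computation reduces via the mixed-product property to
\(
\covariance{\veta(t,s)}{\gmp(s')} = \ps{\stsgmpcovfntime(t, s') - \gmpA^t(t,s) \stsgmpcovfntime(s, s')} \otimes \stsgmpcovfnspace,
\)
and the left tensor factor vanishes precisely because the Markov property of $\stsgmp^t$ implies $\stsgmpcovfntime(t, s') = \gmpA^t(t,s) \stsgmpcovfntime(s, s')$ for $s' \leq s$ (this is the Chapman--Kolmogorov relation for the Gaussian transition kernel). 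The main bookkeeping obstacle is keeping the ordering of the Kronecker factors consistent with the chosen stacking of $\gmp(t)$, but there are no analytic difficulties: everything follows from separability, Gaussian conditioning, and the Kronecker mixed-product and inverse identities.
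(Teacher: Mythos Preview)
Your proposal is correct and follows essentially the same route as the paper: both establish the Kronecker structure of the mean and covariance of $\gmp$, derive the claimed transition density via Gaussian conditioning combined with the mixed-product identity, and then lift the Markov property from $\stsgmp^t$ to $\gmp$ by a covariance computation. The only cosmetic difference is that the paper verifies the Markov property by showing the full joint moments of $(\gmp(t_1),\dotsc,\gmp(t_{\ntstrain}))$ match those of a Gaussian Markov chain with the stated transitions (invoking the cross-covariance identity $\stsgmpcovfntime(t_\idxdtime, t_{\idxdtime+l}) = \stsgmpcovfntime(t_\idxdtime, t_\idxdtime)\prod_j (\gmpA^t_{\idxdtime+j})\T$ and then \citet{LeGall2016BrownianMotion}), whereas you phrase the same computation as showing the innovation $\veta(t,s)$ is uncorrelated with the past; these are equivalent.
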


\begin{remark}
  Abusing terminology, we refer to $\stsgmp$ as a \emph{space-time separable Gauss--Markov process} if $\stsgmp$ is space-time separable and $\gp{\stsgmpmeanfntime}{\stsgmpcovfntime}$ is Markov.
\end{remark}

To prove \cref{lem:discretized-stsgmp-is-markov}, we will need the following intermediate result, which can be found in most standard textbooks, for example in Appendix B of \citet{Bishop2006PatternRecognition}.
We restate the result here for convenience:

\begin{lemma}
  \label{lem:cond-gaussian-rv}
  Let
  \begin{equation*}
    \pvec{\rvx_1 \\ \rvx_2}
    \sim
    \gaussian{
      \begin{pmatrix}
        \vmu_1 \\
        \vmu_2
      \end{pmatrix}
    }{
      \begin{pmatrix}
        \mSigma_{11} & \mSigma_{21}\T \\
        \mSigma_{21} & \mSigma_{22}
      \end{pmatrix}
    }.
  \end{equation*}
  Then
  \begin{equation*}
    \condrv{\rvx_2 \mid \rvx_1}
    \sim
    \gaussian{
      \gmpA \rvx_1 + \gmpb
    }{
      \gmpnoisecov
    },
  \end{equation*}
  where $\gmpA = \mSigma_{21} \mSigma_{11}\pinv$, $\gmpb = (\vmu_2 - \gmpA \vmu_1)$, and $\gmpnoisecov = \mSigma_{22} - \gmpA \mSigma_{11} \gmpA\T$.
\end{lemma}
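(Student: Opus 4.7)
The plan is to prove the statement by constructing an explicit affine decomposition $\rvx_2 = \gmpA \rvx_1 + \rvy$ with $\rvy$ independent of $\rvx_1$, from which the conditional distribution can be read off directly without any density calculations.

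First, I would define $\rvy \defeq \rvx_2 - \gmpA \rvx_1$ with $\gmpA = \mSigma_{21} \mSigma_{11}\pinv$. By linearity, $(\rvx_1, \rvy)$ is jointly Gaussian with $\expectation{\rvy} = \vmu_2 - \gmpA \vmu_1 = \gmpb$. The key step is to show that $\covariance{\rvy}{\rvx_1} = \mat{0}$, so that $\indp{\rvy}{\rvx_1}$ follows from joint Gaussianity. A direct expansion gives $\covariance{\rvy}{\rvx_1} = \mSigma_{21} - \gmpA \mSigma_{11} = \mSigma_{21}(\mI - \mSigma_{11}\pinv \mSigma_{11})$, which requires the identity $\mSigma_{21} \mSigma_{11}\pinv \mSigma_{11} = \mSigma_{21}$. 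This in turn follows from positive semi-definiteness of the joint covariance $\pmat{\mSigma_{11} & \mSigma_{21}\T \\ \mSigma_{21} & \mSigma_{22}}$, which forces $\range{\mSigma_{21}\T} \subseteq \range{\mSigma_{11}}$, so that $\mSigma_{11}\pinv \mSigma_{11}$ acts as the identity on the row space of $\mSigma_{21}$.

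With independence established, I would next compute $\covariance{\rvy}{\rvy} = \mSigma_{22} - \gmpA \mSigma_{21}\T - \mSigma_{21} \gmpA\T + \gmpA \mSigma_{11} \gmpA\T$. Using symmetry of $\mSigma_{11}\pinv$ (inherited from $\mSigma_{11}$) together with the Moore--Penrose identity $\mSigma_{11}\pinv \mSigma_{11} \mSigma_{11}\pinv = \mSigma_{11}\pinv$, each of the three non-$\mSigma_{22}$ terms collapses to $\mSigma_{21} \mSigma_{11}\pinv \mSigma_{21}\T = \gmpA \mSigma_{11} \gmpA\T$, leaving $\covariance{\rvy}{\rvy} = \mSigma_{22} - \gmpA \mSigma_{11} \gmpA\T = \gmpnoisecov$.

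Finally, since $\indp{\rvy}{\rvx_1}$ and $\rvx_2 = \gmpA \rvx_1 + \rvy$, the conditional law of $\rvx_2$ given $\rvx_1$ coincides with the conditional law of $\gmpA \rvx_1 + \rvy$, which by independence reduces to an affine shift of $\rvy$'s marginal Gaussian, namely $\gaussian{\gmpA \rvx_1 + \gmpb}{\gmpnoisecov}$, matching the claim. The main obstacle is the pseudoinverse bookkeeping in the degenerate case where $\mSigma_{11}$ is singular; once the range-containment fact above is invoked, the remaining manipulations are routine linear algebra.
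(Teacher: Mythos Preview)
Your argument is correct and self-contained, including the careful treatment of the singular case via the range-containment fact $\range{\mSigma_{21}\T} \subseteq \range{\mSigma_{11}}$, which is precisely what makes $\mSigma_{21}\mSigma_{11}\pinv\mSigma_{11} = \mSigma_{21}$ hold and the rest go through.

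The paper itself does not actually prove this lemma; it simply states it and defers to a standard textbook (Appendix~B of Bishop, \emph{Pattern Recognition and Machine Learning}). So there is no ``paper's proof'' to compare against beyond that citation. Your approach---constructing $\rvy = \rvx_2 - \gmpA\rvx_1$, establishing $\indp{\rvy}{\rvx_1}$ via uncorrelatedness plus joint Gaussianity, and reading off the conditional law---is one of the standard textbook routes and is arguably cleaner than the density-based completion-of-squares argument Bishop uses, since yours works uniformly in the degenerate case without invoking densities that may not exist. In short: you have supplied more than the paper does, and what you supplied is sound.
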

We can now use \Cref{lem:cond-gaussian-rv} to show that every Gauss--Markov process has transition densities of the form required by \cref{lem:discretized-stsgmp-is-markov}.
\begin{proof}[Proof of \cref{lem:discretized-stsgmp-is-markov}]
  By definition, $\gmp$ is a $\outputdim \cdot \nxsall$-output Gaussian process with index set $\temporalinputspace$, whose mean and covariance functions are given by
  \begin{align*}
    \stsgmpmeanfn_\gmp(t)
     & \defeq \stsgmpmeanfntime(t) \otimes \stsgmpmeanfnspace(\xsall), \qquad \text{and} \\
    \stsgmpcovfn_\gmp(t_1, t_2)
     & \defeq \stsgmpcovfntime(t_1, t_2) \otimes \stsgmpcovfnspace(\xsall, \xsall),
  \end{align*}
  respectively.
  Let $t_0 \le t_1 < \dotsb < t_{\ntstrain} \le T$ and define
  \begin{align*}
    \gmpA^t_\idxdtime        & \defeq \gmpA^t(t_{\idxdtime + 1}, t_\idxdtime),                   \\
    \gmpb^t_\idxdtime        & \defeq \gmpb^t(t_{\idxdtime + 1}, t_\idxdtime), \qquad \text{and} \\
    \gmpnoisecov^t_\idxdtime & \defeq \gmpnoisecov^t(t_{\idxdtime + 1}, t_\idxdtime).
  \end{align*}
  We have
  \begin{align*}
    \stsgmpmeanfntime(t_{\idxdtime + 1})                   & = \gmpA^t_\idxdtime \stsgmpmeanfntime(t_\idxdtime) + \gmpb^t_\idxdtime,                                                            \\
    \stsgmpcovfntime(t_{\idxdtime + 1}, t_{\idxdtime + 1}) & = \gmpA^t_\idxdtime \stsgmpcovfntime(t_\idxdtime, t_\idxdtime) (\gmpA^t_\idxdtime)\T + \gmpnoisecov^t_\idxdtime, \qquad \text{and} \\
    \stsgmpcovfntime(t_\idxdtime, t_{\idxdtime + l})       & = \stsgmpcovfntime(t_\idxdtime, t_\idxdtime) \prod_{j = 0}^{l - 1} (\gmpA^t_{\idxdtime + j})\T.
  \end{align*}
  It follows that
  \begin{align*}
    \stsgmpmeanfn_\gmp(t_{\idxdtime + 1})
     & = \stsgmpmeanfntime(t_{\idxdtime + 1}) \otimes \stsgmpmeanfnspace(\xsall)                                                                                                                                                                                      \\
     & = (\gmpA^t_\idxdtime \stsgmpmeanfntime(t_\idxdtime) + \gmpb^t_\idxdtime) \otimes \stsgmpmeanfnspace(\xsall)                                                                                                                                                    \\
     & = \underbracket[0.1ex]{(\gmpA^t_\idxdtime \otimes \mI_n)}_{\rdefeq \gmpA_\idxdtime} (\stsgmpmeanfntime(t_\idxdtime) \otimes \stsgmpmeanfnspace(\xsall)) + \underbracket[0.1ex]{\gmpb^t_\idxdtime \otimes \stsgmpmeanfnspace(\xsall)}_{\rdefeq \gmpb_\idxdtime} \\
     & = \gmpA_\idxdtime \stsgmpmeanfn_\gmp(t_\idxdtime) + \gmpb_\idxdtime,
  \end{align*}
  as well as
  \begin{align*}
    \stsgmpcovfn_\gmp(t_\idxdtime, t_\idxdtime)
     & = \stsgmpcovfntime(t_\idxdtime, t_\idxdtime) \otimes \stsgmpcovfnspace(\xsall, \xsall)                                                                                                                                                                                                     \\
     & = (\gmpA^t_\idxdtime \stsgmpcovfntime(t_\idxdtime, t_\idxdtime) (\gmpA^t_\idxdtime)\T + \gmpnoisecov^t_\idxdtime) \otimes \stsgmpcovfnspace(\xsall, \xsall)                                                                                                                                \\
     & = (\gmpA^t_\idxdtime \otimes \mI_n) (\stsgmpcovfntime(t_\idxdtime, t_\idxdtime) \otimes \stsgmpcovfnspace(\xsall, \xsall)) (\gmpA^t_\idxdtime \otimes \mI_n)\T + \underbracket[0.1ex]{\gmpnoisecov^t_\idxdtime \otimes \stsgmpcovfnspace(\xsall, \xsall)}_{\rdefeq \gmpnoisecov_\idxdtime} \\
     & = \gmpA_\idxdtime \stsgmpcovfn_\gmp(t_\idxdtime, t_\idxdtime) \gmpA_\idxdtime\T + \gmpnoisecov_\idxdtime,                                                                                                                                                                                  \\
  \end{align*}
  and
  \begin{align*}
    \stsgmpcovfn_\gmp(t_\idxdtime, t_{\idxdtime + l})
     & = \stsgmpcovfntime(t_\idxdtime, t_{\idxdtime + l}) \otimes \stsgmpcovfnspace(\xsall, \xsall)                                                               \\
     & = \left( \stsgmpcovfntime(t_\idxdtime, t_\idxdtime) \prod_{j = 0}^{l - 1} (\gmpA^t_{\idxdtime + j})\T \right) \otimes \stsgmpcovfnspace(\xsall, \xsall)    \\
     & = (\stsgmpcovfntime(t_\idxdtime, t_\idxdtime) \otimes \stsgmpcovfnspace(\xsall, \xsall)) \prod_{j = 0}^{l - 1} (\gmpA^t_{\idxdtime + j} \otimes \mat{I})\T \\
     & = \stsgmpcovfn_\gmp(t_\idxdtime, t_\idxdtime)  \prod_{j = 0}^{l - 1} \gmpA_{\idxdtime + j}\T.
  \end{align*}
  Moreover, by \cref{lem:cond-gaussian-rv}, we have
  \begin{equation*}
    p(\gmp(t_{\idxdtime + 1}) \mid \gmp(t_\idxdtime))
    = \gaussianpdf{\gmp(t_{\idxdtime + 1})}{\gmpA_\idxdtime \gmp(t_\idxdtime) + \gmpb_\idxdtime}{\gmpnoisecov_\idxdtime}.
  \end{equation*}
  All in all, this shows that
  \begin{equation*}
    p(\gmp(t_1), \dotsc, \gmp(t_{\ntstrain}))
    = p(\gmp(t_1)) \prod_{\idxdtime = 2}^{\ntstrain} p(\gmp(t_\idxdtime) \mid \gmp(t_{\idxdtime - 1})).
  \end{equation*}
  The statement then follows from \citet[``Consequences of the definition'' below Definition 6.2]{LeGall2016BrownianMotion}.
\end{proof}

\begin{remark}[Converting Spatiotemporal GP Priors to Space-Time Separable Gauss--Markov Processes]
  \label{rem:gp-priors-as-stsgmps}
  Not every Gaussian process prior \(\gpprior \sim \gp{\meanfn}{\covfn}\) induces a space-time separable Gauss--Markov process, even if both \(\meanfn\) and \(\covfn\) are separable, such that \(\meanfn(\vz) = \meanfntime(t)\meanfnspace(\vx)\) and \(\covfn(\vz, \vz') = \covfntime(t, t')\covfnspace(\vx, \vx')\), e.g if \(\covfntime\) is an exponentiated quadratic kernel.
  However, if \(\covfntime\) is stationary and the spectral density of \(\covfntime\) is a rational function of the form
  \begin{equation}
    S^t(\omega) = \frac{\textrm{(constant)}}{(\textrm{polynomial in } \omega^2)}
  \end{equation}
  then a corresponding STSGMP exists.
  This is the case for example if \(\covfntime\) is a Mat\'ern(\(\nu\)) kernel with differentiability parameter \(p\) such that \(\nu= p + \frac{1}{2}\).
  See \citet[][Sec.~4]{Hartikainen2010KalmanFiltering}, \citet{Sarkka2013SpatiotemporalLearning}, and \citet[][Sec.~4.3]{Solin2016StochasticDifferential} for details.
\end{remark}

\subsection{Pointwise Error Bound}
\label{sec:error-bound}

Having formalized assumptions under which the (spatiotemporal) batch GP regression problem can be equivalently formulated in state-space form and thus solved via Bayesian filtering and smoothing, we now aim to give a relative error bound for the approximate posterior mean computed by the CAKS in terms of its approximate marginal variance.

\subsubsection{(Iteratively Approximated) Batch Gaussian Process Regression}

The CAKF and CAKS can be viewed as performing exact inference in an approximate observation model.
Therefore we can analyze its error via the corresponding iterative approximation for the batch GP regression problem as introduced by \citet{Wenger2022PosteriorComputational}.

\begin{definition}[Iteratively Approximated Batch GP Regression]
  \label{def:iter-approx-gp}
  Let \(\inputspace\) be a non-empty set, \(\gpprior \sim \gp{\meanfn}{\covfn}\) a Gaussian process prior for the latent function \(\strtargetfn \in \rkhs{\covfn}\) assumed to lie in the RKHS induced by \(\covfn\).
  Define the noise scale \(\noisescale \geq 0\), the covariance function \(\noisycovfn(\vz, \vz') \defeq \covfn(\vz, \vz') + \noisescale^2 \delta(\vz, \vz')\) of the observed process \citep[Eqn.~(32)]{Kanagawa2018GaussianProcesses} and let \(\strobsfn(\cdot) \in \rkhs{\noisycovfn}\) be the function generating the data\footnotemark.
  \footnotetext{By Section 6 of \citet{Aronszajn1950TheoryReproducing}, functions \(y \in \rkhs{\noisycovfn}\) can be written as a sum \(y(\cdot) = f(\cdot) + \varepsilon(\cdot)\) of functions \(f \in \rkhs{\covfn}\) and \(\varepsilon \in \rkhs{\noisescale^2 \delta}\).}
  Assume we've observed training data \(\ystrain = {\strobsfn}(\zstrain) = (\strobsfn(\ztrain{1}), \dots, \strobsfn(\ztrain{\ntraindata}))\T \in \R^{\ntraindata}\) at inputs \(\zstrain = (\ztrain{1}, \dots, \ztrain{\ntraindata})\T \in \inputspace^{\ntraindata}\) and let \(\gls{itergp:actions} \in \R^{\ntraindata \times \gls{itergp:nactions}}\) be a matrix with linearly independent columns.
  Following \citet{Wenger2022PosteriorComputational}, define the \emph{iteratively approximated batch GP posterior} as \((\gpprior \mid \mactions\T \ystrain) \sim \gp{\approxpostmeanfn}{\approxpostcovfn}\), with
  \begin{equation}
    \label{eqn:itergp-posterior}
    \begin{aligned}
      \gls{itergp:approxpostmeanfn}(\vz)      & = \meanfn(\vz) + \covfn(\vz, \zstrain)\mC (\ystrain - \meanfn(\zstrain)), \\
      \gls{itergp:approxpostcovfn}(\vz, \vz') & = \covfn(\vz, \vz') - \covfn(\vz, \zstrain)\mC \covfn(\zstrain, \vz'),
    \end{aligned}
  \end{equation}
  where \(\mC = \mactions (\mactions\T(\covfn(\zstrain, \zstrain) + \noisescale^2\mI)\mactions\T)^{\pinv}\mactions\T\)
\end{definition}

\begin{lemma}[Iteratively Approximated GP as Exact Inference Given a Modified Observation Model]
  \label{lem:iter-approx-gp-modified-likelihood}
  Given a Gaussian process prior \(\gpprior \sim \gp{\meanfn}{\covfn}\) and training data \((\zstrain, \ystrain)\) the iteratively approximated batch GP posterior \((\gpprior \mid \mactions\T \ystrain) \sim \gp{\approxpostmeanfn}{\approxpostcovfn}\) (see \cref{def:iter-approx-gp}) is equivalent to an exact batch GP posterior \((\gpprior \mid \projystrain)\) given observations \(\gls{itergp:projystrain} = \mactions\T\ystrain\) observed according to the modified likelihood \(\gls{itergp:projystrainrv} \mid \gpprior(\zstrain) \sim \gaussian{\mactions\T \gpprior(\zstrain)}{\noisescale^2 \mactions\T\mactions}\).
\end{lemma}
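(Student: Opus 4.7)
The plan is to verify the claim by a direct computation: apply the standard GP conditioning formula to the joint Gaussian distribution of $(\gpprior(\vz), \projystrainrv)$ induced by the modified observation model, and check that the result matches \cref{eqn:itergp-posterior}.

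First I would write down the joint distribution. By construction, $\projystrainrv = \mactions\T \gpprior(\zstrain) + \tilde{\varepsilon}$ with $\tilde{\varepsilon} \sim \gaussian{\vec{0}}{\noisescale^2 \mactions\T \mactions}$ independent of $\gpprior$, so for any test point $\vz \in \inputspace$,
\begin{equation*}
  \pvec{\gpprior(\vz) \\ \projystrainrv}
  \sim
  \gaussian{
    \pvec{\meanfn(\vz) \\ \mactions\T \meanfn(\zstrain)}
  }{
    \pvec{
      \covfn(\vz, \vz) & \covfn(\vz, \zstrain)\mactions \\
      \mactions\T \covfn(\zstrain, \vz) & \mactions\T(\covfn(\zstrain, \zstrain) + \noisescale^2 \mI) \mactions
    }
  }.
\end{equation*}
Next I would apply \cref{lem:cond-gaussian-rv} (using the pseudoinverse form) to condition on $\projystrainrv = \projystrain = \mactions\T \ystrain$. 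The resulting conditional mean is
\(
\meanfn(\vz) + \covfn(\vz, \zstrain) \mactions \ps{\mactions\T (\covfn(\zstrain, \zstrain) + \noisescale^2 \mI) \mactions}\pinv \mactions\T (\ystrain - \meanfn(\zstrain)),
\)
which coincides with $\approxpostmeanfn(\vz)$ after substituting the definition of $\mC$. The joint cross-covariance formula yields $\approxpostcovfn(\vz, \vz')$ analogously. Doing this at an arbitrary finite collection of test points shows that the two Gaussian processes have identical finite-dimensional distributions, and hence are equal in law.

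The only subtlety is the pseudoinverse: when $\noisescale = 0$, the modified likelihood is degenerate (its covariance is the zero matrix), so the joint covariance in the display above can be singular, and \cref{lem:cond-gaussian-rv} must be applied in its pseudoinverse form. Because $\mactions$ has linearly independent columns, the projection $\mactions \mactions\pinv$ onto $\range{\mactions}$ is well-defined, and $\projystrain = \mactions\T \ystrain$ lies in $\range{\mactions\T (\covfn(\zstrain, \zstrain) + \noisescale^2 \mI) \mactions}$ almost surely, so the conditional distribution is well-defined and the pseudoinverse expression for $\mC$ is the correct one. This is the only step requiring more than a direct substitution, but it is standard and follows, e.g., by a perturbation argument with $\noisescale^2 + \epsilon$ and taking $\epsilon \downarrow 0$.
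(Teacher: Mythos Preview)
Your proposal is correct and follows essentially the same approach as the paper: write down the joint Gaussian of $(\gpprior,\projystrainrv)$ under the modified likelihood, apply \cref{lem:cond-gaussian-rv}, and identify the conditional moments with \cref{eqn:itergp-posterior}. Your added discussion of the pseudoinverse in the degenerate case $\noisescale=0$ is a welcome elaboration that the paper's proof leaves implicit.
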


\begin{proof}
  By basic properties of Gaussian distributions, we have for arbitrary \(\zsall \in \inputspace^{\nzsall}\) that
  \begin{equation*}
    \begin{pmatrix}
      \projystrain \\
      \gpprior(\zsall)
    \end{pmatrix}
    \sim \gaussian{
      \begin{pmatrix}
        \mactions\T \meanfn(\zstrain) \\
        \meanfn(\zsall)
      \end{pmatrix}
    }{
      \begin{pmatrix}
        \mactions\T \covfn(\zstrain, \zstrain) \mactions + \noisescale^2 \mactions\T \mactions & \mactions\T \covfn(\zstrain, \zsall) \\
        \covfn(\zsall, \zstrain) \mactions                                                     & \covfn(\zsall, \zsall)               \\
      \end{pmatrix}
    }
  \end{equation*}
  is jointly Gaussian.
  Therefore by \cref{lem:cond-gaussian-rv} we have that \(\gpprior(\zsall) \mid \projystrain \sim \gaussian{\approxpostmeanfn(\zsall)}{\approxpostcovfn(\zsall, \zsall)}\) where
  \begin{align*}
    \approxpostmeanfn(\zsall)        & = \meanfn(\zsall) + \covfn(\zsall, \zstrain)\mactions (\mactions\T(\covfn(\zstrain, \zstrain) + \noisescale^2\mI)\mactions\T)^{\pinv}(\projystrain - \mactions\T \meanfn(\zstrain)), \\
    \approxpostcovfn(\zsall, \zsall) & = \covfn(\zsall, \zsall) - \covfn(\zsall, \zstrain)\mactions (\mactions\T(\covfn(\zstrain, \zstrain) + \noisescale^2\mI)\mactions\T)^{\pinv}\mactions\T \covfn(\zstrain, \zsall)
  \end{align*}
  which is equivalent to the form of iteratively approximated GP posterior in \cref{def:iter-approx-gp}.
  This proves the claim.
\end{proof}

The iteratively approximated posterior mean satisfies a pointwise worst-case error bound in a unit ball in the underlying RKHS, where the bound is given by the approximate standard deviation.
Therefore, the output of the approximate method directly provides an error bound on its prediction error, that includes any error introduced through approximation.

\begin{restatable}[Worst-Case Error of (Iteratively Approximated) Batch GP Regression \citep{Wenger2022PosteriorComputational}]{theorem}{thmWorstCaseErrorGP}
  \label{thm:worst-case-error-batch-gp}
  Consider the (iteratively approximated) GP posterior \((\gpprior \mid \mactions\T \ystrain) \sim \gp{\approxpostmeanfn}{\approxpostcovfn}\) given in \cref{def:iter-approx-gp}.
  The pointwise worst-case error of the (approximate) posterior mean \(\approxpostmeanfn\) for an arbitrary data-generating function \(\strobsfn \in \rkhs{K^\sigma}\) such that \(\norm{\strobsfn}_{\rkhs{\noisycovfn}} \leq 1\) is given by
  \begin{equation}
    \sup\limits_{\substack{y \in \rkhs{\noisycovfn} \\ \norm{y}_{\rkhs{\noisycovfn}} \leq 1}} \abs{y(\vz) - \approxpostmeanfn[y](\vz)} = \sqrt{\approxpostcovfn(\vz, \vz) + \noisescale^2}.
  \end{equation}
  for any \(\vz \in \inputspace \setminus \zstrain\) not in the training data.
  In the absence of observation noise, i.e., \(\noisescale^2=0\), this holds for all \(\vz \in \inputspace\).
  Note that this result also trivially extends to the exact batch GP posterior by choosing \(\mactions = \mI_{\ntraindata \times \ntraindata}\).
\end{restatable}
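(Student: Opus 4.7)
The plan is to reformulate the supremum as a Cauchy--Schwarz problem in \(\rkhs{\noisycovfn}\) and then explicitly compute the resulting RKHS norm of an \emph{error representer}. Throughout I assume \(\meanfn \equiv 0\), so that \(y \mapsto \approxpostmeanfn[y](\vz)\) is a linear functional of \(y\); the general case reduces to this after absorbing the prior mean into the data-generating function.

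First, I would use the reproducing property of \(\rkhs{\noisycovfn}\) to write \(y(\vz) = \inprod{y}{\noisycovfn(\vz, \cdot)}[\rkhs{\noisycovfn}]\) and \(y(\zstrain_i) = \inprod{y}{\noisycovfn(\zstrain_i, \cdot)}[\rkhs{\noisycovfn}]\). The crucial observation is that \(\vz \notin \zstrain\) implies \(\covfn(\vz, \zstrain_i) = \noisycovfn(\vz, \zstrain_i)\) for every \(i\), so the formula from \cref{def:iter-approx-gp} yields \(\approxpostmeanfn[y](\vz) = \inprod{y}{e_{\vz}}[\rkhs{\noisycovfn}]\), where \(e_{\vz}(\cdot) \defeq \sum_{i,j} \noisycovfn(\vz, \zstrain_i)\, \mC_{ij}\, \noisycovfn(\zstrain_j, \cdot) \in \rkhs{\noisycovfn}\). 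By Cauchy--Schwarz, with equality attained at \(y \propto \noisycovfn(\vz, \cdot) - e_{\vz}\), the supremum equals \(\norm{\noisycovfn(\vz, \cdot) - e_{\vz}}[\rkhs{\noisycovfn}]\).

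Next, I would expand this norm squared via bilinearity and the reproducing property. Using once more that \(\noisycovfn(\vz, \zstrain) = \covfn(\vz, \zstrain)\) when \(\vz \notin \zstrain\) gives
\[
\norm{\noisycovfn(\vz, \cdot) - e_{\vz}}[\rkhs{\noisycovfn}]^2 = \noisycovfn(\vz, \vz) - 2\, \covfn(\vz, \zstrain)\, \mC\, \covfn(\zstrain, \vz) + \covfn(\vz, \zstrain)\, \mC\, \noisycovfn(\zstrain, \zstrain)\, \mC\, \covfn(\zstrain, \vz).
\]
Since the target quantity satisfies \(\approxpostcovfn(\vz, \vz) + \noisescale^2 = \covfn(\vz, \vz) + \noisescale^2 - \covfn(\vz, \zstrain)\, \mC\, \covfn(\zstrain, \vz)\), matching the two expressions reduces to proving the purely algebraic identity \(\mC\, \noisycovfn(\zstrain, \zstrain)\, \mC = \mC\).

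The main technical step, and where the particular structure of \(\mC\) does all the work, is verifying this pseudoinverse identity. Setting \(\mG \defeq \noisycovfn(\zstrain, \zstrain) = \covfn(\zstrain, \zstrain) + \noisescale^2 \mI\) and \(\mA \defeq \mactions\T \mG \mactions\) so that \(\mC = \mactions \mA^\pinv \mactions\T\), we have \(\mC \mG \mC = \mactions \mA^\pinv \mA \mA^\pinv \mactions\T = \mactions \mA^\pinv \mactions\T = \mC\) by the Moore--Penrose identity \(\mA^\pinv \mA \mA^\pinv = \mA^\pinv\). Finally, for the noiseless case \(\noisescale^2 = 0\) we have \(\noisycovfn \equiv \covfn\) everywhere, so every step of the argument goes through verbatim even for \(\vz \in \zstrain\), yielding the second part of the claim.
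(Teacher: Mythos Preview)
Your argument is correct and essentially identical to the paper's: both identify the supremum, via the reproducing property and Cauchy--Schwarz (the paper invokes this as Lemma~3.9 of Kanagawa et al.), with the \(\rkhs{\noisycovfn}\)-norm of the error representer \(\noisycovfn(\vz,\cdot) - \noisycovfn(\cdot,\zstrain)\mC\noisycovfn(\zstrain,\vz)\), expand the squared norm, and then use \(\noisycovfn(\vz,\zstrain)=\covfn(\vz,\zstrain)\) for \(\vz\notin\zstrain\) to match \(\approxpostcovfn(\vz,\vz)+\noisescale^2\). The only substantive difference is that you make the collapsing step \(\mC\,\noisycovfn(\zstrain,\zstrain)\,\mC=\mC\) explicit and prove it from the Moore--Penrose identity \(\mA^\pinv\mA\mA^\pinv=\mA^\pinv\), whereas the paper performs this simplification silently; your write-up is slightly clearer on that point.
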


\begin{proof}
  This result and its proof are identical to Theorem 2 in \citet{Wenger2022PosteriorComputational}.
  We give a proof in our notation for completeness.
  Let \(\ztrain{0} = \vz\), \(c_0 = 1\) and \(c_j = - (\mC \noisycovfn(\zstrain, \vz))_j\) for \(j=1, \dots, \ntraindata\).
  Then by Lemma 3.9 of \citet{Kanagawa2018GaussianProcesses}, it holds that
  \begin{align*}
    \left(\sup\limits_{\substack{y \in \rkhs{\noisycovfn}                                                                                                                                                                                                                                                                                                                                                                                               \\
        \norm{y}_{\rkhs{\noisycovfn}} \leq 1}} \abs{y(\vz) - \approxpostmeanfn[y](\vz)}\right)^2
     & = \left( \sup\limits_{\substack{y \in \rkhs{\noisycovfn}                                                                                                                                                                                                                                                                                                                                                                                         \\
    \norm{y}_{\rkhs{\noisycovfn}} \leq 1}} \sum_{j=0}^{\ntraindata}c_j y(\vz_j)\right)^2 = \norm*{\sum_{j=0}^{\ntraindata}c_j\covfn(\cdot, \vz_j)}_{\rkhs{\noisycovfn}}^2                                                                                                                                                                                                                                                                               \\
     & = \norm*{\noisycovfn(\cdot, \ztrain{0}) - \sum_{j=1}^{\ntraindata} c_j \noisycovfn(\cdot, \vz_j)}_{\rkhs{\noisycovfn}}^2 = \norm*{\noisycovfn(\cdot, \vz) - \covfn^{\sigma}(\cdot, \zstrain)\mC \noisycovfn(\zstrain, \vz)}_{\rkhs{\noisycovfn}}^2                                                                                                                                                                                               \\
     & = \inprod{\noisycovfn(\cdot, \vz)}{\noisycovfn(\cdot, \vz)}[\rkhs{K^\sigma}] - 2 \inprod{\noisycovfn(\cdot, \vz)}{\covfn^{\sigma}(\cdot, \zstrain)\mC \noisycovfn(\zstrain, \vz)}[\rkhs{K^\sigma}]                                                                                                                                                                                                                                               \\
     & \quad + \inprod{\covfn^{\sigma}(\cdot, \zstrain)\mC \noisycovfn(\zstrain, \vz)}{\covfn^{\sigma}(\cdot, \zstrain)\mC \noisycovfn(\zstrain, \vz)}[\rkhs{K^\sigma}] \intertext{By the reproducing property it holds that}                             & = \noisycovfn(\vz, \vz) - 2 \covfn^{\sigma}(\vz, \zstrain)\mC \noisycovfn(\zstrain, \vz) + \covfn^{\sigma}(\vz, \zstrain)\mC \noisycovfn(\zstrain, \zstrain) \mC \noisycovfn(\zstrain, \vz) \\
     & = \noisycovfn(\vz, \vz) - \covfn^{\sigma}(\vz, \zstrain)\mC \noisycovfn(\zstrain, \vz)                                                                                                                                                                                                                                                                                                                                                           \\
     & \eqcolon {\approxpostcovfn}^\sigma(\vz, \vz)
  \end{align*}
  Now if \(\vz \notin \zstrain\) or \(\noisescale^2 = 0\) it holds that \(\covfn^{\sigma}(\vz, \zstrain) = \covfn(\vz, \zstrain)\) and therefore \({\approxpostcovfn}^\sigma(\vz, \vz) = \approxpostcovfn(\vz, \vz) + \noisescale^2\).
\end{proof}

\subsubsection{Computation-aware Filtering and Smoothing}

Having obtained an error bound for the iteratively approximated batch GP posterior, we now aim to show that the CAKF and CAKS compute precisely the same posterior marginals and thus satisfy the same error bound.
We do so by leveraging \cref{lem:discretized-stsgmp-is-markov} describing how to translate between a (batch) GP regression problem and an equivalent state-space formulation under suitable assumptions on the model.

\begin{restatable}[Connecting (Computation-Aware) Batch Spatio-temporal GP Regression and Filtering and Smoothing]{proposition}{propBatchGPandSSM}
  \label{prop:batch-gp-and-ssm}
  Consider the following spatiotemporal regression problem over the domain \(\inputspace = \temporalinputspace \times \spatialinputspace\).
  Define a space-time separable Gauss--Markov process \(\stsgmp \sim \gp{\stsgmpmeanfn}{\stsgmpcovfn}\) such that its first component \(\gpprior \defeq \stsgmp_0 \sim \gp{\meanfn}{\covfn}\) defines a Gaussian process prior for the latent function \(\strtargetfn \in \rkhs{\covfn}\), where \(\meanfn(t, \vx) = \stsgmpmeanfntime_0(t)\stsgmpmeanfnspace(\vx)\) and \(\covfn((t, \vx), (t', \vx')) = \stsgmpcovfntime_0(t, t')\stsgmpcovfnspace(\vx, \vx')\).
  Assume we are given a training dataset consisting of $\ntraindata = \sum_{\idxdtime = 1}^\ntstrain \nxstrain{\idxdtime}$ inputs \(\zstrain = ((\ttrain{1}, \xtrain{1}{1}), \dotsc, (\ttrain{1}, \xtrain{1}{\nxstrain{1}}), \dotsc, (\ttrain{\ntstrain}, \xtrain{\ntstrain}{1}), \dotsc, (\ttrain{\ntstrain}, \xtrain{\ntstrain}{\nxstrain{\ntstrain}})) \in \inputspace^{\ntraindata}\) and targets \(\ystrain \in (\obs_1, \dotsc, \obs_\gmplen) \in \R^{\ntraindata}\).
  Then for any test input \(\vz = (t, \vx) \in \inputspace\) the computation-aware smoother computes the mean \(\approxpostmeanfn(\vz)\) and variance \(\approxpostcovfn(\vz, \vz)\) of the marginal distribution of the iteratively approximated batch GP posterior \({(\gpprior \mid \mactions\T \ystrain) \sim \gp{\approxpostmeanfn}{\approxpostcovfn}}\) with
  \begin{equation}
    \label{eqn:batch-actions-from-cakf-actions}
    \mactions =
    \begin{pmatrix}
      \cakfacts_1 &        & \mat{0}               \\
                  & \ddots &                       \\
      \mat{0}     &        & \cakfacts_{\ntstrain}
    \end{pmatrix}
    \in \R^{\ntraindata \times \sum_{\idxdtime=1}^{\ntstrain}\nactions_\idxdtime}
  \end{equation}
  evaluated at the given test input $\vz$, i.e.
  \begin{align*}
    \caksmean(\vz \mid \strobsfn)_0 & = \approxpostmeanfn(\vz), \quad \text{and} \\
    \cakscov(\vz)_{0,0}             & = \approxpostcovfn(\vz, \vz).
  \end{align*}
  If \(t \geq \ttrain{\ntstrain}\) it suffices to run the computation-aware filter.
\end{restatable}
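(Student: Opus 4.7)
The plan is to show that both the batch iteratively-approximated posterior on the left and the CAKS posterior on the right are marginals of the \emph{exact} Bayesian posterior in a single common linear-Gaussian model whose likelihood has been modified by the same projection matrix. Equivalence then reduces to the fact that exact posteriors are invariant under the choice of inference algorithm.

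First, I would set up the state-space representation. Apply \cref{lem:discretized-stsgmp-is-markov} to the STSGMP $\stsgmp$ after enlarging the spatial grid to $\xsall \supseteq \xstrain{1} \cup \dotsb \cup \xstrain{\ntstrain} \cup \set{\vx}$ so that both the training spatial points and the test point $\vx$ are among the state components. This yields a continuous-time Gauss--Markov process with states $\gmp(\ttrain{\idxdtime}) = \stsgmp(\ttrain{\idxdtime}, \xsall)$ indexed by time. Taking $\obsH_\idxdtime$ as the selection matrix picking out the $0$-th output channel (the function value, not its time derivatives) at the spatial inputs $\xstrain{\idxdtime}$ and $\obsnoisecov_\idxdtime = \noisescale^2 \mI$ recovers a CD-LGSSM equivalent to the regression data, with $\gpprior(t, \vx) = (\stsgmp(t, \vx))_0$ being the $0$-th component of the state at spatial index corresponding to $\vx$.

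Next, I would identify the two modified observation models. On the batch side, \cref{lem:iter-approx-gp-modified-likelihood} says that the iteratively-approximated posterior with the block-diagonal actions $\mactions$ from \cref{eqn:batch-actions-from-cakf-actions} is the \emph{exact} posterior under a Gaussian likelihood with observations $\mactions\T \ystrain$ and noise covariance $\noisescale^2 \mactions\T \mactions$. Because $\mactions$ is block diagonal with blocks $\cakfacts_\idxdtime$, both the projected-data vector and the projected-noise covariance decompose as independent per-time-step blocks $\cakfacts_\idxdtime\T \obs_\idxdtime$ and $\noisescale^2 \cakfacts_\idxdtime\T \cakfacts_\idxdtime$. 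These coincide exactly with the projected observations and noise covariance $\cakfprojobsnoisecov_\idxdtime = \cakfacts_\idxdtime\T \obsnoisecov_\idxdtime \cakfacts_\idxdtime$ of the CAKF's modified observation model \labelcref{eqn:proj-observation}. Since the original observations and observation noise are independent across time, so are the projected ones, and the product likelihood $\prod_\idxdtime p(\cakfprojobs_\idxdtime \mid \gmp(\ttrain{\idxdtime}))$ equals the joint modified likelihood used on the batch side.

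Given that both sides perform exact Bayesian inference in the same joint linear-Gaussian model, their marginal posteriors at $\gmp(\ttrain{\idxdtime})$ agree. The CAKF/CAKS is by construction the Kalman filter / RTS smoother applied to this modified LGSSM in the matrix-free, downdate-based form of \cref{prop:kf-ddcov,prop:inverse-free-smoother}, which gives $\caksmean_\idxdtime$ and $\cakscov_\idxdtime$. Reading off the $0$-th output component at the spatial index corresponding to $\vx$ then yields $\approxpostmeanfn(\ttrain{\idxdtime}, \vx)$ and $\approxpostcovfn((\ttrain{\idxdtime}, \vx), (\ttrain{\idxdtime}, \vx))$. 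For a general test time $t \notin \set{\ttrain{1}, \dotsc, \ttrain{\ntstrain}}$, I would invoke \cref{cor:temporal-interpolation-kf-rts} to propagate the posterior at the neighboring training times to $t$ via the Gauss--Markov transition kernel; the same interpolation rule also defines $\approxpostmeanfn$ and $\approxpostcovfn$ on the batch side by conditional Gaussianity, so agreement at training times extends to all $t$. The case $t \ge \ttrain{\ntstrain}$ requires only the forward pass, hence only the CAKF.

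The main obstacle is bookkeeping rather than mathematics: ensuring that the state augmentation with derivatives and with the test spatial location does not change the posterior at the observed coordinates, which follows from marginalization consistency of Gaussians, and carefully verifying that the CAKF's per-step projection and the batch-side block-diagonal projection produce literally the same likelihood, not merely proportional ones. Once these identifications are made, the proposition follows from the uniqueness of the Gaussian posterior.
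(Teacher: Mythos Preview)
Your proposal is correct and follows essentially the same route as the paper: set up the state-space representation via \cref{lem:discretized-stsgmp-is-markov} with the test point included in the spatial grid, identify the CAKF's projected observation model with the block-diagonal batch likelihood via \cref{lem:iter-approx-gp-modified-likelihood}, and invoke exactness of Kalman filtering/smoothing in the resulting LGSSM. If anything, you are more explicit than the paper about why the block-diagonal $\mactions$ factors into independent per-step projections and about handling test times $t \notin \set{\ttrain{1},\dotsc,\ttrain{\ntstrain}}$ via \cref{cor:temporal-interpolation-kf-rts}, whereas the paper simply cites standard filtering results.
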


\begin{proof}
  Let \(\xsall \in \spatialinputspace^{\nxsall}\) be a vector containing all (unique) spatial training points and the spatial test point $\vx$ as its zeroth component.
  By \cref{lem:discretized-stsgmp-is-markov} a space-time separable Gauss--Markov process \(\stsgmp\) evaluated at the spatial inputs \(\xsall\), i.e., \(\gmp(t) \defeq \stsgmp(t, \xsall) \in \R^{\nxsall \outputdim}\) and $\gmp_0(t) = \gpprior(t, x)$, admits a state-space representation with dynamics
  \begin{equation}
    \label{eqn:proof-itergp-cakf-equivalence-dynamics-model}
    \gmp(t) = \gmpA(t, s) \gmp(s) + \gmpb(t, s) + \gmpnoise(t,s)
  \end{equation}
  where \(\gmpnoise(t, s) \sim \gaussian{\vec{0}}{\gmpnoisecov(t, s)}\) and the observation model is by assumption given by
  \begin{equation}
    \label{eqn:eqn:proof-itergp-cakf-equivalence-observation-model}
    \cakfprojobsrv_\idxdtime
    = \cakfacts\T_\idxdtime \stsgmp_0(\ttrain{\idxdtime}, \xstrain{\idxdtime}) + \cakfprojobsnoise_\idxdtime
    = \cakfacts\T_\idxdtime \obsH_\idxdtime \gmp(\ttrain{\idxdtime}) + \cakfprojobsnoise_\idxdtime \in \R^{\nactions_\idxdtime}
  \end{equation}
  where $\obsH_\idxdtime \in \R^{\nxstrain{\idxdtime} \times \gmpdim}$ is implicitly defined by $\obsH_\idxdtime \gmp(\ttrain{\idxdtime}) = \stsgmp_0(\ttrain{\idxdtime}, \xstrain{\idxdtime})$.

  Now, in a linear Gaussian state-space model as defined by \cref{eqn:proof-itergp-cakf-equivalence-dynamics-model} and \cref{eqn:eqn:proof-itergp-cakf-equivalence-observation-model}, the vanilla Kalman filter and smoother \citep[Alg.~3.14 \& 3.17]{Sarkka2006RecursiveBayesian} compute the posterior marginal
  \begin{equation}
    (\stsgmp(t, \xsall) \mid \cakfprojobsrv_1 = \cakfprojobs_1, \dots, \cakfprojobsrv_{\ntstrain} = \cakfprojobs_{\ntstrain}) \sim \gaussian{\stsgmpapproxpostmeanfn(t, \xsall)}{\stsgmpapproxpostcovfn((t, \xsall), (t, \xsall))}
  \end{equation}
  at an arbitrary timepoint \(t\) exactly, and if \(t \geq \ttrain{\ntstrain}\), then it suffices to run the filter \citep[Alg.~10.15 \& 10.18]{Sarkka2019AppliedStochastic}.
  By construction, the output of the computation-aware filter and smoother for given \(\cakfacts_1, \dots, \cakfacts_{\ntstrain}\) (assuming no truncation) are equivalent to applying the vanilla filter and smoother to the state-space model defined by \cref{eqn:proof-itergp-cakf-equivalence-dynamics-model} and \cref{eqn:eqn:proof-itergp-cakf-equivalence-observation-model}, i.e., $\caksmean(\vz \mid \strobsfn) = \stsgmpapproxpostmeanfn(\vz)$ and $\cakscov(\vz) = \stsgmpapproxpostcovfn(\vz, \vz)$.
  By \cref{lem:iter-approx-gp-modified-likelihood}, the zeroth components of the CAKS moments $(\caksmean(\vz))_{0}, (\cakscov(\vz))_{0, 0}$ are hence equal to the marginal moments of the iteratively approximated batch GP posterior \({(\gpprior \mid \mactions\T \ystrain) \sim \gp{\approxpostmeanfn}{\approxpostcovfn}}\) evaluated at the test point \(\vz = (t, \vx)\).
  This completes the proof.
\end{proof}

\thmErrorBoundCAKF*

\begin{proof}
  By \Cref{prop:batch-gp-and-ssm} the mean \(\caksmean(\vz \mid \strobsfn)_0\) and variance \(\cakscov(\vz, \vz)_{0,0}\) computed by the computation-aware filter and smoother for the test input \(\vz = (t, \vx)\) are equivalent to the marginal posterior mean and variance of an iteratively approximated batch GP posterior with the induced (space-time separable) prior \(\gpprior \defeq \stsgmp_0 \sim \gp{\meanfn}{\covfn}\) and actions \(\mactions\) defined as in \cref{eqn:batch-actions-from-cakf-actions}.
  Therefore by \cref{thm:worst-case-error-batch-gp} it holds that
  \begin{equation*}
    \sup\limits_{\substack{\tilde{y} \in \rkhs{\noisycovfn} \\ \norm{\tilde{y}}_{\rkhs{\noisycovfn}} \leq 1}} \abs{\tilde{y}(\vz) - \caksmean(\vz \mid \tilde{y})_0} = \sqrt{\cakscov(\vz, \vz)_{0, 0} + \noisescale^2}
  \end{equation*}
  Recognizing that the supremum is achieved on the boundary, i.e., where \(\norm{\tilde{y}}_{\rkhs{\noisycovfn}} = 1\), we can equivalently consider \(\tilde{y}(\cdot) = \frac{y(\cdot)}{\norm{y}_{\rkhs{\noisycovfn}}}\) for $y \in \rkhs{\noisycovfn} \setminus \set{0}$ arbitrary.
  Then it holds that \(\caksmean(\vz \mid \tilde{y})_0 = \frac{\caksmean(\vz \mid y)_0}{\norm{y}_{\rkhs{\noisycovfn}}}\) and therefore we have
  \begin{equation*}
    \sup\limits_{y \in \rkhs{\noisycovfn} \setminus \set{0}} \frac{\abs{y(\vz) - \caksmean(\vz \mid y)_0}}{\norm{y}_{\rkhs{\noisycovfn}}} = \sup\limits_{\substack{\tilde{y} \in \rkhs{\noisycovfn} \\ \norm{\tilde{y}}_{\rkhs{\noisycovfn}} = 1}} \abs{\tilde{y}(\vz) - \caksmean(\vz \mid \tilde{y})_0} = \sqrt{\cakscov(\vz, \vz)_{0,0} + \noisescale^2}
  \end{equation*}
  This proves the claim.
\end{proof}

\section{EXPERIMENTS}
\label{sec:appendix-experiments}
\subsection{Metrics}
\label{sec:appendix-metrics}
To assess the predictive performance of the models considered for our experiments in \cref{sec:experiments}, we mainly use two different metrics:

\textbf{Mean Squared Error}
To assess the quality of the predictive mean $\tilde{\vm}_\idxdtime$ as a point estimate for the target state $\gmptarget_\idxdtime$, we use the mean squared error (MSE)
\begin{equation*}
  \operatorname{MSE}(\gmptarget_\idxdtime, \tilde{\vm}_\idxdtime)
  = \frac{1}{\gmpdim} \norm{\gmptarget_\idxdtime - \tilde{\vm}_\idxdtime}_2^2.
\end{equation*}

\textbf{Average Negative Log Density}
All inference algorithms considered in this work provide Gaussian conditional distributions $\gaussian{\tilde{\vm}_\idxdtime}{\tilde{\mP}_\idxdtime}$ as estimates of the unknown target state $\gmptarget_\idxdtime$.
Hence, we can use the average negative log density (NLD)
\begin{align*}
  \operatorname{AvgNLD}\ps{\condrv{\gmptarget_\idxdtime \given \tilde{\vm}_\idxdtime, \tilde{\mP}_\idxdtime}}
   & = - \frac{1}{\gmpdim} \sum_{d = 1}^\gmpdim \log \gaussianpdf{\gmptarget_{\idxdtime, d}}{\tilde{\vm}_{\idxdtime, d}}{\tilde{\mP}_{\idxdtime, d,d}}                                     \\
   & = \frac{1}{2 \gmpdim} \sum_{d = 1}^\gmpdim \frac{(\gmptarget_{\idxdtime, d} - \tilde{\vm}_{\idxdtime, d})^2}{\tilde{\mP}_{\idxdtime, d,d}} + \log(2 \pi \tilde{\mP}_{\idxdtime, d,d})
\end{align*}
corresponding to this Gaussian distribution as a measure of the quality of the predictive uncertainty.
We use the average NLD as opposed to the ``full'' NLD, i.e., $-\log \gaussianpdf{\gmptarget_\idxdtime}{\tilde{\vm}_\idxdtime}{\tilde{\mP}_\idxdtime}$, since computation of the latter is infeasible for high-dimensional state spaces.

For both metrics, we choose $(\tilde{\vm}_\idxdtime, \tilde{\mP}_\idxdtime) \in \set{(\cakfmean_\idxdtime, \cakfcov_\idxdtime), (\caksmean_\idxdtime, \cakscov_\idxdtime), (\kfmean_\idxdtime, \kfcov_\idxdtime), (\ksmean_\idxdtime, \kscov_\idxdtime), \dotsc}$ as appropriate.
We aggregate the metrics across entire trajectories by averaging.
In \cref{fig:work-precision-era5,fig:work-precision-policies}, we also marginalize over the train and test parts of the states before computing the metrics to obtain a more fine-grained view of the performance of the algorithms.

\subsection{Experiment Details}
\label{sec:experiment-details}
We provide some additional details for the experiments conducted in \Cref{sec:experiments} in this section.
\subsubsection{Comparison to Other Methods}
\label{sec:appendix-on-model-data}
The domain of the problem is $[0, \tmax] \times \spatialinputspace$ with $\spatialinputspace = [0, 20]^{\spatialinputspacedim}$.

\textbf{Model}
The temporal and spatial covariance functions are given by $\covfntime = \sigma^2 \cdot \text{Matérn}(\nicefrac{3}{2}, \ell_t)$ and $\covfnspace = \text{Matérn}(\nicefrac{3}{2}, \ell_\vx)$ with lengthscales $\ell_t = \ell_\vx = 0.5$ and output scale $\sigma = 1$.

\textbf{Data}
We discretize the model on regular grids $\xsall \in \spatialinputspace^{\nxsall}$ and $\tsall \in [0, \tmax]^{\ntsall}$ of $\nxsall = 100^{\spatialinputspacedim}$ points in space and $\ntsall = 20 \cdot \tmax$ points in time.
Afterwards, we sample a ground-truth trajectory $\set{\gmptarget_\idxdtime}_{\idxdtime = 1}^{\ntsall}$ from the resulting state-space model.
Drawing samples from the SSM requires access to (left) square roots of the initial covariance matrix and the transition noise covariance matrices of the discretized model.
As shown in \cref{sec:st-gpr-ssm}, these matrices are Kronecker products of a $\R^{2 \times 2}$ matrices with the matrix $\covfnspace(\xsall, \xsall) \in \R^{\nxsall \times \nxsall}$.
Hence, we can compute the required (left) square roots by computing a (left) square root of $\covfnspace(\xsall, \xsall)$.
Since a (left) square root of $\covfnspace(\xsall, \xsall)$ is not available in closed form, we compute it by means of an eigendecomposition (on the GPU).
However, this severely limits the number of spatial points $\nxsall$ that can be considered in the experiments.
We pick random subsets $\tstrain \subset \tsall$ and $\xstrain{} \subset \xsall$ of $\ntstrain = \nicefrac{\ntsall}{10}$ temporal and $\nxstrain{\idxdtime} = 20^{\spatialinputspacedim}$ spatial points as training locations $\zsall = \tstrain \times \xstrain{}$ and perturb the corresponding entries of the ground-truth states $\gmptarget_\idxdtime$ with additive i.i.d.~Gaussian measurement noise with standard deviation $\lambda = 0.1$.

\textbf{Evaluation}
We repeat the data sampling procedure outlined above five times with different random seeds and evaluate the performance of each inference algorithm independently on each of the resulting datasets.
The solid lines in \cref{fig:ensemble-on-model-wall-time,fig:ensemble-on-model-rank,fig:error-dynamics} are obtained by taking the median over the resulting metrics (and the wall-time in case of \cref{fig:ensemble-on-model-wall-time}) across the different runs while the individual values are scattered in the background with reduced opacity.
For the CAKF/CAKS and the ensemble Kalman filters, we study the effect of the computational budget on the approximation by varying the rank parameter $r$.
For a fair comparison, we fix both the number of iterations and the maximal rank after truncation in the CAKF/CAKS to the same values used in the ensemble Kalman filters, i.e., $\cakfmaxprojobsdim_\idxdtime = \cakfmaxtddrank_\idxdtime = r$.

\textbf{Comparison to EnKF and ETKF}
\begin{wrapfigure}[28]{R}{0.5\linewidth}
  \centering
  \includegraphics[width=\linewidth]{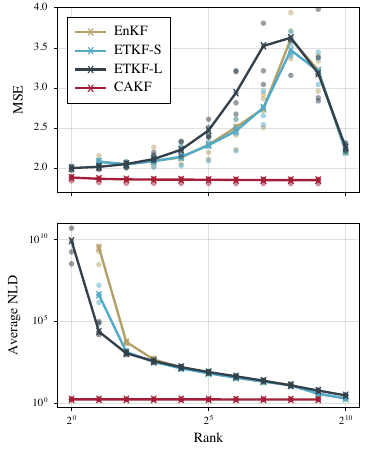}
  \caption{
    Comparison of the CAKF, the EnKF, and two variants of the ETKF on on-model data while varying the rank parameters that govern the computational budget of the algorithms.
  }
  \label{fig:ensemble-on-model-rank}
\end{wrapfigure}

In this experiment, we consider a two spatial dimensions, i.e., $\spatialinputspacedim = 2$, and set $\tmax = 5$.
This results in a state-space dimension of $\gmpdim = \num{20000}$, $\ntsall = \num{100}$ total time steps taken, and $\nxstrain{\idxdtime} = 400$ spatial observations made at each of the $\ntstrain = 10$ training time points.

The ensemble in the EnKF is initialized by drawing $r$ samples from the initial state $\gmp_1$ of the model.
In the predict step, we draw one new ensemble member from the transition model starting at the corresponding ensemble member in the previous ensemble.
The ETKF-S uses the same sample-based initialization and prediction steps as the EnKF.
Both algorithms leverage the (left) square root of $\covfnspace(\xsall, \xsall)$ computed during data generation at every step, and the wall time duration of its computation is added to the wall time cost of both algorithms in \cref{fig:ensemble-on-model-wall-time}.
The ETKF-L uses the Lanczos process to compute a low-rank approximation to the initial covariance during initialization and to the predictive covariance during the prediction step.
The random number generators in the ensemble Kalman filters is seeded with a deterministic transformation of the data seed.

\textbf{Comparison to Kalman filter and RTS smoother}
\begin{figure}[b]
  \begin{subfigure}[t]{0.49\textwidth}
    \centering
    \includegraphics[width=\textwidth]{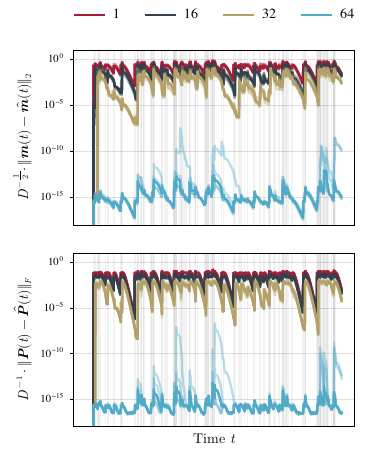}
    \caption{Filter}
    \label{fig:error-dynamics-filter}
  \end{subfigure}
  \begin{subfigure}[t]{0.49\textwidth}
    \centering
    \includegraphics[width=\textwidth]{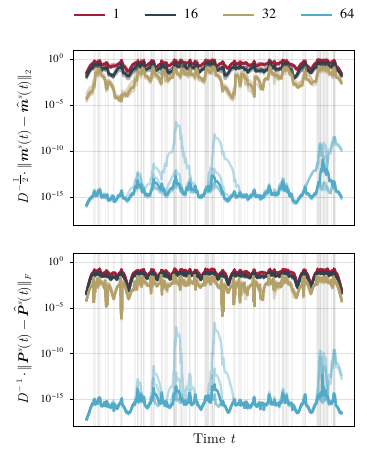}
    \caption{Smoother}
    \label{fig:error-dynamics-smoother}
  \end{subfigure}
  \caption{
    Error dynamics of the CAKF and CAKS with varying rank parameter $\cakfmaxprojobsdim_\idxdtime = \cakfmaxtddrank_\idxdtime = 1, 16, 32, 64$ compared to the Kalman filter and RTS smoother on on-model data.
    The time points at which data is observed are marked by the vertical grid lines.
  }
  \label{fig:error-dynamics}
\end{figure}

In this experiment, we consider a one spatial dimension, i.e., $\spatialinputspacedim = 1$, and set $\tmax = 50$.
This results in a state-space dimension of $\gmpdim = \num{200}$, $\ntsall = \num{1000}$ total time steps taken, and $\nxstrain{\idxdtime} = 20$ spatial observations made at each of the $\ntstrain = 100$ training time points.
Moreover, we fix one random set of training time points that is shared across runs to make the trajectories comparable.

\subsubsection{Impact of Truncation}
\label{sec:appendix-synthetic-data}

The temporal domain of the problem is $[0, 1]$, while the spatial domain is $[0, \pi]$.

\textbf{Data}
We generate the synthetic data on a regular grid of size $11 \times 16$ (in time and space) and the plots are generated on a $51 \times 158$ regular grid.
The training data is corrupted by i.i.d.~zero-mean Gaussian measurement noise with standard deviation $\lambda = 0.1$.

\textbf{Model}
The temporal and spatial covariance functions are given by $\covfntime = \sigma^2 \cdot \text{Matérn}(\nicefrac{3}{2}, \ell_t)$ and $\covfnspace = \text{Matérn}(\nicefrac{5}{2}, \ell_\vx)$ with lengthscales $\ell_t = 0.5$ and $\ell_\vx = 2$, respectively, and an output scale $\sigma = 1$.
The resulting state-space dimension of the spatially discretized model is $\gmpdim = 348$.

\subsubsection{Large-Scale Climate Dataset}
\label{sec:appendix-climate-data}
\begin{table}
  \centering
  \caption{Total number of spatial points $\nxsall=\abs{\xsall}$, state-space dimension $\gmpdim$, number of spatial training points per time step $\nxstrain{\idxdtime} = \abs{\xstrain{\idxdtime}}$, and total number of training points $\ntraindata = \sum_{\idxdtime = 1}^\ntstrain \nxstrain{\idxdtime}$ for the ERA5 experiment.}
  \small
  \begin{tabular}{r S[table-format=6.0] S[table-format=6.0] S[table-format=5.0] S[table-format=7.0]}
    \toprule
    Downsampling factor    & {$\nxsall$} & {$\gmpdim$} & {$\nxstrain{\idxdtime}$} & {$\ntraindata$} \\
    \midrule
    $(\nicefrac{1}{24})^2$ & 1860        & 3720        & 1440                     & 69120           \\
    $(\nicefrac{1}{12})^2$ & 7320        & 14640       & 5580                     & 267840          \\
    $(\nicefrac{1}{6})^2$  & 29040       & 58080       & 21960                    & 1054080         \\
    $(\nicefrac{1}{3})^2$  & 115680      & 231360      & 87120                    & 4181760         \\
    \bottomrule
  \end{tabular}
  \label{tbl:era5-sizes}
\end{table}

The domain of the problem is $[\qty{0}{h}, \qty{48}{h}] \times \spacesym{S}^2(r_\oplus)$, where $\spacesym{S}^2(r)$ denotes the two-dimensional sphere with radius $r \in \R_{\ge 0}$, and $r_\oplus \approx \qty{6371}{\kilo\meter}$ is the average Earth radius.

\textbf{Model}
The temporal covariance function is given by $\covfntime = \sigma^2 \cdot \text{Matérn}(\nicefrac{3}{2}, \ell_t)$ with lengthscale $\ell_t = \qty{3}{\hour}$ and output scale $\sigma = \qty{10}{\celsius}$.
The spatial covariance function is chosen as an extrinsic $\text{Matérn}(\nicefrac{3}{2}, \ell_\vx)$ covariance function on $\spacesym{S}^2(r_\oplus)$, i.e., a $\text{Matérn}(\nicefrac{3}{2}, \ell_\vx)$ covariance function on $\R^3$ concatenated with a coordinate transformation from geographic coordinates to $\R^3$ in both arguments.
For any given spatial downsampling factor, its lengthscale $\ell_\vx$ is set to the geodesic distance of the training points at the equator.
We assume the data is corrupted by iid Gaussian noise with standard deviation \(\qty{0.1}{\degreeCelsius}\).
These hyperparameters were chosen a priori and not tuned for the given training data.

We provide the total number of spatial points, state-space dimension, number of spatial training points per time step, and total number of training points for every downsampling factor of the ERA5 experiment in \cref{tbl:era5-sizes}.

\subsection{Policy Choice}
\label{sec:details-policy-choice}

In general, the choice of optimal policy may be highly problem-dependent, but some natural choices present themselves.

\paragraph{Coordinate Actions}
The simplest choice of policy produces a sequence of unit vectors with all zero entries, except for a single coordinate \(j(i)\), i.e., \(\cakfact_\idxdtime\iidx[i] = \ve_{j(i)}\).
Choosing \(j(i) = i\) simply corresponds to sequential conditioning on a subset of the components of $\obs_\idxdtime$, which in the spatiotemporal regression setting correspond to a subset of spatial locations.
When the data has spatial structure, e.g., when it is placed on a grid as in \Cref{sec:climate-dataset}, this structure can be leveraged by choosing a space-filling sequence of points.
\citet{Berberidis2017} similar to our work explored low-dimensional projections to accelerate Kalman filtering.
They propose several effective policy choices, among them a coordinate policy, which is based on a computable measure of the amount of information in each component of $\obs_\idxdtime$, allowing one to select informative points sequentially while omitting uninformative data points.

\paragraph{Randomized Actions}
\citet{Berberidis2017} also proposed using randomized actions inspired by \emph{sketching} techniques in randomized numerical linear algebra \citep{Martinsson2020RandomisedLinalg}.
For example, a common choice are actions with i.i.d.
sampled entries, e.g., $\cakfact_\idxdtime\iidx[i] \sim \gaussian{\vec{0}}{\mI}$.

\paragraph{Bayesian Experimental Design}
Another choice is to use Bayesian experimental design \citep[see e.g.,][]{Berger1980DecisionTheory}.
This has been explored before in the context of probabilistic linear solvers \citep{Cockayne2019BayesCG} and was found to suffer from slow convergence since these are \emph{a-priori} optimal and therefore do not adapt well to the specific problem.
Furthermore, such optimal actions are not always tractable to compute.

\paragraph{CG/Lanczos Actions}
Finally, a choice that has been repeatedly proposed in the literature on probabilistic linear solvers \citep{Hennig2015LinearSolvers,Cockayne2019BayesCG,wenger2020problinsolve,Wenger2022PosteriorComputational} is the Lanczos/CG algorithm \citep[Section 6.6]{Saad2003}.
In this approach, we obtain the vectors $\cakfact_\idxdtime\iidx[1], \dots, \cakfact_\idxdtime\iidx[\cakfprojobsdim_k]$ by appling the Lanczos procedure to the matrix $\cakfgram_\idxdtime = \obsH_\idxdtime \cakfpcov_\idxdtime \obsH_\idxdtime\T + \obsnoisecov$, with initial vector $\cakfresidual_\idxdtime^{(0)} = \obs_\idxdtime - \obsH_\idxdtime \cakfpmean_\idxdtime$.
This has the effect of ensuring that the residuals $\cakfresidual_\idxdtime\iidx \to 0$ at an exponential rate in $i$ \citep[see e.g.,][Corollary~5.6.7]{Liesen2012}.
As shown by \citet[][Cor.~S2]{Wenger2022PosteriorComputational}, this choice is equivalent to directly selecting the current residual as the next action, i.e., $\cakfact_\idxdtime\iidx[i] = \cakfresidual_\idxdtime\iidx$.

\subsubsection{Empirical Comparison of Policies}
\label{sec:empirical-policy-comparison}

\begin{figure*}
  \centering
  \includegraphics[width=\textwidth]{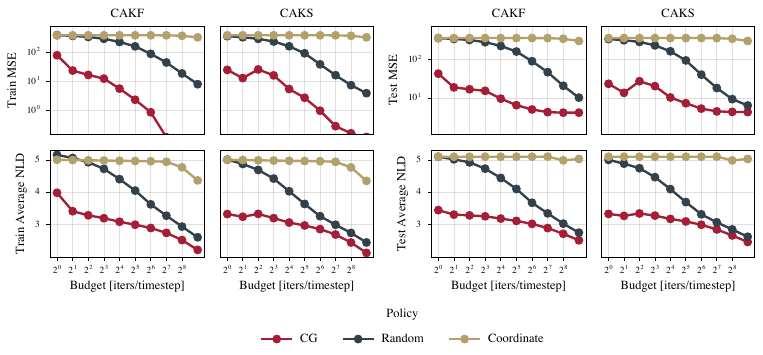}
  \caption{\emph{Comparison of different policies for the CAKF and CAKS on the ERA5 climate dataset.}
    The work-precision diagrams measuring MSE and average NLD on the train and test set universally show that CG actions achieve lower error as a function of the budget when compared to either coordinate or random actions.
  }
  \label{fig:work-precision-policies}
\end{figure*}

To empirically compare some of the proposed policy choices for the CAKF and CAKS, we rerun the experiment on the ERA5 climate dataset with a downsampling factor of 12 ($\gmpdim = \num{14640}$, orange line in \cref{fig:work-precision-era5}) using three different policies selected from the above choices.
We compare coordinate actions with coordinates corresponding to spatial locations chosen according to an (approximately) space-filling design, random actions obtained by drawing independent samples from a standard normal distribution, i.e., $\cakfact_\idxdtime\iidx \sim \gaussian{\vec{0}}{\mI}$, and finally CG/Lanczos actions given by $\cakfact_\idxdtime\iidx = \cakfresidual_\idxdtime\iidx$.
The corresponding work-precision diagrams are shown in \Cref{fig:work-precision-policies}.
It shows that CG actions are preferable to the other actions considered in terms of the MSE and average NLD achieved on both train and test sets.
Note that the blue lines in \Cref{fig:work-precision-policies} coincide with the orange lines in \Cref{fig:work-precision-era5}.
In particular, the exponential convergence rate of CG can be seen in the top-left panel (the CG MSE terminates just below $10^{-4}$ for $2^8$ iterations per time step as can be seen in \Cref{fig:work-precision-era5}).

\subsection{On Comparison with the EnKF}
\label{sec:comparison-enkf}
\begin{wrapfigure}[20]{R}{0.3\textwidth}
  \centering
  \includegraphics[width=0.3\textwidth]{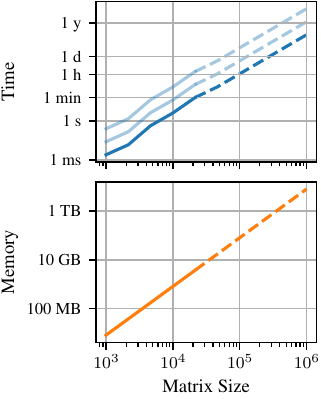}
  \caption{Computational time and memory cost of a Cholesky decomposition in double precision as a function of matrix size.}
  \label{fig:cholesky-complexity}
\end{wrapfigure}

While the ensemble Kalman filter (EnKF) \citep{Evensen1994EnKF} and variants such as the ensemble adjustment Kalman filter (EAKF) \citep{Anderson2001EnsembleAdjustment} and the ensemble transform Kalman filter (ETKF) \citep{Bishop2001AdaptiveSampling} can sometimes be used to solve filtering problems with high-dimensional state spaces, they typically make strong assumptions about the state-space models that are not fulfilled in the setting of this paper.
To be precise, ensemble Kalman filters often assume that we can sample from the initial state $\gmp_0 \sim \gaussian{\gmpmean_0}{\gmpcov_0}$ and the process noise $\gmpnoise_\idxdtime \sim \gaussian{\vec{0}}{\gmpnoisecov_\idxdtime}$ in the dynamics model \citep[][Eqn.~18]{Burgers1998AnalysisScheme} and/or that it is feasible to compute (left) square roots of the respective covariances $\gmpcov_0$ and $\gmpnoisecov_\idxdtime$.

In the data assimilation context, it is common that either:
\begin{itemize}[itemsep=0pt,topsep=0pt]
  \item $\gmpnoisecov_\idxdtime = \mat{0}$
  \item $\gmpnoisecov_\idxdtime$ is diagonal, or
  \item $\gmpnoisecov_\idxdtime$ is low-rank with given left square root.
\end{itemize}

In separable spatiotemporal GP regression, we have $\gmpnoisecov_\idxdtime = \gmpnoisecov_\idxdtime^t \otimes \covfnspace(\xsall, \xsall)$, where $\covfnspace(\xsall, \xsall)$ is a dense kernel Gram matrix, whose (left) square root is (generally) not available without allocating $\covfnspace(\xsall, \xsall)$ in memory.
This means that most ensemble Kalman filters would still require computing a Cholesky factorisation of $\covfnspace(\xsall, \xsall)$, and would thus have $\mathcal{O}(\nxsall^3)$ time and $\mathcal{O}(\nxsall^2)$ memory complexities, which are precisely the complexities that the CAKF/S are constructed to avoid.
We encounter analogous issues when considering $\gmpcov_0$.

To make this point more clearly, \Cref{fig:cholesky-complexity} shows how time and memory requirements to compute a Cholesky decomposition of $\covfnspace(\xsall, \xsall)$ scale with matrix size \(\nxsall\).
The solid lines correspond to what decompositions we are able to compute on our machines for the different problem sizes in \cref{tbl:era5-sizes}, and the dashed lines are extrapolations past the problem sizes where the decomposition crashed due to memory allocation errors.
The largest problem size we apply our methods to has $\nxsall=\num{115680}$ spatial observations, which is comfortably outside the range where a Cholesky decomposition can be computed on all but the most specialised hardware; just to allocate this matrix would require in excess of \qty{100}{\giga\byte} of working memory.

An exception to the above is the seemingly seldom-used ETKF-L algorithm described in \citep[][Sec.~3(b)]{Tippett2003EnsembleSquare} as well as in \cref{sec:experiment-ensemble-on-model,sec:appendix-on-model-data}.
The ETKF-L approximates the required (left) square roots with the Lanczos method without allocating $\covfnspace(\xsall, \xsall)$ in memory.

\stopcontents[sections]

\end{document}